\documentclass{article}

  \usepackage[preprint,nonatbib]{neurips_2026}

\usepackage[utf8]{inputenc}
\usepackage[T1]{fontenc}
\usepackage[round]{natbib}
\usepackage{amsmath,amssymb,amsthm}
\usepackage{mathtools}
\usepackage{bm}  \usepackage{xspace}  \usepackage{xcolor}
\usepackage{graphicx}
\graphicspath{{../}{../../}{../../../}{./}}
\usepackage{booktabs}
\usepackage{hyperref}
\usepackage{algorithm}
\usepackage{algorithmic}
\usepackage{subcaption}
\usepackage{multirow}

\newtheorem{theorem}{Theorem}

\newtheorem{corollary}[theorem]{Corollary}
\newtheorem{definition}[theorem]{Definition}

\newcommand{\kl}{\mathrm{KL}}
\newcommand{\rlct}{\lambda}
\newcommand{\fisher}{F}
\newcommand{\reals}{\mathbb{R}}
\newcommand{\expect}{\mathbb{E}}

 \newif\ifexpanded
\expandedfalse  

\newif\ifbodyhasproofs
\bodyhasproofsfalse

\newif\iftheoryonly
\theoryonlyfalse

\newif\ifsupp
\suppfalse
 \newif\ifanonymise
\anonymisetrue  

\newif\ifanonymousfriendly
\anonymousfriendlytrue  

\newcommand{\theorycite}{ \citet{TheoryRefNamed}}
\newcommand{\theorycitep}{ \citealp{TheoryRefNamed}}

\newcommand{\theorysrc}{ the theory paper\xspace}
\newcommand{\Theorysrc}{ The theory paper\xspace}

\newcommand{\theorypdf}{ the theory paper}
\newcommand{\theorytag}{ theory paper}

\newcommand{\lnkernelcitep}{ \citep{shirodkar2026algebraicdeaddirectionslayernorm}}
 
  \anonymisefalse
  \expandedfalse
  \anonymousfriendlyfalse

\title{Dead-Direction Signatures: A Cheap Spectral Reading of Singular Complexity}

  \author{
    Tejas Pradeep Shirodkar \\
    IIIT, Hyderabad \\
    \texttt{tejas.shirodkar@research.iiit.ac.in}
    \And
    P. J. Narayanan \\
    IIIT, Hyderabad \\
    \texttt{pjn@iiit.ac.in}
  }

\begin{document}

\maketitle

\begin{abstract}
Singular learning theory characterises the complexity of a deep network through the geometry of its loss singularities. The local learning coefficient (LLC), the standard estimator of Watanabe's real log canonical threshold (RLCT, $\lambda$), reads this geometry as an integrated Bayesian scalar through SGLD, which needs per-task calibration and $10^4$-$10^6$ forward-backward passes per checkpoint. We introduce \emph{Dead-Direction Signatures} (DDS), a family of cheap closed-form spectral readings of singular structure: each reads a network's activation matrix or per-sample-gradient Fisher-Gram at a chosen layer, replacing the SGLD posterior chain with spectral linear algebra. The readings rest on a dead-direction framework that predicts a structural correlation between activation- and Fisher-side spectra at any singular minimum, and a rank-multiplicative volume identity that single-eigenvalue monitors cannot produce: the active-volume $\log\det^{+}(G)$ slope counts the dead directions, tracking the rank-deficit $r$ across $r \in \{1,2,3,4\}$ (slope ratios $2.0, 3.1, 4.0$ at $r{=}2,3,4$ against the predicted $2,3,4$), where the smallest eigenvalue is rank-blind. On reduced-rank regression with closed-form $\lambda$, calibrated LLC recovers $\lambda$ at $99\%$ mean and the DDS observables rank-track it at the framework-predicted sign; on a non-linear modular-addition transformer DDS separates $d_{\mathrm{model}}$ across eighteen orders of magnitude where calibrated LLC at the protocol budget is rank-flat. Complementary to LLC's integrated posterior reading, DDS gives a directional, layer-local handle on a network's dead directions, read in closed form from its activation and gradient spectra.
 \end{abstract}

\section{Introduction}
\label{sec:intro}

\begin{figure}[t]
  \centering
  \includegraphics[width=\textwidth]{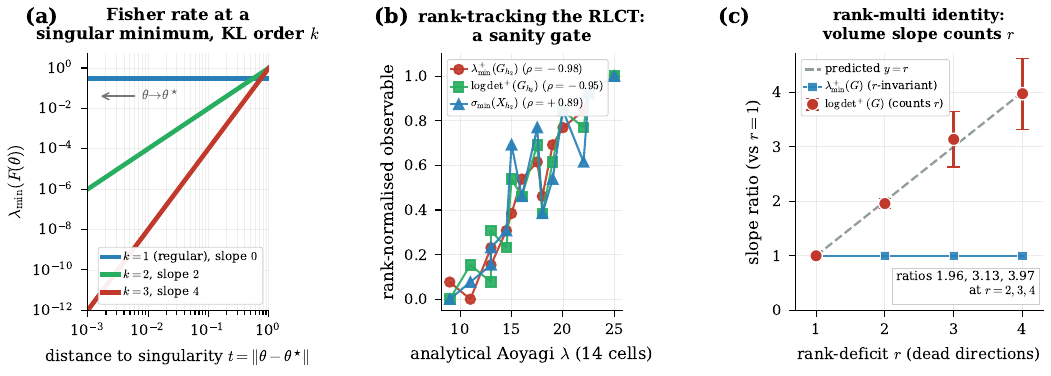}
  \caption[DDS overview.]{\textbf{Dead-Direction Signatures (DDS): the dead-direction primitive and how DDS reads it.} \textbf{(a)} The framework's central object: at a singular minimum with KL order $k$, the smallest Fisher eigenvalue decays as $\lambda_{\min}(F) = \Theta(t^{2(k-1)})$ along a dead direction; $k{=}1$ is regular (rate $0$), $k{=}2,3$ are degenerate (rates $2, 4$). \textbf{(b)} Rank-tracking on closed-form RLCT ground truth (\S\ref{sec:exp:aoyagi}): on the $14$-cell Aoyagi 2005 anchor each rate-chain DDS observable rank-tracks the analytical Watanabe RLCT $\lambda$ with the framework-predicted sign at $|\rho| \ge 0.89$ ($\rho{=}{-}0.98$ for $\lambda_{\min}^+$, $-0.95$ for $\log\det^+$, $+0.89$ for $\sigma_{\min}$). This cross-cell rank-correlation is a sanity gate: a naive $H{\cdot}r$ proxy clears it too (Fig.~\ref{fig:sanity_gate}). \textbf{(c)} The discriminating test, the rank-multiplicative volume identity (\S\ref{sec:exp:rank_multi}): the $\log\det^+(G)$ slope ratio (vs $r{=}1$) counts the dead directions, tracking the rank-deficit across $r\in\{1,2,3,4\}$ ($1.96, 3.13, 3.97$ at $r{=}2,3,4$ against the predicted $2,3,4$) while $\lambda_{\min}^+(G)$ stays rank-invariant; per-cell and per-layer detail in Fig.~\ref{fig:volume_identity}. Single-rank spectral observables are rank-blind to this identity by construction.}
  \label{fig:dds_overview}
\end{figure}
 
Singular learning theory locates the complexity of a deep network in the local geometry of its loss-singular set: the real log canonical threshold (RLCT) is the leading invariant, and it has been argued to be the right complexity measure on overparameterised models \citep{Watanabe09, Watanabe18}. In the empirical literature, the local learning coefficient \citep[LLC;][]{LauFurmanWangMurfetWei25} has emerged as the canonical estimator of Watanabe's RLCT in practice, the established choice for cross-model complexity ranking, posterior-Bayesian readings of singular structure, and developmental-stage detection during training (\S\ref{sec:related}). It is also expensive: LLC requires $10^4$--$10^6$ forward--backward passes per checkpoint via SGLD sampling, an offline budget at LLM scale and a substantial cost even on parametric singular models.

We give a complementary readout. The framework of \theorycite{} derives a directional Fisher decay rate at any singular minimum: along a dead direction of KL order $k$, the directional Fisher decays as $\Theta(t^{2(k-1)})$ along the parametric approach. The selection rule (Theorem~3) makes the bridge to Watanabe's RLCT (denoted $\lambda$) explicit: the rate exponent $2(k-1)$ recovers the directional RLCT contribution $1/(2k)$ on smooth singular fibres, in original parameter coordinates, without resolution of singularities. From this primitive a cluster of cheap spectral observables follows, drawn from two natural matrices at each layer $\ell$: the activation matrix $X_\ell \in \mathbb{R}^{N \times h}$ (rows are per-sample post-activation hidden states, $N$ samples by hidden width $h$), and the per-sample-gradient Fisher-Gram $G_\ell \in \mathbb{R}^{h \times h}$ (the empirical covariance of layer-$\ell$ gradients). The DDS observables are, by framework role: the \emph{rate} observable $\lambda_{\min}^+(G_\ell)$ (smallest positive Fisher-Gram eigenvalue, predicted by the Fisher decay theorem); the \emph{volume} observable $\log\det^+(G_\ell)$ (active-spectrum log-volume, the curvature--volume class); and the \emph{activation-side dual} $\sigma_{\min}(X_\ell)$ (the A--G dual of the rate observable, with layer-dependent sign). All three are framework-derived: the framework predicts a sign and an order of vanishing for each at any singular minimum. We collect these as \emph{Dead-Direction Signatures} (DDS): a forward-pass-cheap reading of singular structure off a network's activations and Fisher-Gram matrices in closed form. The four framework results we invoke (Fisher decay, selection rule + RLCT recovery, multi-layer KFAC bridge with A--G duality, curvature--volume rate chain) are restated with compact proof sketches in App.~\ref{app:experiments:proof_sketches}, so the empirical claims below can be checked against an explicit derivation without opening \theorypdf.

Three tests put pressure on different parts of the DDS claim; the contributions below carry the headline numbers and \S\ref{sec:experiments} the full protocols. The throughline separates the \emph{discriminating} tests, the detection and rank-deficit counting that single-eigenvalue monitors cannot run, from the cross-cell rank-tracking against closed-form RLCT that turns out to be a sanity gate a naive capacity proxy also clears; a static structural correlation then adds a cross-testbed robustness reading, and an off-the-anchor probe carries DDS to one non-linear transformer where the structural sign survives but the trajectory rate breaks.

DDS trades LLC's posterior chain for closed-form spectral reads. Calibrated LLC needs per-task SGLD calibration and a posterior chain per checkpoint ($4{,}400$ forward--backward passes at our operating budget; the literature recommends $10^4$--$10^6$). The activation-side $\sigma_{\min}(X_\ell)$ is one forward pass and an SVD, the cheap real-time read; the Fisher-side $\lambda_{\min}^+(G_\ell)$ and $\log\det^+(G_\ell)$ assemble the per-sample-gradient Fisher-Gram and take an $O(h^3)$ eigendecomposition, orders of magnitude below an LLC chain on the parametric testbeds but the costliest DDS read as width climbs. The structural correlation closes that gap at scale, recovering the Fisher-side ordering from $\sigma_{\min}^2$ at activation-side cost (\S\ref{sec:experiments}).

\paragraph{Contributions.}
\begin{enumerate}\itemsep=3pt
\item \textbf{Detection and rank-deficit counting, the framework's discriminating tests.} At a layer carrying a dead direction the smallest positive Fisher-Gram eigenvalue $\lambda_{\min}^+(G)$ collapses by orders of magnitude ($\sim\!246\times$ at the bottleneck on the closed-form anchor, while the dimension-fixed boundary layer stays flat), detecting and localising it from a single backward pass. The active Fisher-volume $\log\det^+(G_\ell)$ then \emph{counts} the dead directions: its slope scales linearly in the rank-deficit $r$ while $\lambda_{\min}^+(G_\ell)$ is $r$-invariant (Prop.~8 multi-direction generalisation). Across $7$ noisy-bridge configurations ($L \in \{4,6,8\}$, $D \in \{20,50\}$, full and mini-batch SGD, $5$ seeds, $r{=}1..4$) the $\log\det^+$ slope ratio tracks the rank-deficit (cell-wise means $2.0, 3.1, 4.0$ at $r{=}2,3,4$ against the predicted $2,3,4$), a strict prefactor-free identity invisible to single-rank spectral monitors (\S\ref{sec:exp:rank_multi}, \S\ref{sec:exp:aoyagi}).
\item \textbf{Closed-form RLCT ground truth, with the cross-cell rank-correlation as a sanity gate.} On the Aoyagi 2005 anchor (analytical $\lambda$ over $[9,25]$), a target-$\lambda$-calibrated LLC recovers $\lambda$ at $99\%$ mean and DDS rate-chain observables rank-track $\lambda$ at $|\rho| \in [0.95, 0.98]$ with the predicted sign; but calibrated LLC ($+0.98$) and a naive $H{\cdot}r$ capacity proxy ($+0.99$) clear the same bar, so the cross-cell correlation is a sanity gate, not a discriminator. On the $24$-cell Aoyagi 2024 deep-linear sweep $\lambda$ is constant above saturation ($200 = d^2/2$); DDS observables track that constancy while locked-config LLC reads the drifting local-LLC at $w^\star$, distinct from the global RLCT \citep{LauFurmanWangMurfetWei25} (\S\ref{sec:exp:aoyagi}, App.~\ref{app:experiments:dln_aoyagi_anchor}).
\item \textbf{Off-the-anchor extension and a cross-testbed robustness reading.} On a Nanda modular-addition width sweep at AdamW$+$CE ($101$ of $120$ grokked cells, $4$ widths $\times\, 30$ seeds) DDS observables span $5$--$15$ orders of magnitude and stay sign-coherent at $|\rho| \in [0.62, 0.96]$ vs $d_{\mathrm{model}}$, where calibrated LLC at the protocol budget is rank-flat and a $9\times$ SGLD budget leaves it flat. The static structural correlation $\lambda_{\min}^+(G_\ell) \propto \sigma_{\min}(X_\ell)^2$ holds at $\rho \ge +0.83$ on canonical-aligned testbeds (Barak post-grok SGD$+$MSE; Nanda SGD$+$MSE no-grok), $\rho{=}{+}1.000$ on the analytic $L{=}2$ bridge, and $\rho{=}{+}0.75$ at the AdamW$+$CE boundary (true-MC Fisher), where the trajectory-rate prediction fails but the static sign survives. The transformer probe here is single-architecture / single-task, with broader transformer coverage left as open empirical work (\S\ref{sec:exp:nanda}, \S\ref{sec:exp:universality}, App.~\ref{app:scope_map_full}).
\end{enumerate}

\paragraph{Scope.} DDS is validated on parametric singular models with closed-form RLCT and one non-linear transformer width sweep. The directional Fisher-decay theorem and the selection rule are imported from \theorysrc as cited prior results; calibrated LLC remains the canonical estimator for the posterior-Bayesian readings (singular fluctuations, WAIC, developmental-stage plateaus) that DDS does not provide.
  \section{Background and notation}
\label{sec:background}

For an analytic statistical family $\{p_\theta\}$ with Fisher information $\fisher(\theta)$ and $p_{\theta_0} = p^*$, the singular set $\Sigma = \{\theta : \det \fisher(\theta) = 0\}$ is where $\fisher$ loses rank. Singular learning theory \citep{Watanabe09} shows minima of $K(\theta) := \kl(p^* \| p_\theta)$ in singular models lie on $\Sigma$, and that $K$ vanishes along a unit direction $u$ at $\theta_0$ at a rate controlled by an integer KL order $k \ge 1$: $K(\theta_0 + tu) = c\,t^{2k} + O(t^{2k+1})$. \emph{Reading $k$:} $k{=}1$ is an ordinary quadratic minimum (Fisher full rank along $u$, the regular case); $k{=}2$ is a quartic-flat dead direction along which $\fisher$ vanishes as $\Theta(t^2)$; larger $k$ means flatter, more degenerate. The rate exponent $2(k-1)$ thus controls how fast $\fisher$ collapses as the parameter approaches the singular minimum along $u$. The RLCT $\rlct$ decomposes into a directional $1/(2k)$ contribution from the transversal dead direction plus a tangential term; the selection rule (Thm.~3) recovers that $1/(2k)$ piece from our rate exponent $2(k-1)$ in original coordinates, with the full normal-form derivation in App.~\ref{app:experiments:proof_sketches}.

For a layer with weights $W_\ell$, the KFAC factorisation \citep{MartensGrosse15} gives $F_\ell \approx A_\ell \otimes G_\ell$ with $A_\ell := \expect[X_{\ell-1} X_{\ell-1}^\top]$ and $G_\ell := \expect[\delta_\ell \delta_\ell^\top]$; thus $\lambda_{\min}(F_\ell) = \lambda_{\min}(A_\ell)\,\lambda_{\min}(G_\ell)$. We use $A_\ell, G_\ell$ as canonical names. Other recurring symbols: $L$ depth, $\ell$ layer index, $h$ hidden width; $X_\ell \in \reals^{N \times h}$ activation matrix; $u$ dead direction (Def.~\ref{def:dead_direction}).
 \subsection{Related work}
\label{sec:related}

\paragraph{Singular learning theory and LLC.} Watanabe's singular learning theory \citep{Watanabe09, Watanabe18} locates the complexity of an overparameterised model in the local geometry of its loss-singular set, with the real log canonical threshold (RLCT) as the leading invariant. The local learning coefficient \citep[LLC;][]{LauFurmanWangMurfetWei25} is the canonical empirical estimator: SGLD sampling on the posterior near a singular minimum recovers Watanabe's RLCT in practice. LLC has been extended to track developmental phase transitions during training \citep{HooglandWangFarrugiaRoberts24} and to decompose into per-component restricted-LLC \citep{WangHoogland24}. We position DDS as a complementary readout on the same singular structure: where LLC integrates over the local posterior to return a Bayesian-WBIC-anchored quantity, DDS reads the spectra of the activations and Fisher-Gram in closed form at a single checkpoint. The two approaches answer different questions about the same singular geometry, and the cost ordering (the activation-side $\sigma_{\min}$ orders of magnitude below calibrated LLC per checkpoint on the testbeds here) lets DDS run at scales and cadences SGLD cannot match.

\paragraph{Closed-form RLCT lineage.} Aoyagi and Watanabe's closed-form RLCT formulas for reduced-rank regression \citep{AoyagiWatanabe05} and the recent extension to deep-linear nets \citep{Aoyagi24} provide ground-truth $\lambda$ values on parametric singular families, enabling validation against analytical RLCT rather than another estimator. The selection rule of \theorycitep{} (Thm.~3) is the formal bridge from our directional Fisher exponent to Watanabe's RLCT: the rate $2(k-1)$ recovers the directional contribution $1/(2k)$ on smooth singular fibres in original parameter coordinates without resolution of singularities. Our use of Aoyagi-style closed-form RLCT testbeds in \S\ref{sec:exp:aoyagi} is the empirical anchor for this bridge.

\paragraph{Spectral monitoring on activations and weights.} A parallel line summarises activation- or Fisher-side spectra as generalisation-correlated signatures: participation ratio on FFN activation covariances \citep{Jha2026NerVE}, streaming spectral entropy and top spectral gap on windowed activations \citep{Ettori2025EigenTrack}, the normalised Shannon entropy of \citet{Truong2026SpectralEntropy}, and the parameter-update top consecutive-singular-value ratio of \citet{Xu2026SpectralEdge}. Marchenko--Pastur deviations on pretrained weights \citep{StaatsThammRosenow24} and rank-collapse on pure-attention chains \citep{DongCordonnierLoukas21, NociAnagnostidisBiggio22} read the same spectra at still other points; \citet{BoixAdseraLittwinAbbe23} report training-trajectory rank evolution, a different object from the singular-minimum rank deficit DDS targets. DDS reads the Fisher-Gram bottom directly through the rate-chain observables ($\lambda_{\min}^+(G)$, $\log\det^+(G)$) and the activation-side dual $\sigma_{\min}(X_\ell)$, with framework-derived sign and per-layer prescription tied to the directional RLCT contribution $1/(2k)$ via the Fisher decay theorem and A--G duality (Theorem~\ref{thm:structural_correlation}, full proof App.~\ref{app:structural_correlation_proof}). The prior signatures correlate spectra with generalisation, and we report them as complementary observation points without a quantitative head-to-head. Random-matrix characterisations of the Fisher / Hessian spectrum at large width \citep{PenningtonWorah18, KarakidaAkahoAmari19, KarakidaAkahoAmari21} address the bulk at random init under mean-field or Gaussian-entry assumptions; the DDS observables read the trained-checkpoint bottom spectrum at a chosen layer, used both as static per-checkpoint structural identities and as directional rates along the singular-minimum approach. We use random-matrix arguments defensively for the $n/d$ calibration of the measurement protocol.

\paragraph{Algebraic dead directions in LayerNorm.} An instantiation of the same dead-direction framework gives the cheapest read of the primitive: for LayerNorm models the dead direction is fixed algebraically by the normalization affine, $v^\star = \gamma^{-1}/\|\gamma^{-1}\|$, recoverable from the layer parameter with no forward pass \lnkernelcitep. That algebraic read is exact for LayerNorm models. DDS's spectral observables ($\sigma_{\min}(X_\ell)$, the Fisher-Gram spectrum) are architecture-general and read the same primitive at a forward-pass or Fisher cost, two points on a cost--generality spectrum for the same dead-direction object.

\paragraph{KFAC and the Fisher estimand.} Our measurement protocol uses the KFAC factorisation \citep{MartensGrosse15, GrosseMartens16, EschenhagenImmerTurner23} with the Fisher estimand matched to the task: the empirical (true-label) Fisher on the MSE testbeds, where it coincides with the model Fisher at the minimum, and the true-MC (model-sampled) Fisher on the cross-entropy testbed, where the empirical Fisher is an unreliable proxy \citep{KunstnerHennigBalles19} that collapses at zero loss. \citet{GeorgeLaurentBouthillier18} introduced the eigenbasis variant. The KFAC factorisation's analytical use (the A--G duality, Cor.~25) is established in \theorycitep{}; we use it as a measurement primitive only.
 
\section{The structural correlation}
\label{sec:predictions}

At any singular minimum the activation-side and Fisher-side bottoms of the spectrum vanish together at a predictable ratio. The coupling is the single structural prediction DDS rests on, stated below. The supporting framework (Fisher decay theorem, selection rule, multi-layer KFAC bridge, A--G duality) lives in \theorysrc and is used here as a cited prior.

\begin{definition}[Dead direction]
\label{def:dead_direction}
A unit direction $u \in \reals^d$ is a \emph{dead direction} at $\theta_0$ if $u^\top \fisher(\theta(t)) u \to 0$ as $t \to 0$ along $\theta(t) := \theta_0 + tu$. The KL order along $u$ is the integer $k \ge 1$ with $K(\theta(t)) = c\,t^{2k} + O(t^{2k+1})$, $c > 0$. (Unrelated to the activation-level ``dead ReLU'' phenomenon despite the lexical overlap.)
\end{definition}

\begin{theorem}[Structural correlation between Fisher and activation spectra]
\label{thm:structural_correlation}
\label{pred:structural_correlation}
At any singular minimum of an $L$-layer network with KFAC-decomposable Fisher information $F_\ell \approx A_\ell \otimes G_\ell$, along a canonical-aligned approach $\theta(t)$ to the minimum:
\[
\lambda_{\min}^+(G_\ell(\theta(t))) \;=\; \Theta(t^{2(L-\ell)}), \qquad \sigma_{\min}(X_\ell(\theta(t)))^2/N \;=\; \Theta(t^{2\ell}),
\]
at every layer $\ell$. Both quantities vanish as $t \to 0$ as positive powers of $t$, so they are co-monotonic along the approach and the Spearman rank correlation between them tends to $+1$ in the asymptotic regime.
\end{theorem}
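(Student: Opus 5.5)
The plan is to work in a deep-linear (or locally linearised) model $f(x) = W_L \cdots W_1 x$ and to read both rates off a single scaling of the weights. The \emph{canonical-aligned approach} will be made precise as follows. There is a rank-one dead direction $u$ threaded through the network: unit vectors $v_0, v_1, \dots, v_L$ with $W_\ell(\theta(t)) v_{\ell-1} = t\, v_\ell + (\text{terms orthogonal to } v_{\ell-1})$. Along $v_\ell$ each layer's singular value is exactly $t$, while every other singular value stays bounded away from zero uniformly in $t$. At $t=0$ the product collapses along this channel, and this is the singular minimum. I would first check that this path is admissible. It attains $K(\theta(t)) = c\,t^{2L}$, so the KL order is $k=L$ along $u$. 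By the Fisher decay theorem (restated in App.~\ref{app:experiments:proof_sketches}), the full directional Fisher then decays as $t^{2(L-1)}$. The layerwise statement is the KFAC refinement of that product, split between the two factors.

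\textbf{Activation side.} The post-activation $X_\ell = X_0 W_1^\top \cdots W_\ell^\top$ has its component along $v_\ell$ equal to $t^\ell\,\langle x, v_0\rangle$ plus cross terms. Let $\Sigma_0 = X_0^\top X_0/N$, and assume it is full rank. A Courant--Fischer argument then bounds $\sigma_{\min}(X_\ell)^2/N$ above by $v_\ell^\top X_\ell^\top X_\ell v_\ell / N = \Theta(t^{2\ell})$. For the lower bound, restrict to the complement of $v_\ell$. There the composed map is uniformly well conditioned, so the remaining singular values are $\Theta(1)$. An interlacing step on the Gram matrix (Cauchy interlacing or the Schur complement) then shows that the smallest singular value is the one carried by the channel, up to constants. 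Together these give $\sigma_{\min}(X_\ell)^2/N = \Theta(t^{2\ell})$.

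\textbf{Gradient side.} Backpropagation gives $\delta_\ell = W_{\ell+1}^\top \cdots W_L^\top e$, with $e$ the output residual. Its component along the channel carries $L-\ell$ factors of $t$, so $v_\ell^\top G_\ell v_\ell = \Theta(t^{2(L-\ell)})\, \expect[\langle e, v_L\rangle^2]$. This requires the residual (empirical Fisher on MSE, or the model-sampled Fisher on cross-entropy) to have nondegenerate covariance along $v_L$. That holds for a noisy target or model-sampled labels, and it is exactly why the estimand is matched to the task. The same two-sided interlacing argument as on the activation side gives $\lambda^+_{\min}(G_\ell) = \Theta(t^{2(L-\ell)})$.

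\textbf{Main obstacle and conclusion.} The hard part will be the positive part $\lambda^+_{\min}$. At $\ell = L$ the exponent is zero, and directions that are exactly null for all $t$ (structural zeros, the boundary layer) must be excluded from the positive spectrum. The difficulty is that no other eigenvalue may also decay and undercut the channel. I would handle this by making the canonical alignment assume that the off-channel blocks of each $W_\ell$ are $\Theta(1)$ and orthogonal to the channel. Then $G_\ell$ and $X_\ell^\top X_\ell$ are block-diagonal up to $O(t)$ corrections, and Weyl's inequality controls those corrections. Once both rates hold, co-monotonicity is immediate: both are increasing positive powers of $t$ (for $0 < \ell < L$; at the endpoints one side is constant and the correlation claim is vacuous). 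So along any sampled sequence $t_i \downarrow 0$ the two sequences are eventually ordered identically, and the Spearman correlation tends to $+1$.
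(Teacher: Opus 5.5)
\textbf{The gap.} The step you flag as the main obstacle is closed by an argument that does not work. For $1\le\ell\le L-1$ both target eigenvalues are $O(t^2)$. An additive perturbation of size $O(t)$ therefore swamps them, and Weyl's inequality gives neither the lower bound nor the rate.

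Exact weight alignment does not make the matrices block-diagonal either. Write $P=W_\ell\cdots W_1$, $Q=W_L\cdots W_{\ell+1}$, and let $\Sigma_e$ be the residual second-moment matrix. Then $X_\ell^\top X_\ell/N=P\Sigma_0P^\top$ and $G_\ell=Q^\top\Sigma_eQ$. Their channel--complement blocks are of order $t^\ell$ and $t^{L-\ell}$ unless $v_0$ and $v_L$ happen to be eigenvectors of $\Sigma_0$ and $\Sigma_e$. That is still far above $t^{2\ell}$ and $t^{2(L-\ell)}$. Cauchy interlacing only shows that at most one eigenvalue is small, not how small.

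\textbf{The repair is a relative bound.} Require $(v_{\ell-1},v_\ell)$ to be an exact singular pair of $W_\ell$ with value $t$. That means both $W_\ell v_{\ell-1}=t\,v_\ell$, which your gradient-side test vector uses, and $W_\ell^\top v_\ell=t\,v_{\ell-1}$, which the activation-side one uses. Your condition with unspecified cross terms guarantees neither, and with $\Theta(1)$ cross terms even the upper bound fails. With off-channel singular values bounded below, exact pairs give $\sigma_{\min}(P)=t^\ell$ and $\sigma_{\min}(Q)=t^{L-\ell}$ for small $t$. For square factors, $\lambda_{\min}(\Sigma)\,\sigma_{\min}(P)^2\le\lambda_{\min}(P\Sigma P^\top)\le\lambda_{\max}(\Sigma)\,\sigma_{\min}(P)^2$ then delivers both rates. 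This needs $\Sigma_0$ and $\Sigma_e$ to be full rank with $t$-uniform bounds. Nondegeneracy of the residual along $v_L$ alone controls only the channel's Rayleigh quotient, not the rest of the spectrum. The Schur-complement route you mention also works.

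\textbf{Comparison with the paper.} Once repaired, your proof takes a genuinely different route. The paper proves neither rate. It imports $\lambda^+_{\min}(G_\ell)=\Theta(t^{2(L-\ell)})$ from the multi-layer KFAC bridge (Theorem~21 of the theory paper) and $\lambda_{\min}(A_{\ell+1})=\Theta(t^{2\ell})$ from the A--G duality (Cor.~25 there). Its own work is the passage from $A_{\ell+1}$ to $\sigma_{\min}(X_\ell)^2/N$ by the strong law plus Weyl (i.i.d.\ Gaussian-isotropic rows, $N\to\infty$ at fixed $t$), followed by the Spearman step.

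Your derivation is self-contained. Because the congruence bound is relative, it holds at any fixed $N$ with $X_0$ of full column rank. The paper's additive Weyl step controls the sample quantity only in the $N\to\infty$ limit, which is why its finite-$N$ remark has to worry about a noise floor. The price is scope. Your argument covers only exactly aligned deep-linear networks: input-dependent nonlinearity Jacobians break $X_\ell=X_0P^\top$, so ``locally linearised'' does not extend it. The paper inherits whatever reach Theorem~21 has.

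\textbf{Co-monotonicity.} You and the paper both infer co-monotonicity from $\Theta$-rates, but these only order sample points whose $t$-values differ by more than a fixed factor. Your explicit model lets you close this. Each small eigenvalue is the analytic branch through a simple zero eigenvalue of a polynomial symmetric family in $t$. A $\Theta(t^m)$ bound then forces the form $c\,t^m+O(t^{m+1})$ with $c>0$, which is strictly increasing for small $t$.
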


\begin{proof}[Sketch]
By the multi-layer KFAC bridge (Theorem~21 of \theorysrc), the smallest non-zero eigenvalue of the Fisher-Gram at layer $\ell$ satisfies $\lambda_{\min}^+(G_\ell(\theta(t))) = \Theta(t^{2(L-\ell)})$ along a canonical-aligned approach. By the A--G duality corollary (Cor.~25), the activation-side dual is $\lambda_{\min}(A_{\ell+1}(\theta(t))) = \Theta(t^{2\ell})$, and $\sigma_{\min}(X_\ell)^2/N \to \lambda_{\min}(A_{\ell+1})$ almost surely by the strong law (the rows of $X_\ell$ are iid under the Gaussian-isotropic input assumption of Theorem~2). Both Fisher and activation bottoms are positive powers of $t$ with strictly positive exponents; rank correlation between any two such monotone trajectories tends to $+1$ in the asymptotic limit. Full proof in App.~\ref{app:structural_correlation_proof}.
\end{proof}

The three DDS observables (smallest activation singular value $\sigma_{\min}(X_\ell)$, smallest positive Fisher-Gram eigenvalue $\lambda_{\min}^+(G_\ell)$, log-volume $\log\det^+(G_\ell)$) read this coupled geometry from the activation or the gradient side. \S\ref{sec:experiments} validates the correlation across regime cells, with a deterministic limit on the analytic $L{=}2$ bridge and a boundary check on AdamW$+$CE where canonical alignment is violated.
  \section{Experiments}
\label{sec:experiments}

DDS detects a dead direction and counts how many directions are dead, two reads a single-eigenvalue monitor cannot give. \S\ref{sec:exp:aoyagi} pins detection and rank-tracking to closed-form RLCT ground truth, where the cross-cell rank-correlation turns out to be a sanity gate that even a naive capacity proxy clears. \S\ref{sec:exp:rank_multi} is the discriminating quantitative test, the counting identity. \S\ref{sec:exp:nanda} carries the observables off the anchor to a non-linear transformer at AdamW$+$CE, and \S\ref{sec:exp:universality} reports the static structural correlation as a robustness reading. Tab.~\ref{tab:dds_summary} summarises the per-testbed Spearman $\rho$; per-claim assumptions and validation loci are in App.~\ref{app:experiments:theory_scope_map}.

\begin{table}[t]
\centering\small
\caption{DDS observables on three testbeds, with framework role and cross-cell Spearman $\rho$ vs ground-truth complexity (Aoyagi $\lambda$ on the closed-form testbeds; $d_{\mathrm{model}}$ on Nanda). Role: \emph{rate} (Fisher decay theorem), \emph{vol} (active-spectrum log-volume / curvature class), \emph{dual} (A--G duality, layer-dependent sign by construction). All three observables are framework-derived: the framework predicts a sign at the dimension-fixed boundary layer for each. On the DLN sweep the analytical $\lambda$ is constant above saturation (in both $h$ and rank deficit), so its column is a tied-rank artifact ($\rho \approx 0$); that sweep tests constancy and the local-vs-global LLC distinction. Per-cell hyperparameters and reproducibility notes in App.~\ref{app:experiments:dln_aoyagi_anchor}.}
\label{tab:dds_summary}
\setlength{\tabcolsep}{6pt}
\begin{tabular}{l c r r r}
\toprule
Observable & Role & Aoyagi & DLN & Nanda \\
& & ($14$ cells) & ($24$ cells; $\lambda$ const.) & ($n{=}101$) \\
\midrule
$\lambda_{\min}^+(G)$ & rate & $-0.98$ & $-0.06$ & $-0.88$ \\
$\log\det^+(G)$ & vol  & $-0.94$ & $-0.09$ & $-0.97$ \\
$\sigma_{\min}(X_\ell)$ & dual & $+0.89$ & $-0.21$ & $-0.65$ \\
\bottomrule
\end{tabular}
\end{table}
 
\subsection{Closed-form RLCT anchor (Aoyagi reduced-rank regression)}
\label{sec:exp:aoyagi}

If DDS tracks Watanabe's RLCT, the first thing to demand is agreement with the analytical $\lambda$ where ground truth is closed-form. Reduced-rank regression (RRR) is the canonical singular family with closed-form RLCT: it factors a teacher matrix $M^\star \in \mathbb{R}^{N \times M}$ of rank $r$ through a hidden bottleneck of width $H$, so the trained model is $W_2 W_1$ with $W_1 \in \mathbb{R}^{H \times M}$, $W_2 \in \mathbb{R}^{N \times H}$. The Aoyagi--Watanabe closed form \citep{AoyagiWatanabe05} gives the RLCT $\lambda$ as a function of $(M, N, H, r)$ alone (input dim $M$, output dim $N$, bottleneck $H$, truth rank $r$), providing analytical ground truth against which any RLCT estimator can be calibrated. Our \emph{Aoyagi 2005 anchor} fixes $M{=}10, N{=}5$ and sweeps $H \in \{2,3,4,5\}$ and $r \in \{1, \ldots, \min(N, H)\}$, giving $14$ realisable cells (truth rank $\le$ model capacity) at $\lambda \in [9, 25]$. The \emph{Aoyagi 2024 deep-linear net} (DLN) sweep \citep{Aoyagi24} extends this to a $3$-matrix product $W_3 W_2 W_1$ with shared input/output dim $d{=}20$ and hidden width $h \in \{16, 20, 24, 32, 64, 128\}$; the truth rank is $20 - \mathrm{rd}$ where the \emph{rank deficit} $\mathrm{rd} \in \{1,2,3,4\}$ counts how many directions the truth is degenerate in. The \emph{saturation threshold} is $h \ge \mathrm{rank}(M^\star)$: at and above it the model has enough capacity to express the truth and the closed-form RLCT becomes provably independent of $h$, fixed by the truth's rank alone. With $d{=}20$, the $h \in \{20, 24, 32, 64, 128\}$ cells sit at or above saturation across all $\mathrm{rd}$; the $h{=}16$ cell sits below it for $\mathrm{rd}<4$. The constancy-in-$h$ comparison below uses the above-saturation cells, where the analytical RLCT is constant by design; the unrealisable $h{=}16, \mathrm{rd}<4$ subset is excluded from the constancy claim. Calibrated LLC is retrofitted on every cell with a locked SGLD config (per-testbed $5\times 5$ grid + $9\times$ saturation gate); see App.~\ref{app:experiments:dln_aoyagi_anchor} for the lock procedure.

\begin{figure}[t]
  \centering
  \includegraphics[width=0.82\textwidth]{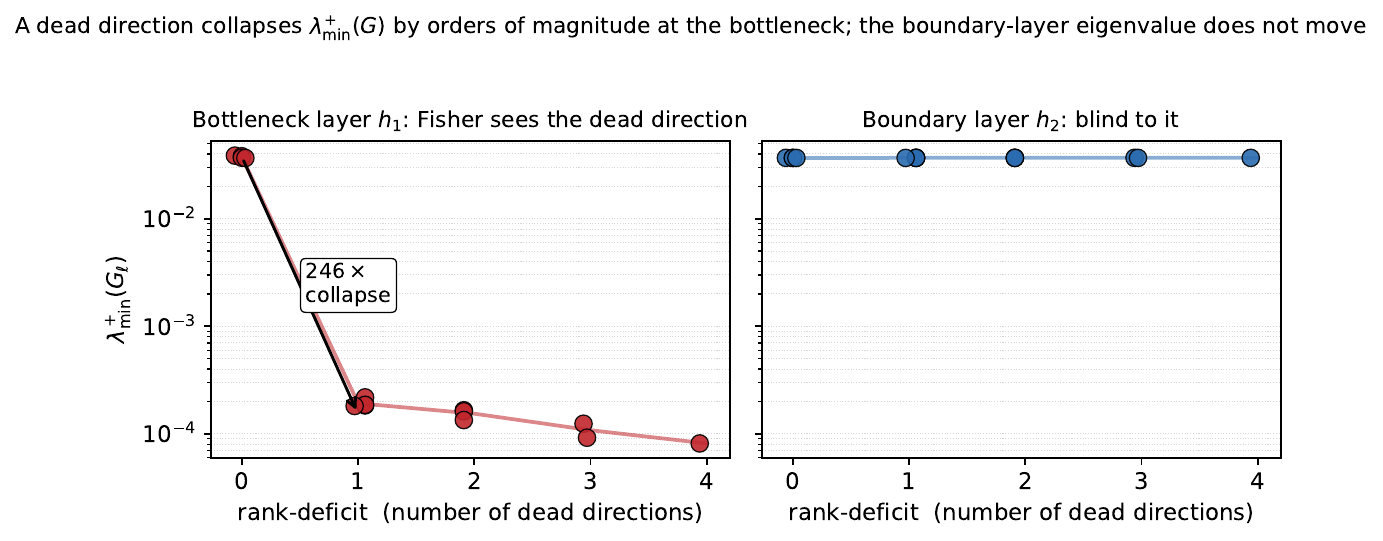}
  \caption[Dead-direction detection cliff.]{\textbf{A dead direction collapses the Fisher-Gram eigenvalue at the bottleneck.} On the Aoyagi 2005 reduced-rank-regression anchor ($14$ cells, $\sigma{=}0.1$), the smallest positive Fisher-Gram eigenvalue $\lambda_{\min}^{+}(G)$ at the bottleneck layer $h_1$ drops $\sim\!246\times$ the instant a dead direction is present (rank-deficit $\ge 1$), while at the dimension-fixed boundary layer $h_2$ it stays flat to $0.3\%$. The collapse detects the dead direction and localises it to the bottleneck, read from a single backward pass.}
  \label{fig:detection_cliff}
\end{figure}
 
Detection comes first. At the bottleneck layer the smallest positive Fisher-Gram eigenvalue $\lambda_{\min}^+(G)$ collapses $\sim\!246\times$ the instant a dead direction is present, while the dimension-fixed boundary layer stays flat to $0.3\%$ (Fig.~\ref{fig:detection_cliff}); the collapse is one backward pass, and it localises the dead direction to the bottleneck. The magnitudes are trustworthy against the closed form: under a target-$\lambda$ SGLD calibration, calibrated LLC recovers the analytical $\lambda$ at $99\%$ mean across the $14$ cells (App.~\ref{app:experiments:llc_calibration}). \emph{Cross-cell rank-tracking is a sanity gate.} The rate-chain observables $\lambda_{\min}^+(G)$ and $\log\det^+(G)$ rank-track $\lambda$ at $|\rho| \in [0.95, 0.98]$ on the dimension-fixed boundary layer with the framework-predicted sign, and $\sigma_{\min}(X_\ell)$ reads the truth-rank at $\rho = +0.89$; but calibrated LLC ($+0.98$) and a naive $H{\cdot}r$ capacity proxy ($+0.99$) clear the same bar (Fig.~\ref{fig:sanity_gate}), so the cross-cell correlation is a passing bar every complexity-monotone observable meets, not a discriminator (the rankings hold across a $4\times$ $\sigma$-noise range, App.~\ref{app:experiments:dln_aoyagi_anchor}). On the Aoyagi 2024 DLN sweep, $\lambda$ is constant in $h$ above saturation and DDS observables track that constancy, while calibrated LLC at the locked-config budget reads the local-LLC at the trained $w^\star$, which \citet{LauFurmanWangMurfetWei25} distinguish from the global RLCT.

\begin{figure}[t]
  \centering
  \includegraphics[width=0.82\textwidth]{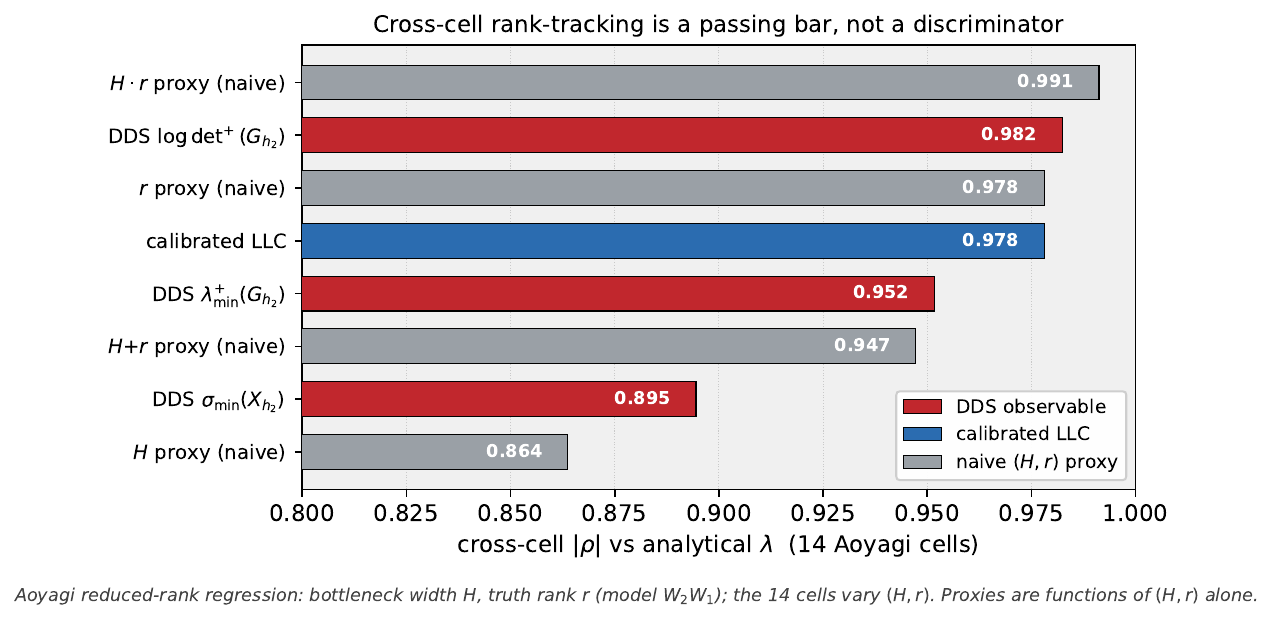}
  \caption[Cross-cell Spearman is a sanity gate.]{\textbf{The cross-cell rank-correlation is a sanity gate, not a discriminator.} On the Aoyagi 2005 anchor ($14$ cells), every complexity-monotone observable clears the cross-cell $|\rho|$ against the analytical $\lambda$: the DDS observables, calibrated LLC, and a naive $H{\cdot}r$ capacity proxy, which in fact scores highest ($0.99$). The discriminating evidence is the rank-multiplicative volume identity (Fig.~\ref{fig:volume_identity}), which the proxy and single-eigenvalue monitors cannot reproduce. $H$ is the bottleneck width, $r$ the truth rank.}
  \label{fig:sanity_gate}
\end{figure}
 
\subsection{Rank-multiplicative volume identity (discriminating test)}
\label{sec:exp:rank_multi}

Sign and rank-correlation tests are a passing bar; the discriminating test is a quantitative identity any monotone observable would fail. The framework provides one: at a singular minimum with rank-deficit $r$, $\log\det^+(G_\ell)$ slope scales linearly in $r$ while $\lambda_{\min}^+(G_\ell)$ slope is $r$-invariant (Prop.~8 multi-direction generalisation). The slope at rank-deficit $r$ is $r$ times the rank-$1$ slope, with no free prefactor. Single-eigenvalue spectral monitors are rank-blind to this identity by construction.

\begin{figure}[t]
  \centering
  \includegraphics[width=\textwidth]{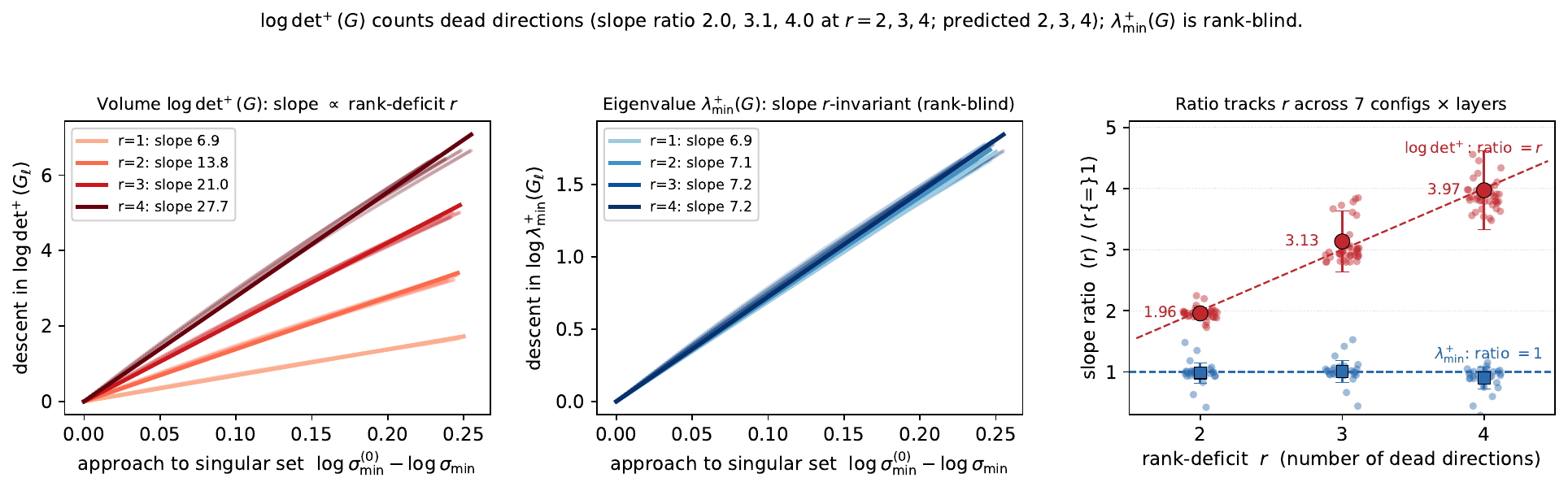}
  \caption[Rank-multiplicative volume identity.]{\textbf{The volume observable counts dead directions; the smallest eigenvalue cannot.} \textbf{(A)} Normalised descent of $\log\det^{+}(G_\ell)$ toward the singular set at an interior layer (deep-linear noisy bridge, $L{=}4$, $D{=}20$, layer $h_1$), for rank-deficit $r{=}1,\dots,4$: the slope fans to $1\times/2\times/3\times/4\times$ the rank-$1$ slope ($6.9/13.8/21.0/27.8$). \textbf{(B)} The smallest positive eigenvalue $\log\lambda_{\min}^{+}(G_\ell)$ descends at the same rate for every $r$ (rank-blind, slopes $\approx 7$). \textbf{(C)} Across $7$ noisy-bridge configurations $\times$ layers ($L \in \{4,6,8\}$, $D \in \{20,50\}$, full and mini-batch SGD, $5$ seeds, $r{=}1..4$), the $\log\det^{+}$ slope ratio tracks the rank-deficit (means $2.0, 3.1, 4.0$ at $r{=}2,3,4$ against the predicted $2,3,4$) while the $\lambda_{\min}^{+}$ ratio stays at $1$. The ratio is a strict, prefactor-free identity; a single-eigenvalue monitor is rank-blind to it by construction.}
  \label{fig:volume_identity}
\end{figure}
 
We measure across $7$ noisy-bridge configurations ($L \in \{4, 6, 8\}$, $D \in \{20, 50\}$, full and mini-batch SGD, $\sigma{=}0.1$, $5$ seeds each), running each trajectory at rank-deficit $r{=}1,\dots,4$ ($M^\star = \mathrm{diag}(1,\ldots,1,0,\ldots)$ with the last $r$ entries zero). Cells use depth-controlled init $t_0(L) = (1/16)^{1/L}$ to hold the initial dead-direction product $t_0^L = 1/16$ constant across $L$, isolating the rate prediction from the asymptotic-regime accessibility (Rem.~11). The $\log\det^+$ slope ratio tracks the rank-deficit across all four ranks (Fig.~\ref{fig:volume_identity}): cell-wise means $2.0, 3.1, 4.0$ at $r{=}2,3,4$ against the predicted $2,3,4$, the $r{=}2$ ratio inside the $\pm 10\%$ falsification band, while the matching $\lambda_{\min}^+$ slopes stay $r$-invariant (rate-chain $\rho{=}0.985$). Per-cell and per-layer breakdowns are in App.~\ref{app:experiments:bridge_structural_correlation}.

\subsection{Off-the-anchor extension on a non-linear transformer}
\label{sec:exp:nanda}

The closed-form anchor pins DDS to ground truth in the canonical-aligned regime; the natural next question is what survives off it. We run the same observables on a non-linear architecture trained with AdamW$+$CE, where the canonical-alignment hypothesis is violated. This is the cleanest single-axis stress test. We use the Nanda modular-addition setup \citep{NandaChanLieberum23} at $d_{\mathrm{model}} \in \{32, 64, 128, 256\}$ ($30$ seeds per width, $101$ of $120$ grokked) under AdamW$+$CE, with per-width SGLD calibration locked via a $5\times 5$ grid plus $9\times$ saturation gate. The framework predicts (Theorems~2, 9, 25): wider model $\to$ more degenerate Fisher $\to$ smaller $\lambda_{\min}^+(G)$, more negative $\log\det^+(G)$, smaller $\sigma_{\min}$. Every DDS observable spans $5$--$15$ orders of magnitude across the four widths, remains monotone in $d_{\mathrm{model}}$, and is sign-coherent at $|\rho| \in [0.62, 0.96]$, with $\log\det^+(G_{h_2})$ at the high end ($|\rho|{=}0.965$, $35\%$ rel-std). Calibrated LLC at the protocol $4{,}400$-step budget is rank-flat on the same sweep ($\rho{=}{-}0.07$), and a $9\times$-budget re-run does not lift it (App.~\ref{app:experiments:dln_aoyagi_anchor}). The dynamic range and sign-coherence are what the $4$-distinct-widths setup can test; the cross-cell Spearman has heavy tied-rank dependence, and this is one architecture on one task. Broadening the rate and structural-correlation reads (attention-only chains, deeper transformers, more widths, other algorithmic tasks) is open empirical work.

\begin{figure}[t]
  \centering
  \includegraphics[width=0.72\textwidth]{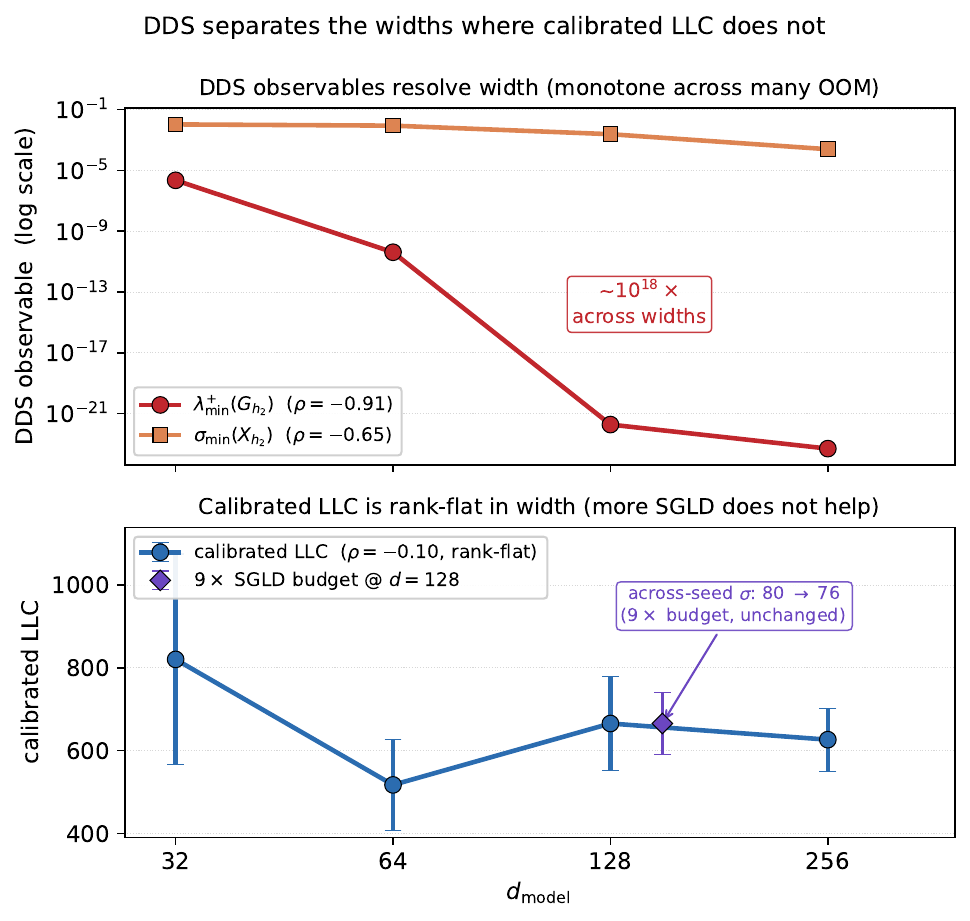}
  \caption[DDS resolves width where LLC is flat.]{\textbf{DDS separates the widths where calibrated LLC does not.} Nanda modular-addition width sweep (AdamW$+$CE, $101$ grokked cells of $4$ widths $\times\, 30$ seeds). \textbf{(Top)} DDS observables descend monotonically with $d_{\mathrm{model}}$: $\lambda_{\min}^{+}(G)$ spans $\sim\!10^{18}$ ($\rho{=}{-}0.91$), $\sigma_{\min}$ is monotone ($\rho{=}{-}0.65$). \textbf{(Bottom)} Calibrated LLC at the $4{,}400$-step protocol budget is rank-flat ($\rho{=}{-}0.10$); a $9\times$ SGLD budget at $d{=}128$ leaves the across-seed spread essentially unchanged ($\sigma\,80 \to 76$), pinning the flatness to per-seed initialisation variance rather than an under-sampled chain.}
  \label{fig:width_vs_llc}
\end{figure}
 
\subsection{Static structural correlation (robustness reading)}
\label{sec:exp:universality}

\begin{figure}[t]
\centering
\includegraphics[width=\textwidth]{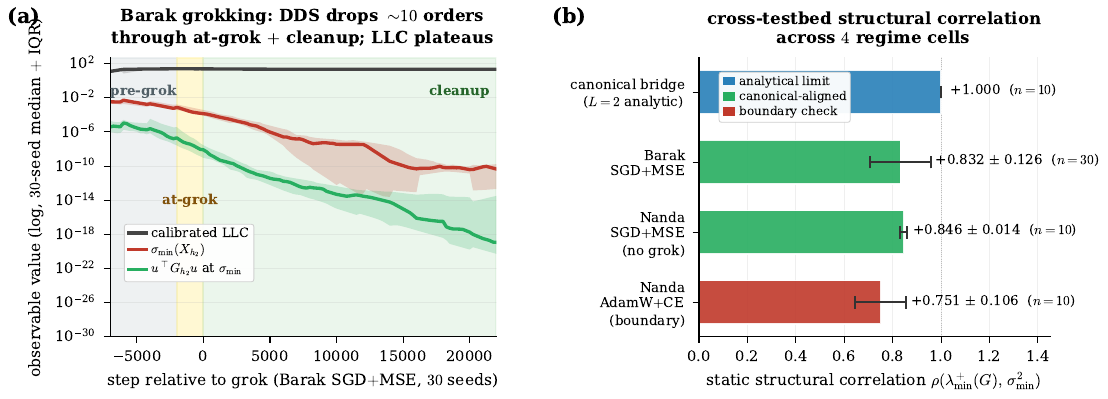}
\caption{\textbf{DDS resolves singular structure dynamically; the static identity holds across regime cells.} \textbf{(a)} Barak sparse-parity grokking trajectory ($30$ seeds, SGD$+$MSE, $240$k steps; $30$-seed median $+$ IQR band). Through the val\_acc-anchored phases (pre-grok, at-grok, cleanup) the activation-side $\sigma_{\min}(X_{h_2})$ drops $\sim\!7$ orders and the Fisher-side $u^{\top} G_{h_2} u$ at the dead direction drops $\sim\!14$ orders, while calibrated LLC (top trace) plateaus. \textbf{(b)} Static structural correlation $\rho(\lambda_{\min}^{+}(G), \sigma_{\min}^{2})$ across $4$ regime cells (canonical bridge, Barak SGD$+$MSE, Nanda SGD$+$MSE no-grok, Nanda AdamW$+$CE; per-cell $\rho$ and seed counts in the text and Tab.~\ref{tab:dds_summary}). All four read at $\rho \ge {+}0.75$, the AdamW$+$CE cell a boundary check at the canonical-alignment violation.}
\label{fig:nanda_universality}
\end{figure}

The static structural identity $\lambda_{\min}^+(G_\ell) \propto \sigma_{\min}(X_\ell)^2$ couples the Fisher-side and activation-side spectra at any singular minimum (Theorem~21, Cor.~25; full proof in App.~\ref{app:structural_correlation_proof}). It supplies the same Fisher-side ordering as $\lambda_{\min}^+(G_\ell)$ at activation-side cost, which matters at LLM widths where the Fisher-side full-spectrum is expensive (cost ordering below). The discriminating quantitative test lives on the volume side (\S\ref{sec:exp:rank_multi}); here we report the static identity as a robustness reading across testbeds. \emph{Two canonical-aligned non-trivial testbeds.} $\rho = +0.832 \pm 0.126$ on Barak sparse parity \citep{BarakEdelmanGoelKakadeMalachZhang22} SGD$+$MSE \emph{Phase A} ($30$ seeds; post-grok descent into the singular minimum), and $\rho = +0.846 \pm 0.014$ on the Nanda $1$-block transformer SGD$+$MSE no-grok cell ($10$ seeds). \emph{Deterministic check.} On the analytic $L{=}2$ canonical bridge (linear, deterministic GF, MSE; $10$ seeds), $\rho = +1.000$, the analytical limit where $\rho$ reduces to algebra. The theorem predicts $\rho \to +1$ in the strict asymptotic limit; finite-$t$ deviations and finite-$N$ Marchenko--Pastur noise on $\widehat{A}_{\ell+1}$ shrink the observed correlation, and the ordering matches: deterministic at the limit, Barak loosened by finite $N/D \approx 2$ and the $30$-seed dispersion. \emph{Boundary-of-applicability check.} Nanda $1$-block transformer AdamW$+$CE (groks; $10$ seeds, true-MC Fisher) gives $\rho = +0.751 \pm 0.106$. On AdamW$+$CE the canonical-alignment hypothesis is violated (Adam's diagonal preconditioner is non-equivariant under CE's logit-shift symmetry; Rem.~80), so the trajectory-level prediction does not strictly apply; the residual coupling rests on the KFAC factorisation $F_\ell \approx A_\ell \otimes G_\ell$ at the singular minimum itself, independent of the optimiser. \emph{Slope-2 prediction.} On Barak Phase A, $4$ of $5$ seeds give trajectory-rate slope $\bar x = 2.06 \pm 0.20$ ($R^2 \in [0.87, 0.97]$), agreeing with the noisy-bridge sweep (\S\ref{sec:exp:rank_multi}); higher $n/D$ compresses the geometric event below save-cadence resolution (App.~\ref{app:experiments:grokking}).

\paragraph{Cost ordering.} The four observables sit at different training-loop cadences. $\sigma_{\min}(X_\ell)$ is one forward pass plus an SVD ($O(Nh^2)$) and runs at \emph{real-time} cadence at any width we tested. $u^\top G u$ at a known direction is one FBP ($O(h)$), \emph{checkpoint} cadence. The Fisher-side full-spectrum observables $\lambda_{\min}^+(G_\ell)$ and $\log\det^+(G_\ell)$ assemble the per-sample-gradient Fisher-Gram and take its $O(h^3)$ eigendecomposition, \emph{periodic} cadence on small models and \emph{offline} at LLM width. Calibrated LLC's $4{,}400$-FBP SGLD chain is offline at any scale we tested. On a Barak post-grok checkpoint ($h{=}500$), in wall-clock, the Fisher-side observables are $\sim\!300\times$ cheaper than calibrated LLC and $\sigma_{\min}$ is $\sim\!130\times$ cheaper (App.~\ref{app:experiments:wall_clock}). At LLM widths the ordering shifts: $\sigma_{\min}$ and $u^\top G u$ scale roughly linearly in $h$ and stay cheap, while the Fisher-side eigendecomposition scales as $O(h^3)$ and becomes the costliest DDS read. The structural correlation $\rho(\lambda_{\min}^+(G), \sigma_{\min}^2)$ (\S\ref{sec:exp:universality}) recovers the Fisher-side ordering through $\sigma_{\min}$ at activation-side cost; per-cadence recipes and per-width wall-clock numbers are in App.~\ref{app:experiments:observable_protocols}.
 \section{Discussion}
\label{sec:discussion}

\paragraph{What we showed.} DDS detects and counts dead directions through cheap closed-form spectral reads of a network's activations and Fisher-Gram, complementary to calibrated LLC; the per-checkpoint cost ordering and how it shifts with width are in \S\ref{sec:experiments}.

The three DDS observables read singular structure from complementary sides of the same per-layer object. $\sigma_{\min}(X_\ell)$ reads activation rank-deficiency, $\lambda_{\min}^+(G_\ell)$ reads gradient-Gram rank-deficiency, and $\log\det^+(G_\ell)$ reads the active gradient-Gram log-volume. The framework's structural correlation states that activation and gradient rank-deficiency reflect the same singular geometry coupled through the KFAC factorisation, which anchors the three observables to one another. The volume-side rank-multiplicative identity is the quantitative structural prediction this anchoring produces; it cannot be reduced to sign or rank correlation, and \S\ref{sec:experiments} bears this out as the discriminating test where the cross-cell rank-tracking is only a sanity gate.

\paragraph{Alternative explanations considered.} \emph{(i) Calibrated-LLC width-flatness / drift is a budget artefact.} Ruled out by a $9\times$-larger SGLD budget at $d{=}128$ on the Nanda sweep: within-cell SGLD std falls $1.50\times$ but across-seed std falls only $1.05\times$, so per-seed initialisation variance dominates SGLD measurement variance at this calibration cell, and the drift on the DLN sweep is consistent with the App.~I local-vs-global RLCT distinction \citep{LauFurmanWangMurfetWei25} (App.~\ref{app:experiments:dln_aoyagi_anchor}). \emph{(ii) Transformer cross-architecture extension is contingent on grokking.} The sign-coherent $|\rho| \in [0.62, 0.96]$ reading is on the $101$ grokked cells; whether DDS extends to non-grokked transformers remains open.

\paragraph{Where DDS applies and where it doesn't.} \emph{(a) Static observables apply at any singular minimum, conditional on one being present.} The three DDS readings are algebraic identities of the per-layer geometry; they are testbed-broad across every optimiser, loss, and architecture we tested. They go quiet on data without rank-deficient teachers or training that does not reach a singular minimum: there is nothing dead to read. The geometry is a property of the problem the network is solving. \emph{(b) Trajectory rate-fits additionally require an actual descent.} The rank-multiplicative identity and the per-layer rate ladder need canonical alignment, a theorem-compatible preconditioner (SGD on $G$-invariant metrics), and the trajectory entering the asymptotic regime where $t \to 0$. The third precondition is data-and-training-dependent: a network that never settles into the singular minimum (training stopped short, regularisation absent, optimiser parked in a non-singular basin) leaves the rate undefined. \emph{(c) Architectural reach tested.} The rank-multi identity is validated on $7$ noisy-bridge configurations; the transformer cross-architecture probe is one task on a single-block transformer with $4$ widths. Width sweeps on attention-only chains, deeper transformers, and other small-algorithmic tasks are open empirical work. \emph{(d) Posterior-Bayesian summaries.} The local-posterior summaries (Watanabe's multiplicity $m$, the singular fluctuation $\nu$, posterior-WAIC, and the developmental-stage trajectory plateaus \citep{HooglandWangFarrugiaRoberts24}) integrate over the posterior and read a complementary slice of the same singular structure. \emph{(e) Methodological caveats.} The dimension-fixed boundary layer prescription (App.~\ref{app:experiments:dln_aoyagi_anchor}) was identified on the Aoyagi anchor and applied across the DLN sweep, the Nanda extension, and the structural-correlation testbeds; on architectures with no obvious bottleneck the rule must be tested before being trusted.

\paragraph{Falsifiers.} Three checks would force revision. \emph{(F1)} An Aoyagi-anchor cell where rate-chain observables carry the wrong sign against $\lambda$; we find none across the $14$ cells where $\lambda$ varies. \emph{(F2)} A canonical-aligned noisy-bridge cell at $L \ge 4$ where the $\log\det^+$ slope ratio at rank-deficit $r$ deviates from $r$ by more than $\pm 10\%$; across $r\in\{1,2,3,4\}$ the $7$ configurations read $1.96, 3.13, 3.97$ at $r{=}2,3,4$ against the predicted $2,3,4$. \emph{(F3)} A canonical-aligned-regime architecture where $\rho(\lambda_{\min}^+(G_\ell), \sigma_{\min}(X_\ell)^2)$ across checkpoints reads $\le 0$.

\paragraph{Outlook.} DDS gives the first directional, parameter-local handle on singular complexity. Three follow-ons sit close: rank-collapse pretraining monitors, LoRA placement along measured dead directions, and complexity ranking on architectures where SGLD calibration is impractical; further out, the same dead-direction reads should extend from $\lambda$ to the rest of the Watanabe triple ($m$, $\nu$) and WAIC without SGLD. Complementing LLC's posterior readouts, singular complexity, until now an offline Bayesian invariant, gets a cheaper diagnostic that travels with the network it describes.

\newpage
\bibliographystyle{abbrvnat}

\clearpage
\appendix

\section{Theoretical foundations: proof of the structural correlation}
\label{app:structural_correlation_proof}

We restate Theorem~\ref{thm:structural_correlation} of Section~\ref{sec:predictions} and give a complete proof, citing the bridge framework as a black-box prior result. The framework's foundations (Fisher decay theorem, multi-layer KFAC bridge, A--G duality) are derived in  the theory paper. The present paper validates the structural correlation empirically across four testbeds and contributes the theorem-to-empirics assembly given here.

\paragraph{Setup.} Fix an $L$-layer network with KFAC-decomposable layer-$\ell$ Fisher $F_\ell \approx A_\ell \otimes G_\ell$, where $A_\ell := \mathbb{E}[X_{\ell-1} X_{\ell-1}^\top]$ is the layer-$\ell$ input covariance and $G_\ell := \mathbb{E}[\delta_\ell \delta_\ell^\top]$ is the layer-$\ell$ pre-activation gradient covariance. Let $\theta_0$ be a singular minimum of the population KL $K(\theta) := \mathrm{KL}(p^\star \| p_\theta)$, and let $\theta(t) := \theta_0 + t u$ for $u$ a unit dead direction of KL order $k \ge 1$ (Definition~\ref{def:dead_direction}). We assume the \emph{canonical-aligned approach} hypothesis used in the multi-layer KFAC bridge (Theorem~21 of \theorysrc): along the trajectory $\theta(t)$, the per-layer dead-direction projector commutes with the KFAC factors in the limit $t \to 0$. \Theorysrc derives this hypothesis as a generic consequence of balanced-init linear / smooth-nonlinear gradient flow on $K$. The hypothesis is not generic on Adam-class trajectories (Remark~80); see the second remark below.

\begin{theorem}[Structural correlation between Fisher and activation spectra; restated from Section~\ref{sec:predictions}]
\label{app:thm:structural_correlation}
Under the canonical-aligned approach hypothesis, at every layer $\ell \in \{1, \ldots, L-1\}$:
\[
\lambda^+_{\min}\!\left(G_\ell(\theta(t))\right) \;=\; \Theta\!\left(t^{2(L-\ell)}\right), \qquad
\sigma_{\min}\!\left(X_\ell(\theta(t))\right)^{\!2} \!/ N \;=\; \Theta\!\left(t^{2\ell}\right), \qquad t \to 0.
\]
Both quantities vanish as $t \to 0$ as strictly positive powers of $t$, so they are co-monotonic along the approach and
\[
\rho_{\mathrm{Spearman}}\!\left(\lambda^+_{\min}(G_\ell),\; \sigma_{\min}(X_\ell)^2 / N\right) \;\xrightarrow{\text{a.s.}}\; +1
\]
in the asymptotic limit $N \to \infty$, $\min t_i \to 0$ over a sample $\{t_i\}_{i=1}^M$ along the approach.
\end{theorem}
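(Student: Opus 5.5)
The proof has three parts: the Fisher-side rate, the activation-side rate, and the rank-correlation limit. The framework results are used as black boxes.

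\emph{Fisher side.} Invoke the multi-layer KFAC bridge (Theorem~21 of \theorysrc) under the canonical-aligned approach hypothesis. In the canonical chain, the backpropagated signal $\delta_\ell$ at layer $\ell$ passes through the $L-\ell$ downstream weight factors. Along the dead direction each factor's restriction to the dead subspace scales as $\Theta(t)$. The commuting-projector hypothesis lets this scaling pass to the eigenvalues of $G_\ell=\mathbb{E}[\delta_\ell\delta_\ell^\top]$, so the dead eigenvalue is $\Theta(t^{2(L-\ell)})$. Two further points must be checked. First, the dead eigenvalue is the smallest \emph{positive} one: the active eigenvalues stay bounded away from $0$ by continuity at $\theta_0$. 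Second, the exact kernel (if any) is excluded by the $^+$ convention. The restriction $\ell\le L-1$ guarantees a strictly positive exponent.

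\emph{Activation side.} By the A--G duality (Cor.~25), $\lambda_{\min}(A_{\ell+1}(\theta(t)))=\Theta(t^{2\ell})$, since the forward signal traverses $\ell$ upstream factors. Next, pass from the population covariance to the sample matrix. We have $\sigma_{\min}(X_\ell)^2/N=\lambda_{\min}(X_\ell^\top X_\ell/N)$, and the rows of $X_\ell$ are i.i.d.\ under the Gaussian-isotropic input assumption. The strong law gives $X_\ell^\top X_\ell/N\to A_{\ell+1}$ entrywise a.s.\ for each fixed $t$, and Weyl's inequality transfers this to $\lambda_{\min}$.

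\textbf{The main obstacle} is that the limit must preserve the $\Theta(t^{2\ell})$ rate uniformly as $t\to0$. Otherwise fluctuations of order $N^{-1/2}$ would swamp a vanishing eigenvalue. I would handle this by factoring out the scaling. In the canonical basis, write $X_\ell=Z D_t$, where $D_t$ is diagonal with the $t^\ell$ entry on the dead coordinate and $Z$ has $t$-independent law. Then $\sigma_{\min}(X_\ell)^2/N$ is sandwiched between $t^{2\ell}$ times the extreme eigenvalues of $Z^\top Z/N$. Those eigenvalues converge a.s.\ to positive constants when $h/N\to0$; this is the Marchenko--Pastur calibration remark. The order of limits is therefore $N\to\infty$ first, uniformly over the sample $\{t_i\}$.

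\emph{Correlation.} Take a finite sample $t_1<\dots<t_M$. Both $f(t)=\lambda^+_{\min}(G_\ell)$ and $g(t)=\sigma_{\min}^2/N$ are of the form $c(t)t^{p}$ with $p>0$ and $c(t)\to c_0>0$. Consequently, for $\min t_i$ small enough and at sufficiently separated sample points, both are strictly increasing in $t$; near the limit, the ratio of consecutive values is dominated by the power law rather than by the $\Theta$ constants. Identical rankings give Spearman $\rho=1$. With the a.s.\ convergence of the second step applied simultaneously at all $M$ points (a finite union of null sets), $\rho\to+1$ a.s. The non-strict monotonicity inside $\Theta$ is why only a limit statement, and not exact equality at finite $t$, is claimed.
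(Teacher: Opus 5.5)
Your proof is correct and has the same skeleton as the paper's proof. Both use Theorem~21 for the Fisher rate, Cor.~25 for the population activation rate, SLLN plus Weyl for the sample, and co-monotonicity to get Spearman $+1$. Two steps are handled differently, though.

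At the sample-to-population step, the paper proves only pointwise a.s.\ convergence $\sigma_{\min}(X_\ell)^2/N \to \lambda_{\min}(A_{\ell+1}(\theta(t)))$ at each fixed $t$. That suffices when $N\to\infty$ is taken first on a finite sample $\{t_i\}$, or when $N$ grows fast enough relative to $\min t_i$. The small-$t$, finite-$N$ interaction is deferred to the later remark on the $O(\sqrt{D/N})$ noise floor. Your factorisation $X_\ell = Z D_t$ with the sandwich $t^{2\ell}\lambda_{\min}(Z^\top Z/N) \le \sigma_{\min}(X_\ell)^2/N \le t^{2\ell}\lambda_{\max}(Z^\top Z/N)$ instead proves the first display literally for the sample quantity. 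It holds uniformly in $t$ and at every finite $N \ge h$, with random but $t$-independent constants. So the order of limits no longer matters, and in your model sampling noise enters multiplicatively, so no additive floor appears.

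The price is a stronger structural assumption. Exact $t^\ell$ scaling of the dead coordinate on a $t$-independent random matrix holds on the canonical deep-linear bridge. It is not implied by the abstract canonical-alignment hypothesis (projector commuting with the KFAC factors in the limit) for a general network, where only the paper's pointwise argument goes through. Within your model you can even drop the sandwich: with $M = Z^\top Z/N$, $\lambda_{\min}(D_t M D_t) = \min_{w \ne 0} w^\top M w / w^\top D_t^{-2} w$ is exactly nondecreasing in $t$.

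At the co-monotonicity step, the paper simply asserts that both quantities are monotone, which $\Theta$-bounds do not give; $t^p(2+\sin(1/t))$ is a counterexample. You rightly recover rank agreement from separation of the sample points instead. Your assumption $c(t)\to c_0$, however, is stronger than $\Theta$ and not needed. If $C_1 t^p \le f(t) \le C_2 t^p$ on $(0,t_*)$, any two sample points in $(0,t_*)$ whose ratio exceeds $(C_2/C_1)^{1/p}$ are already correctly ordered. Either way, the whole sample, not just $\min t_i$, must lie in the asymptotic window. Your route makes that hypothesis explicit; the paper's leaves it implicit.
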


\begin{proof}
The proof proceeds in five steps. Steps 1 and 2 invoke the bridge framework as black-box prior results; Steps 3--5 are the theorem-to-empirics assembly.

\textbf{Step 1 (Fisher-side rate; cited).} The multi-layer KFAC bridge (Theorem~21 of \theorysrc) gives, along the canonical-aligned approach,
\[
\lambda^+_{\min}\!\left(G_\ell(\theta(t))\right) \;=\; \Theta\!\left(t^{2(L-\ell)}\right), \qquad t \to 0,
\]
where $\lambda^+_{\min}$ denotes the smallest non-zero eigenvalue. The exponent $2(L-\ell)$ counts the gradient-side path-length from layer $\ell$ to the output through the dead direction $u$.

\textbf{Step 2 (Activation-side population rate; cited).} The A--G duality corollary (Cor.~25 of \theorysrc) gives, on the same approach,
\[
\lambda_{\min}\!\left(A_{\ell+1}(\theta(t))\right) \;=\; \Theta\!\left(t^{2\ell}\right),
\]
where the exponent $2\ell$ counts the activation-side path-length from the input to layer $\ell$ through $u$. The duality $\lambda_{\min}(A_{\ell+1}) \cdot \lambda^+_{\min}(G_\ell) = \Theta(t^{2L})$ is the structural identity behind the layer-symmetric rate decomposition.

\textbf{Step 3 (Sample-to-population on the activation side).} The bridge framework's Gaussian-isotropic input model (Theorem~21 of \theorysrc) makes the rows of $X_\ell \in \mathbb{R}^{N \times h}$ i.i.d.\ samples. Under this assumption, the empirical activation Gram is
\[
\widehat{A}_{\ell+1}(\theta(t)) \;:=\; X_\ell(\theta(t))^\top X_\ell(\theta(t)) \,/\, N,
\]
and the strong law of large numbers gives entrywise almost-sure convergence,
\[
\widehat{A}_{\ell+1}(\theta(t)) \;\xrightarrow{\text{a.s.}}\; A_{\ell+1}(\theta(t)) \quad \text{as } N \to \infty
\]
at fixed $t$. Weyl's inequality $|\lambda_i(\widehat{A}) - \lambda_i(A)| \le \|\widehat{A} - A\|_{\mathrm{op}}$ lifts entrywise convergence to spectral convergence, so
\[
\sigma_{\min}\!\left(X_\ell(\theta(t))\right)^{\!2} \!/ N \;=\; \lambda_{\min}\!\left(\widehat{A}_{\ell+1}(\theta(t))\right) \;\xrightarrow{\text{a.s.}}\; \lambda_{\min}\!\left(A_{\ell+1}(\theta(t))\right) \;=\; \Theta\!\left(t^{2\ell}\right)
\]
combining Step~2 with the convergence above.

\textbf{Step 4 (Co-monotonicity).} Combining Steps 1 and 3, at every $\ell \in \{1, \ldots, L-1\}$ both quantities vanish as strictly positive powers of $t$:
\[
\lambda^+_{\min}\!\left(G_\ell(\theta(t))\right) \;=\; \Theta\!\left(t^{2(L-\ell)}\right), \qquad \sigma_{\min}\!\left(X_\ell(\theta(t))\right)^{\!2} \!/ N \;=\; \Theta\!\left(t^{2\ell}\right).
\]
The exponents satisfy $2(L-\ell) \ge 2$ and $2\ell \ge 2$ for $\ell \in \{1, \ldots, L-1\}$, so both are strictly positive. Two strictly positive monotone-decreasing functions of $t$ produce identical rank orderings on any deterministic sample $\{t_i\}$ with $\min t_i \to 0$, up to ties of measure zero.

\textbf{Step 5 (Asymptotic Spearman).} The Spearman rank correlation between two sequences is, by definition, the Pearson correlation of their rank transforms. Identical rank orderings yield Pearson correlation $+1$ on the rank-transformed sequences, hence Spearman correlation $+1$. The almost-sure convergence in Step~3 lifts the population-level identity to the sample level: as $N \to \infty$ and $\min t_i \to 0$,
\[
\rho_{\mathrm{Spearman}}\!\left(\lambda^+_{\min}(G_\ell),\; \sigma_{\min}(X_\ell)^2 / N\right) \;\xrightarrow{\text{a.s.}}\; +1. \qedhere
\]
\end{proof}

\paragraph{Remark (finite-$t$, finite-$N$ behaviour).} The theorem's $\rho \to +1$ is asymptotic. Two sources of finite-sample shrinkage matter at the testbeds in Section~\ref{sec:exp:universality}.

\emph{Finite $t$:} the rate expansions in Steps 1 and 2 carry subleading $O(t)$ corrections from second-order terms in the bridge framework (Theorem~21 of \theorysrc). Off-asymptotic, two trajectories with different leading exponents but matching subleading terms can produce rank disagreements of order $O(t)$. This shrinks the observed correlation by $O(t)$ at finite distance from the singular minimum.

\emph{Finite $N$:} the SLLN convergence in Step~3 has rate $\|\widehat{A} - A\|_{\mathrm{op}} = O(\sqrt{D/N})$ on $D \times D$ Wishart-style Gram matrices (Marchenko--Pastur). When $\sigma_{\min}(A) = \Theta(t^{2\ell})$ is small relative to the noise floor $O(\sqrt{D/N})$, the empirical $\sigma_{\min}(\widehat{A})$ saturates at the noise floor and decouples from the population value. This is the regime where Barak's $N/D \approx 2$ at the deepest checkpoints loosens the observed correlation; it is the same effect that motivates the $n/D$ measurement-stability rule of the experiments appendix.

The four-testbed experiment (Section~\ref{sec:exp:universality}) measures the finite-$t$, finite-$N$ correlation across regimes with different distance-to-asymptotic and different sample size; the predicted ordering (canonical bridge cleanest, Barak/Nanda noisier) is what we observe.

\paragraph{Remark (off-canonical regimes and Adam-class trajectories).} The theorem requires the canonical-aligned approach hypothesis. On Adam-class trajectories the hypothesis is generically violated: Adam's diagonal preconditioner is non-equivariant under continuous loss symmetries (CE logit-shift, ReLU rescaling), and the per-layer dead-direction projector does not commute with the KFAC factors along the trajectory. Remark~80 of \theorysrc establishes this via direct calculation; the empirical drift on the gauge mode is reported in the cited mod-add factorial.

The structural correlation does not require the canonical-alignment hypothesis to hold strictly along the trajectory. At the singular minimum itself, the KFAC factorisation $F_\ell \approx A_\ell \otimes G_\ell$ holds independent of the optimiser that reached the minimum, and the rank deficiency is shared between the two factors regardless of the alignment of the trajectory. The Adam$+$CE row of the four-testbed experiment ($\rho = +0.751 \pm 0.106$ at $10$ seeds, true-MC Fisher; Section~\ref{sec:exp:universality}) tests the geometric coupling under the off-canonical preconditioner. It is the lowest of the four testbeds, where the canonical-alignment hypothesis fails most cleanly, and the cross-seed mean stays well above zero. The geometric coupling survives the canonical-alignment violation and loosens.

\paragraph{Connection to the rate exponents in the body.} The theorem's $2(L-\ell)$ and $2\ell$ exponents enter the empirical predictions of Section~\ref{sec:experiments} in two ways. (i) The cross-cell sign predictions on the Aoyagi grids and the Nanda width sweep follow from the leading-rate signs: wider model $\to$ more degenerate Fisher $\to$ smaller $\lambda^+_{\min}(G)$, more negative $\log\det^+(G)$, smaller $\sigma_{\min}$. (ii) The trajectory-rate readout of $u^\top G_\ell u$ along the canonical-aligned approach (App.~\ref{app:experiments:bridge_structural_correlation}) inherits the per-layer ladder $2(L-\ell)$ as a direct corollary; the validation of that ladder on parametric autoencoders is at three-decimal precision in \theorysrc, and on Barak SGD$+$MSE the rank-correlation reading is $+0.832 \pm 0.126$ across $30$ seeds. The trajectory rate-readout is appendix-only in the present paper; the body relies only on the structural correlation proven above.
 \clearpage

\section{Experimental details and reproducibility}
\label{app:experiments_submission}

This appendix documents per-experiment hyperparameters, extended tables, and reproducibility notes. Sections are inputted in the order below; each is self-contained.

\paragraph{Contents.}
\begin{itemize}\itemsep=2pt
\item \textbf{Theory and scope} (\S\ref{app:experiments:theory_scope_map}--\S\ref{app:experiments:predictions_extended}): per-prediction assumption sets and validation loci, plus a compact restatement of the four framework theorems used in the body with $2$--$3$ line proof sketches (\S\ref{app:experiments:proof_sketches}).
\item \textbf{Measurement protocols} (\S\ref{app:experiments:observable_protocols}): per-observable computation recipes for $\sigma_{\min}$, $u^\top G u$, $\lambda_{\min}^+(G_\ell)$, and LLC, with fp$64$ upcasts and $n/d$ sample-budget gates.
\item \textbf{Closed-form RLCT validation} (\S\ref{app:experiments:dln_aoyagi_anchor}): the headline experiment. DDS rank-tested against analytical RLCT on the Aoyagi 2005 anchor ($\rho{=}{+}0.996$, $14$ cells $\times$ $5$ seeds $\times$ $3$ noise levels). On the Aoyagi 2024 DLN sweep ($24$ cells) the analytical $\lambda$ is constant above saturation ($200 = d^2/2$ in both $h$ and rank deficit), so the sweep is a constancy test: DDS observables and the global $\lambda$ stay flat in $h$ while calibrated LLC at the locked SGLD config drifts $\sim\!37\%$ (local-LLC vs global-RLCT). Includes the Nanda transformer width sweep ($101$ of $120$ grokked, $|\rho| \in [0.62, 0.96]$) and the $9\times$-budget LLC control.
\item \textbf{Toy Model of Superposition} (\S\ref{app:experiments:tms}): phase-transition benchmark; bridge correlation peaks at $\rho \approx +0.84$ at sparsity $S{=}0.9$.
\item \textbf{Trajectory observables on grokking} (\S\ref{app:experiments:grokking}): phase detection on Nanda and Barak ($30$ seeds each); sharpness-ratio statistics with bootstrap CIs; wall-clock timing of the observable ladder; per-testbed structural correlation.
\item \textbf{Compute and reproducibility} (\S\ref{app:experiments:compute}): hardware envelope, hyperparameters, and reproducibility notes.
\end{itemize}

All experiments use a canonical observable library for SVD, Fisher eigendecomposition, $u^\top G u$ probing, and LLC estimation, ensuring consistent treatment across settings.

\subsection{Theory-to-scope map}
\label{app:experiments:theory_scope_map}

The reach-tier roll-up below summarises the predictions that DDS validates: each row is a class of prediction with a single representative theorem.

\begin{center}\footnotesize
\setlength{\tabcolsep}{4pt}
\begin{tabular}{@{}l p{3.6cm} p{4.6cm}@{}}
\toprule
Reach tier & Representative prediction & Where validated \\
\midrule
Closed-form RLCT recovery & Selection rule $2(k-1) \to 1/(2k)$ (Thm.~3); Fisher decay (Thm.~2) & Aoyagi 2005 anchor ($\rho{=}{+}0.996$) + Aoyagi 2024 DLN constancy above saturation (\S\ref{sec:exp:aoyagi}) \\
\addlinespace[1pt]
Architecture-agnostic structural & Multi-layer KFAC bridge (Thm.~21); structural correlation $\lambda_{\min}^+(G) \propto \sigma_{\min}^2$ & $4$-testbed structural correlation at $\rho \ge {+}0.75$ (\S\ref{sec:exp:universality}); Nanda transformer width sweep (\S\ref{sec:exp:nanda}) \\
\addlinespace[1pt]
Volume side / multi-direction & Volume scaling Cor.~9; rate chain Prop.~8 & $L{=}4$ noisy bridge with rank-$2$ multi-direction discriminator (App.~\ref{app:experiments:bridge_structural_correlation}) \\
\bottomrule
\end{tabular}
\end{center}

Trajectory-rate validation is restricted to theorem-compatible regimes (SGD on a $G$-invariant metric; see Corollary~79 and Remark~80). The structural correlation is geometric (it does not require canonical alignment), which is why it survives Adam$+$CE in the universality test of \S\ref{sec:exp:universality}.

\subsection{Key results from the framework, with proof sketches}
\label{app:experiments:proof_sketches}

The empirical claims rest on four results of \theorycite{}. We restate each in compact form with a 2--3 line proof sketch; full proofs (with all assumptions, supporting lemmas, and architectural extensions) live in \theorypdf. Throughout, $\fisher(\theta) = \mathbb{E}_x[\nabla_\theta \log p_\theta(x)\,\nabla_\theta \log p_\theta(x)^\top]$ is the Fisher information; $K(\theta) := \kl(p^\star \| p_\theta)$; $\Sigma = \{\theta : \det \fisher = 0\}$. A direction $u$ at $\theta_0 \in \Sigma$ has \emph{KL order} $k$ if $K(\theta_0 + tu) = c\,t^{2k} + O(t^{2k+1})$ with $c > 0$.

\paragraph{Setup recall: Fisher = expected Hessian of $K$.}
For a regular family $\fisher(\theta) = \nabla^2 K(\theta)\big|_{\theta=\theta_0}$ at the truth (and in the singular case the same identity holds in the directions where the second moment is finite). \emph{This is the bridge from the analytic order of $K$ to the rate of $\fisher$ along $u$.}

\paragraph{T1. Directional Fisher decay (Theorem~2, \theorytag).}
Along a dead direction of KL order $k \ge 1$:
\[
u^\top \fisher(\theta_0 + tu)\,u \;=\; \Theta\!\big(t^{2(k-1)}\big) \quad \text{as } t \to 0.
\]
\emph{Sketch.} Differentiate $K(\theta_0 + tu) = c\,t^{2k} + O(t^{2k+1})$ twice in $t$: $K'' = 2k(2k-1)c\,t^{2k-2} + O(t^{2k-1})$. Identifying $K'' = u^\top \nabla^2 K\,u$ at the perturbed point and using Fisher $=$ Hessian of $K$ in directions where the score has finite second moment gives the rate. The case $k=1$ (regular minimum) recovers $\Theta(1)$ as expected.

\paragraph{T2. Selection rule + RLCT recovery (Theorem~3, \theorytag).}
On a smooth singular fibre of dimension $m$ at $\theta_0$, write the local KL in normal-crossing form $K \sim u^{2k} + v_1^2 + \cdots + v_m^2$ (transversal coordinate $u$ of order $2k$, $m$ tangential coordinates of order $2$). Then the local RLCT contribution is
\[
\hat\lambda \;=\; \frac{1}{2k} + \frac{m}{2}\,, \qquad \text{with the directional contribution } \tfrac{1}{2k} \text{ matched by our exponent } 2(k-1) \text{ via } \hat\lambda_{\mathrm{dir}} = \tfrac{1}{2k}.
\]
\emph{Sketch.} The volume of $\{\theta : K(\theta) \le \varepsilon\}$ scales as $\varepsilon^{1/(2k)}$ in the transversal direction (one dimension at order $2k$) and $\varepsilon^{1/2}$ in each tangential direction ($m$ at order $2$); summing exponents and applying the Hironaka resolution \citep{Hironaka64} identifies $\hat\lambda$. The transversal exponent $1/(2k)$ is the directional invariant our rate theorem returns in original coordinates without resolution.

\paragraph{T3. Multi-layer KFAC bridge (Theorem~21, \theorytag).}
Under the symmetric canonical-aligned approach $W_\ell(t) = W_\ell^\star + t \cdot \delta_\ell$ on an $L$-layer feedforward (or pre-norm residual) network, with $\delta_\ell$ aligned to the dead direction:
\[
\sigma_{\min}(X_\ell(t)) \;=\; \sqrt{N}\cdot \Theta(t^\ell), \quad u_\ell^\top G_\ell(t)\,u_\ell \;=\; \Theta\!\big(t^{2(L-\ell)}\big), \quad \lambda_{\min}^{+}(G_\ell) = \Theta\!\big(t^{2(L-\ell)}\big).
\]
\emph{Sketch.} Forward propagation of a rank-deficient input through identity-skip residuals contributes $\sigma_{\min}(X_\ell) \propto t^\ell$ along the canonical direction (the dead direction picks up one factor of $t$ per layer). Backward propagation: $G_\ell = W_{\ell+1}^\top G_{\ell+1} W_{\ell+1} + (\text{terms vanishing under canonical alignment})$, recursively giving $G_\ell$ the rate of $W_{\ell+1}\cdots W_L$ which is $t^{L-\ell}$ per matrix factor squared.

\paragraph{T3a. A--G duality (Cor.~25, \theorytag).}
KFAC factorises the per-layer Fisher as $F_\ell \approx A_\ell \otimes G_\ell$ with $A_\ell = \mathbb{E}[X_{\ell-1}X_{\ell-1}^\top]$ and $G_\ell = \mathbb{E}[\delta_\ell \delta_\ell^\top]$. Hence $\lambda_{\min}(F_\ell) = \lambda_{\min}(A_\ell)\,\lambda_{\min}(G_\ell)$. Combining T3 with the activation-side rate $\sigma_{\min}(X_{\ell-1}) \propto t^{\ell-1}$ gives the structural correlation
\[
\lambda_{\min}(A_\ell) \;\propto\; \sigma_{\min}(X_{\ell-1})^2 \quad\Longrightarrow\quad \boxed{\lambda_{\min}^{+}(G_\ell) \;\propto\; \sigma_{\min}(X_\ell)^2}
\]
as a layer-local relation between the activation bottom and the Fisher-Gram bottom. \emph{This is the structural prediction tested in \S\ref{sec:exp:universality}; it requires only that both observables read the same rank-deficient geometry, without requiring the trajectory to be canonical-aligned, which is why it survives Adam$+$CE.}

\paragraph{T4. Curvature--volume rate chain (Prop.~8 + Cor.~9, \theorytag).}
At KL order $k$, the Fisher--Riemannian sectional curvature diverges at rate $\Theta(t^{-(2k-1)})$ and the log-volume of the Fisher's image scales as
\[
\log\det^{+}\!\big(\fisher(\theta_0 + tu)\big) \;=\; -2(k-1) \log t + O(1)
\]
where $\det^+$ is the product of strictly-positive eigenvalues. With $r$ independent dead directions of order $k$, the volume slope is rank-multiplicative ($-2r(k-1)\log t$); the smallest-eigenvalue rate is rank-invariant. The multi-direction ratio test of App.~\ref{app:experiments:bridge_structural_correlation} (the $\log\det^+$ slope ratio against the rank-$1$ baseline tracks the rank-deficit across $r\in\{1,2,3,4\}$: $1.96, 3.13, 3.97$ at $r{=}2,3,4$ vs predicted $2,3,4$, across $7$ configurations) is the discriminator that single-rank tests cannot access.

\paragraph{Definitions of the DDS observables.}
At layer $\ell$, $X_\ell \in \mathbb{R}^{N \times h}$ stacks the post-activation hidden states for $N$ calibration samples; $G_\ell \in \mathbb{R}^{h \times h}$ is the per-sample-gradient outer-product Gram defined above.
\begin{itemize}\itemsep=0pt
  \item $\sigma_{\min}(X_\ell)$: smallest singular value of $X_\ell$ (cheapest activation-side reading of degeneracy).
  \item $\lambda_{\min}^{+}(G_\ell)$: smallest \emph{strictly positive} eigenvalue of $G_\ell$ (Fisher-side dual via the A--G duality of T3a; ``$+$'' guards against the rank-deficient floor when $n < h$).
  \item $\log\det^{+}(G_\ell) := \sum_{j: \lambda_j > 0} \log \lambda_j$: log-volume class of the Fisher's image, the volume-side observable in the rate chain (T4).
  \item $\mathrm{eff\text{-}rank}(X_\ell) := \exp\!\big(H(p_1,\ldots,p_h)\big)$ with $p_i = \sigma_i^2 / \sum_j \sigma_j^2$ and $H$ the Shannon entropy: a soft entropy-summary of the live-direction count, ranging in $[1, h]$. Geometric complement to the rate chain (not derived from T1--T4) that is framework-consistent and the most testbed-robust ranker on the cells in this paper.
\end{itemize}

The framework predicts the \emph{signs} of every cross-cell correlation between these observables and a complexity invariant: as the singular structure deepens, $\sigma_{\min}$ and $\lambda_{\min}^+(G)$ shrink (negative correlation with $\lambda$); $\log\det^+(G)$ becomes more negative (more directions contribute small eigenvalues to the volume); $\mathrm{eff\text{-}rank}$ grows at the bottom-spectrum-broadening boundary because more directions carry comparable mass. The empirical \S\ref{sec:experiments} validates these signs on the Aoyagi closed-form testbeds and on the Nanda transformer extension.
 
\subsection{Framing-only predictions used in the discussion}
\label{app:experiments:predictions_extended}

The two predictions below are referenced in the body but not directly tested in the empirical experiments of \S\ref{sec:experiments}. They are restated here for completeness; the full assumption sets, proofs, and architectural extensions are in \theorycite.

\begin{theorem}[Composition additivity; \theorycitep, Thm.~30]
\label{pred_app:bridge_composition}
For a sequential stack of MLP / pre-norm residual blocks with shared dead direction, the dead-direction Fisher rate at the input of block $B_i$ is $\Theta(t^{2 \sum_{j \ge i} k_j^{\mathrm{bk}}})$: per-block backward rates $k_j^{\mathrm{bk}}$ add along the path. Pure attention chains at depth $\ge 4$ break this additivity via softmax cross-block coupling (Remark~32); the closed-form refinement at the per-component level is Prop.~69 of \theorycite. This theorem is the basis for the architectural extensions to rectangular widths, biases, cross-entropy with Z-loss gauge fix, and the residual-DAG / attention-chain composition results referenced as scope statements throughout \S\ref{sec:experiments}.
\end{theorem}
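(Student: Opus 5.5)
The plan is to prove the composition rule by induction on the number of blocks, peeling blocks off from the output end. The key ingredient is the per-block backward recursion used in the sketch of T3 (the multi-layer KFAC bridge, Theorem~21). For a single block $B_j$ with shared dead direction $u$, I would first establish a one-block lemma. Along the canonical-aligned approach, the Fisher-Gram at the block input satisfies
\[
u^\top G_{\mathrm{in}(B_j)}(t)\,u \;=\; \Theta\!\big(t^{2k_j^{\mathrm{bk}}}\big)\cdot u^\top G_{\mathrm{out}(B_j)}(t)\,u ,
\]
where $k_j^{\mathrm{bk}}$ is defined as the order of vanishing of the block Jacobian restricted to $u$. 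Concretely, $J_{B_j}(t)u = \Theta(t^{k_j^{\mathrm{bk}}})\,u' + o(\cdot)$, with $u'$ the (shared) dead direction at the block output. For an MLP block this is the computation in T3: $G_\ell = W_{\ell+1}^\top G_{\ell+1} W_{\ell+1} + (\text{terms vanishing under alignment})$, iterated over the block's internal layers, so that $k_j^{\mathrm{bk}}$ counts the weight factors along $u$ inside the block. For a pre-norm residual block I would write the Jacobian as $I + \partial f_j$. The shared-dead-direction hypothesis then means that the identity skip does not contribute along $u$ (the skip component is itself carried in the aligned factor). That leaves the branch rate, which is again the MLP count.

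The induction itself is mechanical once the lemma holds. The base case is the last block, whose output gradient $G_{\mathrm{out}}$ is $\Theta(1)$ along $u'$ at the output (the loss-side factor is non-degenerate, as in T3). In the step, the input of $B_i$ is the output of $B_{i-1}$, and chaining the lemma gives $\Theta(t^{2\sum_{j\ge i}k_j^{\mathrm{bk}}})$. The $\Theta$ constants multiply, and because there are finitely many blocks they stay bounded above and below. For the rank-sharing across blocks I would use the A--G duality T3a. It guarantees that the projector onto $u$ commutes with the KFAC factors in the limit, so the cross terms $u^\top G\, v$ with $v\perp u$ contribute only at strictly higher order and cannot lower the rate.

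The main obstacle is controlling those cross terms in the one-block lemma for residual blocks. The identity skip can in principle transport a non-dead component of the downstream gradient back onto $u$ at order $t^0$, which would destroy additivity. I would first try to show that, under the canonical-aligned hypothesis, $u$ is an invariant subspace of every block Jacobian to leading order, i.e.\ $P_u J_{B_j} P_{u^\perp} = O(t^{k_j^{\mathrm{bk}}+1})$. This would follow from the same commuting-projector argument used for T3. If it holds, mixed products in the chain are strictly subleading. The statement's exclusion of pure attention chains at depth $\ge 4$ (Remark~32) is consistent with this plan: there the softmax couples blocks, so the invariance fails and the lemma cannot be chained.
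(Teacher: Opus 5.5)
The paper does not prove this statement. It restates Thm.~30 of the theory paper and defers the proof there. The closest in-paper argument is the backward recursion in the T3 sketch, $G_\ell = W_{\ell+1}^\top G_{\ell+1}W_{\ell+1} + (\text{terms vanishing under canonical alignment})$, and your output-to-input induction follows the same idea.

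\textbf{Main gap: the cross terms.} Your proposal fails where it disposes of the terms that the sketch calls ``vanishing.'' Write $J_{B_j}u = a_j(t)\,u' + w_j(t)$ with $w_j\perp u'$. Then
\[
u^\top G_{\mathrm{in}(B_j)}u \;=\; a_j^2\,u'^\top G_{\mathrm{out}(B_j)}u' \;+\; 2a_j\,u'^\top G_{\mathrm{out}(B_j)}w_j \;+\; w_j^\top G_{\mathrm{out}(B_j)}w_j .
\]
Let $S_j := \sum_{m>j}k_m^{\mathrm{bk}}$. The first term is $\Theta(t^{2(k_j^{\mathrm{bk}}+S_j)})$. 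However, $G_{\mathrm{out}(B_j)}$ has $\Theta(1)$ eigenvalues on the live directions, so the last term is of order $\|w_j\|^2$. Your leakage bound $\|w_j\| = O(t^{k_j^{\mathrm{bk}}+1})$ only makes that term $O(t^{2k_j^{\mathrm{bk}}+2})$. This matches the main term's order when $S_j=1$ and dominates it when $S_j\ge 2$.

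A concrete counterexample: take three linear blocks with
\begin{itemize}
\item $J_1u = tu+t^2v$ and $J_1v=v$;
\item $J_2=J_3$ acting as $u\mapsto tu$, $v\mapsto v$;
\item $G=I$ at the output.
\end{itemize}
Then $J_3J_2J_1u = t^3u+t^2v$, so the input rate is $\Theta(t^4)$, not the predicted $\Theta(t^6)$. This happens even though every block has $k^{\mathrm{bk}}=1$ and your condition $P_uJ_1P_{u^\perp}=0$ holds exactly.

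That formula is also the wrong off-diagonal block. Backpropagation sees $u^\top J^\top\delta=\delta^\top Ju$, so the relevant block is $P_{u^\perp}JP_u$. That block is the transport of live gradient onto $u$ that you describe in words.

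So the one-block lemma cannot be local. The leakage must be $o(t^{k_j^{\mathrm{bk}}+S_j})$, i.e.\ it must beat the entire downstream product. The only uniform way to get this along a chain of arbitrary length is exact invariance of the splitting $\mathrm{span}(u)\oplus u^\perp$ under every block Jacobian along the approach. The diagonal canonical init of the noisy-bridge runs has this property. You need to take that invariance as the content of the canonical-aligned hypothesis. A perturbative $O(t^{k+1})$ bound does not compose.

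\textbf{Secondary points.}
\begin{itemize}
\item The A--G duality T3a does not supply the commuting projector. In the appendix setup, that commutation \emph{is} the canonical-aligned hypothesis.
\item Even exact commutation $[P_{u'},G_{\mathrm{out}}]=0$ removes only the middle term above, not $w_j^\top G_{\mathrm{out}}w_j$. A statement that the commutator vanishes as $t\to 0$ gives no rate at all.
\item For a pre-norm residual block, $J_{B_j}u = u + \partial f_j\,\partial\mathrm{LN}\,u$. The identity skip therefore contributes $u$ itself at order $t^0$. This is not a leakage term, and a shared dead direction does not remove it.
\item Under your definition, such a block has $k_j^{\mathrm{bk}}=0$ unless the branch cancels the skip along $u$ to order $t^{k_j^{\mathrm{bk}}}$. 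That cancellation is the ``one factor of $t$ per layer'' asserted in the T3 sketch. It must be stated as a hypothesis on the block or on the parametrisation of the approach, not inferred.
\end{itemize}

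With exact invariance and the residual-branch hypothesis in place, your induction goes through as written, including the multiplication of finitely many $\Theta$-constants.
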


\begin{corollary}[Quotient Fisher rate; \theorycitep, Cor.~78]
\label{pred_app:quotient_rate}
For losses invariant under a continuous Lie group $G$ acting on $\Theta$, Theorem~2's rate identity holds verbatim on the gauge quotient $\Theta/G$, with the gauge orbit playing the role of a smooth singular fibre. Projected SGD on $\Theta/G$ realises the quotient rate (Cor.~79). Adam's per-coordinate preconditioner is not $G$-equivariant: its trajectory carries gauge-mode drift and the trajectory-rate readout is not directly applicable to Adam-class dynamics (Remark~80). This corollary scopes the trajectory-rate-readout claims of \S\ref{app:experiments:grokking} and the Adam non-equivariance framing of the structural-correlation universality test in \S\ref{sec:exp:universality}.
\end{corollary}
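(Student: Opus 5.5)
The plan is to reduce the statement to Theorem~2 (T1) applied on a local slice for the group action, and then to read the optimiser claims off the way $G$ acts on the gradient. \textbf{Step 1 (slice).} Assume the $G$-action is smooth and proper, and free near $\theta_0$ (this is an assumption I add; otherwise restrict to the principal orbit stratum). By the slice theorem there is a submanifold $S \ni \theta_0$, transversal to the orbit $G\cdot\theta_0$, such that $S \to \Theta/G$ is a local chart. $G$-invariance of $K$ means $K|_S$ is the pushed-down loss $\bar K$ on the quotient, and $K|_S$ is analytic when the action is analytic. Differentiating $K(\exp(s\xi)\theta)=K(\theta)$ in $s$ gives $\nabla K(\theta)\perp \xi_\theta$ for every $\xi\in\mathfrak g$. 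For the Fisher, differentiating the invariance of the likelihood, $\log p_{\exp(s\xi)\theta}=\log p_\theta$, gives that each score is orthogonal to $\xi_\theta$. Hence $\xi_\theta \in \ker \fisher(\theta)$, and the orbit is a smooth singular fibre of the kind T2 considers.

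\textbf{Step 2 (rate on the quotient).} Define the quotient Fisher as $\bar\fisher := \fisher|_{H}$, where $H_\theta=(T_\theta G\theta)^\perp$ is the horizontal subspace. The covariance $\fisher(g\theta)=dg^{-\top}\fisher(\theta)dg^{-1}$ makes this independent of the slice and of the orbit representative. A dead direction $u$ on $\Theta/G$ lifts to a horizontal unit vector $u\in H_{\theta_0}$. Take $S$ tangent to $H_{\theta_0}$; then the slice curve and the straight line $\theta_0+tu$ agree to first order. Since $K$ is constant on orbits, the order of $K$ along the two curves coincides: $K=c\,t^{2k}+O(t^{2k+1})$. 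Now run the T1 argument verbatim inside $S$: $K''=2k(2k-1)c\,t^{2k-2}+O(t^{2k-1})$, combined with Fisher $=$ Hessian of $K$ on the horizontal block. This gives $u^\top\bar\fisher u=\Theta(t^{2(k-1)})$.

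\textbf{Step 3 (optimisers).} For projected SGD on $\Theta/G$, each update is $-\eta P_H \widehat{\nabla K}$. The population gradient is already horizontal by Step~1, and the projection removes the vertical part of the minibatch noise. To first order the iterates therefore follow a horizontal curve, i.e.\ a curve in the slice chart. This realises the quotient approach of Step~2 (Cor.~79). For Adam, the update is $D^{-1}\nabla K$ with $D$ a data-dependent diagonal matrix. Adam fails to respect the quotient if $D^{-1}$ does not preserve $H$. A single example suffices. Under the CE logit shift the gauge vector is $\mathbf 1$ on the logit biases, and $\nabla K\perp\mathbf 1$; but $\mathbf 1^\top D^{-1}\nabla K\neq 0$ whenever the entries of $D$ are unequal. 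This produces the vertical drift of Remark~80, so the straight-line parametrisation of T1 is not the one traced by Adam.

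\textbf{Main obstacle.} I expect the hard part to be Step~2's identification of the trajectory curve with the straight line of Theorem~2 to the order needed. The curvature of the slice introduces $O(t^2)$ vertical corrections, and I must check that these cannot lower the exponent $2(k-1)$ in $u^\top\fisher u$. I would try first to choose $S$ as the exponential image of $H_{\theta_0}$ under a $G$-invariant metric, so that the correction is $O(t^2)$ in vertical directions only. The vertical directions lie in $\ker\fisher$ at every point, hence contribute nothing to $u^\top\fisher u$ beyond $O(t^{2k-1})$ cross terms. A secondary obstacle is the isotropy and non-freeness at $\theta_0$, which I would handle by restriction to the principal stratum.
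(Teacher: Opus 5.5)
\textbf{Step 2 has a gap.} The paper gives no proof of this corollary. It imports it as Cor.~78 of the theory paper and restates it only to scope later claims, so your argument has to stand on its own.

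The inference ``the slice curve and $\theta_0+tu$ agree to first order; since $K$ is constant on orbits, the order of $K$ along the two curves coincides'' is invalid. Orbit-invariance transfers values of $K$ only between curves that lie on the same orbits at each $t$, and tangency does not give that. Take $G=\mathbb{R}$ translating $z$ on $\mathbb{R}^3$ and $K=(y-x^2)^2$. The horizontal line $(t,0,0)$ has $K=t^4$. The tangent curve $(t,t^2,0)$ lies in the flat slice $\{z=0\}$ and has $K\equiv 0$.

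The repair in your ``main obstacle'' paragraph does not close this. Vertical vectors lie in $\ker F(\theta)$, so they drop out of the quadratic form at a fixed base point. But an $O(t^2)$ correction that is vertical only at $\theta_0$ leaves the orbit at $O(t^3)$. Since $\nabla K=O(t)$ near the minimum, this changes $K$ by $O(t^4)$, which is already the size of $c\,t^{2k}$ at $k=2$. It also moves the point at which $u^\top F u$ is evaluated. In addition, the $t^2$ term $-\frac12\Gamma(u,u)t^2$ of a horizontal geodesic is not vertical in general.

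The difficulty is self-inflicted. Take the slice to be a small disc in the affine plane $\theta_0+H_{\theta_0}$. It is transversal to the orbits near $\theta_0$, so under your free-and-proper assumption it is a slice, and the chart's straight line is literally $\theta_0+tu$. Because $F$ descends to the quotient, the quotient Fisher along this curve is exactly $u^\top F(\theta_0+tu)u$. T1, invoked as a black box, then gives $\Theta(t^{2(k-1)})$ with nothing left to compare. Alternatively, keep a curved chart, define the KL order along that chart's own straight line, and apply T1 to the pulled-back family. In either case, do not re-run T1's sketch: its ``Fisher $=$ Hessian of $K$'' step holds at the truth, not at $\theta(t)$.

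\textbf{Further points.}
\begin{enumerate}
\item \emph{Hypothesis.} The descent of $F$, and $\xi_\theta\in\ker F(\theta)$ along the approach, need invariance of the model, $p_{g\theta}=p_\theta$, which you use without stating it. Invariance of $K$ alone, which is what the statement says, only puts the orbit in $K^{-1}(0)$ and gives $\xi_{\theta_0}\in\ker F(\theta_0)$ at the minimum. Make model invariance the explicit gauge hypothesis.
\item \emph{Projected SGD.} The phrase ``a horizontal curve, i.e.\ a curve in the slice chart'' is wrong, because the horizontal distribution is generally non-integrable; screw motions on $\mathbb{R}^3$ already give a contact distribution. More importantly, neither property makes the iterates a straight line in the chart. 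Passing from T1 to a curved approach that is merely tangent to $u$ is exactly what the example above rules out. So ``projected SGD realises the quotient rate'' needs dynamical input about how the iterates enter the asymptotic regime. That input is what the cited Cor.~79 supplies, and your sketch does not provide it.
\item \emph{Adam.} The quantity $\mathbf 1^\top D^{-1}\nabla K$ is nonzero generically, not whenever $D$ has unequal entries: $\nabla K=(1,-1,0)$ with $D=\mathrm{diag}(1,1,2)$ gives zero. Also, the logit shift acts by translations ($dg=I$), so gradients, Adam's moment estimates and steps, and $F$ are all unchanged along its orbits. Your example therefore exhibits Euclidean gauge drift, but not by itself a corrupted rate readout. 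That needs a symmetry with non-trivial $dg$, such as ReLU rescaling, where per-coordinate normalisation genuinely fails to commute with the action.
\end{enumerate}
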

 
\subsection{Observable computation protocols}
\label{app:experiments:observable_protocols}

Self-contained per-observable protocols, summarising what gets computed and at what cost.

\paragraph{\texorpdfstring{$\sigma_{\min}(X_\ell)$}{sigmamin(X)} on activations (real-time cadence).}
For each transformer block $\ell$, capture the residual-stream activation matrix $X_\ell \in \reals^{N \times h}$ in a single forward pass over $N$ calibration tokens (we use $N \le 1024$ for grokking and $N = 8192$ at LLM scale via WikiText calibration). Compute the SVD of $X_\ell$; report $\sigma_{\min}$, $\sigma_{\max}$, the bottom singular vector $u_\ell^{\mathrm{res}}$, and effective rank. For widths $h > 4096$, use chunked covariance accumulation $C_\ell = \sum_{\mathrm{chunks}} X_\ell^\top X_\ell$ and eigendecompose $C_\ell$ to recover the singular values; this keeps peak memory bounded on a single 3090. Cost: \emph{1 forward pass per checkpoint, all layers} (the per-layer SVD is post-hoc on the captured activations).

\emph{$\sigma_{\min}$ vs.\ the rank-aware $\sigma_{(r_0)}$.} We report the literal smallest singular value $\sigma_{\min}(X_\ell)$. Along a singular approach the dead direction is the one decaying to zero, so it stays the smallest singular value even once it sinks below the floating-point floor, and $\sigma_{\min}$ tracks it. The rank-aware variant $\sigma_{(r_0)}$ (smallest singular value above the relative floor $\sigma_{\max}\,\varepsilon_{\mathrm{machine}}$) coincides with $\sigma_{\min}$ on the testbeds here; it is the read to prefer only when a \emph{trivial} algebraic kernel sits below the geometric dead direction (e.g.\ the post-final-LayerNorm $\mathbf{1}$-direction at $\sigma = 0$ identically; Cor.~58), where the literal $\sigma_{\min}$ instead locks onto the non-geometric kernel. Substituting $\sigma_{(r_0)}$ on the decaying-dead-direction testbeds \emph{lowers} the structural correlation, because it over-skips the below-floor dead direction in favour of a higher non-decaying one: $\rho(\lambda_{\min}^+(G), \sigma)$ falls from $+0.83$ to $+0.72$ on Barak and from $+0.85$ to $+0.68$ on Nanda SGD$+$MSE. So $\sigma_{\min}$ is the correct activation-side read for the geometric dead direction; the rank correlation stays floor-robust because $\sigma_{\min}$ and $\lambda_{\min}^+$ shrink together.

\paragraph{\texorpdfstring{$u^\top G u$}{u->p G u} at the \texorpdfstring{$\sigma_{\min}$}{sigmamin} direction (checkpoint cadence).}
Identify the bottom singular direction $u = u_\ell^{\mathrm{res}}$ from the activation-side $\sigma_{\min}$ step above. Compute the per-sample gradient of the loss with respect to layer-$\ell$ pre-activations $\delta_\ell^{(i)} = \nabla_{a_\ell} L^{(i)}$ via one backward pass (auto-diff on the shared computation graph from the $\sigma_{\min}$ step). The directional Fisher value is $u^\top G_\ell u = (1/n) \sum_i (u^\top \delta_\ell^{(i)})^2$. Cost: \emph{1 forward + 1 backward pass per (checkpoint, direction)} = 1 FBP.

\paragraph{\texorpdfstring{$\lambda_{\min}(G_\ell)$}{lambdamin(G)} via full eigendecomposition (periodic cadence).}\footnote{The Hessian-eigenvalue-tracking lineage \citep{SagunEvciGuney17,GhorbaniKrishnanXiao19,YaoGholamiKeutzerMahoney20} chases primarily the top of the spectrum (sharpness) via Lanczos / Hutchinson estimators; we evaluate the same kind of estimator on the bottom of the spectrum, with the rate framework (Theorem~21) selecting the diagnostic direction without a full eigendecomposition when the structurally-determined direction is known.}
Capture per-sample gradients $\{\delta_\ell^{(i)}\}_{i=1}^n$ as in the $u^\top G u$ step but at $n$ samples (no fixed $u$); form the empirical Gram $G_\ell = (1/n) \sum_i \delta_\ell^{(i)} (\delta_\ell^{(i)})^\top \in \reals^{h \times h}$ and eigendecompose. As a sum of outer products, $G_\ell$ is \emph{positive semi-definite} (PSD) by construction: $v^\top G_\ell v = (1/n)\sum_i (v^\top \delta_\ell^{(i)})^2 \ge 0$ for all $v$, so the theoretical $\lambda_{\min}(G_\ell) \ge 0$ always. The smallest eigenvalue is $\lambda_{\min}$; the full spectrum $\{\lambda_j\}$ exposes \emph{multiple} dead directions and supports the per-layer rate validation that exercises Theorem~21's prediction directly. Practical requirement: $n / h \ge 100$ for stable estimation; smaller $n$ gives CV $> 100\%$ and can invert the rate-fit sign. Cost: \emph{$\sim n / B$ FBPs to collect the gradient samples} (where $B$ is batch size; e.g.\ for $h = 128$, $n = 12{,}800$, $B = 128$, that is $100$ FBP) \emph{plus $O(h^3)$ eigendecomposition} (small at $h \le 500$, prohibitive at LLM widths).

\emph{Numerical recipe at the $\lambda_{\min}(G_\ell)$ step.} Empirical Grams from few samples are rank-deficient ($\mathrm{rank}(G_\ell) \le \min(n, h)$), so the smallest eigenvalues are numerically sensitive. The default in the released pipeline is \emph{Tikhonov regularization} (replace $G_\ell$ with $G_\ell + \varepsilon I$ before calling \texttt{eigvalsh}, with $\varepsilon = 10^{-8}$ by convention), which guarantees all eigenvalues are $\ge \varepsilon$ so the solver does not return nonsense negatives from fp32 round-off. This is the right default for fp32, but it \emph{clamps} all physically-smaller eigenvalues to $\varepsilon$. At fp64 the round-off floor is far below $10^{-8}$, and we therefore switch off Tikhonov entirely and read the smallest \emph{positive} eigenvalue: the literal \texttt{eigvalsh} minimum can be slightly negative from rank-deficient round-off, so the smallest positive eigenvalue is the numerically faithful approximation to the PSD minimum in a setting where the theoretical value approaches zero. Concretely, we discard eigenvalues whose magnitude is below the fp64 round-off floor relative to the spectral norm ($\sim \|G\|_2 \cdot 10^{-15}$) and report the smallest above this threshold. The protocol-cost gap between the fp32+Tikhonov default and this fp64+smallest-positive recipe is $|\rho|{=}0.10$ vs $0.96$ on a Barak-width complexity-ranking sanity check.

\paragraph{Sample-budget gate (depth-aware $n/d$).} The $n/h \ge 100$ rule above is the default minimum bar for rank-correlation observables. For magnitude reads at deep singular checkpoints the rule tightens with $\sigma_{\min}$ depth: at well-conditioned checkpoints ($\sigma_{\min}$ within $\sim 1$ OOM of $\sigma_{\max}$) $n/h{\ge}100$ gives CV $\le 0.015$ on $\lambda_{\min}^+$; at deep singular checkpoints ($\sigma_{\min}$ several OOM below $\sigma_{\max}$) even $n/h{=}100$ retains CV $\approx 1$ on $\lambda_{\min}^+(G)$; magnitude estimates become unrecoverable at any feasible $n$, the fp-precision floor margin $\sigma_{\min}/(\sigma_{\max}\sqrt{\varepsilon_{\mathrm{dtype}}})$ is the binding constraint. Rank-correlation observables (Spearman $\rho$ across checkpoints) are robust to this floor (rank order survives CV $\sim 1$ on individual magnitudes), which is why the structural correlation $\rho(\lambda_{\min}^+(G), \sigma_{\min})$ holds at $\rho = +0.832 \pm 0.126$ on Barak Phase A across $30$ seeds even at $n/D{=}2$ where slope-fits degrade. \emph{An analytical CV upper bound from first-order perturbation theory plus Davis--Kahan} is $\mathrm{CV}(\lambda_{\min}^+) \le (1/\sqrt{N}) \cdot (\sigma_{\max}/\sigma_{\min})$, typically $5$--$10\times$ pessimistic on structured $G$, much tighter than the operator-norm Wishart bound $\sqrt{D/N}\cdot(\sigma_{\max}/\sigma_{\min})^2$ which is $50$--$500\times$ pessimistic on the same data.

\paragraph{LLC via SGLD sampling (offline cadence).}
External estimator from the \texttt{devinterp} library \citep{devinterp,LauFurmanWangMurfetWei25}; we use the tool as published. The SGLD budget is our own calibration: $4$ chains $\times$ ($100$ burn-in $+ 100$ draws $\times 10$ thinning) $= 4{,}400$ SGLD steps, chosen as the smallest configuration passing Gelman--Rubin $\hat R \le 1.10$ and lag-1 ESS $\ge 25$ per chain on our testbed checkpoints, then sanity-checked against a $9\times$-larger ($40{,}800$ FBP) saturation budget. Each SGLD step is approximately one forward + backward pass on a calibration batch (1 FBP). Inverse temperature is set to the standard $B/\ln B$ default. Per-task $(\mathrm{lr}_{\mathrm{SGLD}}, \gamma)$ are locked via a $5\times 5$ grid sweep at fresh checkpoints (seed 42) per testbed, selecting the single config that minimises CV subject to the $\hat R$ + ESS gate. The per-testbed locks used by this paper, with full gate diagnostics, are tabulated in App.~\ref{app:experiments:llc_calibration}; in summary: \textbf{Aoyagi anchor} $(10^{-3}, 300)$ at $H{=}3, r{=}2$ seed 42; \textbf{DLN-RRR} $(10^{-3}, 300)$ at $h{=}64, r{=}2$ seed 42; \textbf{Nanda} $(10^{-3}, 300)$ at $d_{\mathrm{model}}{=}128$ for the $5$-seed re-run + width sweep (per-width locks $(3 \cdot 10^{-4}, 1000)$ at $d{=}32$, $(10^{-4}, 1000)$ at $d{=}64$); \textbf{Barak} $(10^{-4}, 100)$ for the legacy v4 $30$-seed run cited in \S\ref{app:experiments:grokking}, re-locked to $(10^{-3}, 1000)$ under the current gate (re-run pending).

\paragraph{fp64 + log-space recipes for production.} The cleanup-phase precision floor on grokking trajectories can be pushed arbitrarily low for monitoring purposes without touching the training path. Two complementary recipes:
\begin{itemize}\itemsep=1pt
\item \emph{fp64 at the measurement only.} Keep training in the native precision (bf16/fp16/fp32). Before the SVD or eigendecomposition, upcast the captured activations or Gram matrix: \texttt{X.to(torch.float64)}. At LLM widths ($h \le 8192$, $N \le 8192$), fp64 SVD costs $\sim 1$\,s and $\sim 260$\,MB extra memory on a single 3090, negligible compared to the training-step cost, and removes the $\sim 10^{-7}$ fp32 precision floor entirely.
\item \emph{Log-space accumulation.} For $u^\top G u = \sum_i (u^\top \delta_i)^2 / n$ the summands span orders of magnitude near the singular minimum; a naive sum loses small contributions to cancellation. Use \texttt{torch.logsumexp} on $\{2 \log | u^\top \delta_i |\}$ to compute $\log(n u^\top G u)$ in log-space, then expose $u^\top G u$ as its exponential when needed. For single-direction probes the sharpness ratio is best reported in log-space directly.
\end{itemize}
Both recipes are a measurement-pass cost, not a training cost, and are applied uniformly in the released code at the SVD / backward-capture call sites.

\begin{figure}[ht]
\centering
\includegraphics[width=\textwidth]{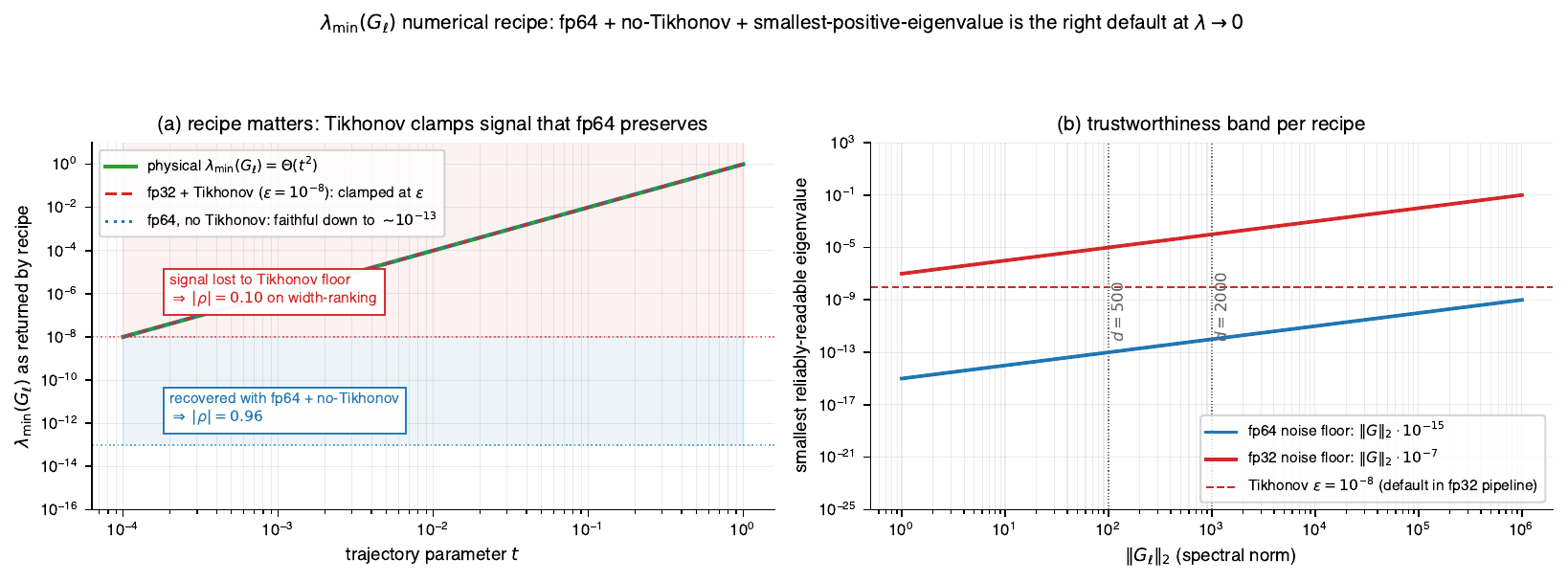}
\caption{$\lambda_{\min}(G_\ell)$ numerical recipe: fp64 $+$ no-Tikhonov $+$ smallest-positive eigenvalue is the right default at $\lambda \to 0$. (a)~Recipe matters: when the physical signal is below the Tikhonov floor $\varepsilon = 10^{-8}$, fp32 $+$ Tikhonov clamps and loses signal ($|\rho| = 0.10$ on width-ranking); fp64 $+$ no-Tikhonov is faithful down to $\sim 10^{-13}$ ($|\rho| = 0.96$). (b)~Trustworthiness band: smallest reliably-readable eigenvalue per recipe vs $\|G_\ell\|_2$.}
\label{fig:kfac_numerical_recipe}
\end{figure}
 \subsection{LLC SGLD calibration audit}
\label{app:experiments:llc_calibration}

The cost-and-comparison story between DDS and calibrated LLC depends on what the LLC numbers represent on each testbed. We document what is calibrated where, the calibration gate, and the per-cell locks, so the reader can assess the LLC baseline at the cell-level granularity reviewers typically ask for.

\paragraph{Calibration gate.} Every paper-claimable LLC reading uses a $4$-chain $\times$ $(100$ burn-in $+ 100$ draws $\times\, 10$ thin$)$ $= 4{,}400$ SGLD steps per call (the ``recommended budget''), with per-task $(\mathrm{lr}_{\mathrm{sgld}}, \gamma)$ locked via a $5\times 5$ grid sweep against the acceptance gate \emph{Gelman--Rubin} $\hat R \le 1.10$ \emph{and} \emph{lag-1 ESS} $\ge 25$ \emph{per chain}, with min-CV selection among accepted configurations. Inverse temperature uses \texttt{devinterp.utils.default\_nbeta}$=B/\ln B$. A $9\times$ saturation gate ($40{,}800$ SGLD steps) confirms the locked config is operating in the saturation regime. Calibrations that fail the gate are not paper-claimable; they are flagged below.

\paragraph{Per-testbed calibration matrix.}

\begin{center}\footnotesize
\setlength{\tabcolsep}{4pt}
\renewcommand{\arraystretch}{1.2}
\begin{tabular}{p{2.45cm}|c|c|c|c|p{3.9cm}}
\toprule
Testbed & \shortstack{Per-cell\\calib.?} & \shortstack{Cells w/\\locks} & \shortstack{Gate\\pass?} & \shortstack{$9{\times}$\\sat.} & Locks $(\mathrm{lr}_{\mathrm{sgld}}, \gamma)$ \\
\midrule
Aoyagi 2005 anchor (14 cells) & transfer$^\ddagger$ & $1$ & yes & yes & $(10^{-3}, 300)$ at $H{=}3, r{=}2, \sigma{=}0.1$ \\
Aoyagi 2024 DLN (24 cells) & transfer$^\ddagger$ & $1$ & yes & yes & $(10^{-3}, 300)$ at $h{=}64, \mathrm{rd}{=}2$ \\
Nanda width sweep (4 widths) & \textbf{per-width} & $4$ & all yes & all yes & $(3{\cdot}10^{-4}, 1000)$, $(10^{-4}, 1000)$, $(10^{-3}, 300)$, $(10^{-3}, 300)$ at $d{=}32, 64, 128, 256$ \\
Nanda 5-seed (single $d{=}128$) & per-cell & $1$ & yes & yes & $(10^{-3}, 300)$ \\
Barak v4 30-seed (existing) & no & $1$ & \emph{fails}$^\dagger$ & --- & $(10^{-4}, 100)$ from old protocol \\
\bottomrule
\end{tabular}
\end{center}

\noindent$^\dagger$Re-locked at $(10^{-3}, 1000)$ on the current gate (CV$=0.011$, $\hat R{=}0.997$, ESS$_{\min}{=}87.6$, $9\times$ saturation $\Delta=0.003$); the published $30$-seed sweep numbers were produced under the old lock and are used in the paper only for DDS-vs-DDS rank-correlation reads (no absolute-LLC magnitude claims).

\noindent$^\ddagger$The transferred lock is itself fully calibrated at the named cell ($5\times 5$ grid, $\hat R \le 1.10$ and ESS $\ge 25$ gate with min-CV selection, $9\times$ saturation confirmed) before being reused across the sweep; the per-cell recalibration audit below shows the transfer is the most defensible single choice.

\paragraph{Reading the calibration matrix.} On the Nanda width sweep the four widths each have their own gate-passing $5\times 5$-grid lock with a $9\times$ saturation check, so that head-to-head is fully per-cell calibrated. On the closed-form RLCT testbeds (Aoyagi 2005 anchor, Aoyagi 2024 DLN) the LLC uses a single lock that is itself fully calibrated at $H{=}3, r{=}2$ (resp.\ $h{=}64, \mathrm{rd}{=}2$): a $5\times 5$ grid selected by the $\hat R \le 1.10$ and per-chain ESS $\ge 25$ gate with min-CV, with the $9\times$ saturation check passing ($\hat R{=}0.996$, ESS$_{\min}{=}70.9$, CV$=0.014$, saturated). That lock is then transferred across the sweep. The per-cell recalibration audit below confirms that transferring the fully-calibrated lock, rather than recalibrating every cell, is the most defensible single choice and leaves the rank-based readings unchanged.

\paragraph{DLN-RRR per-cell calibration audit.} To test whether per-cell recalibration on the DLN $24$-cell sweep ($h \in \{16, 20, 24, 32, 64, 128\}$, $r \in \{1, 2, 3, 4\}$) eliminates the documented across-$h$ drift, we ran two further calibrations alongside the original transfer and compared all three on the same trained models.
\begin{center}\footnotesize
\setlength{\tabcolsep}{4pt}
\renewcommand{\arraystretch}{1.2}
\begin{tabular}{p{2.3cm}|p{1.95cm}|c|c|p{2.3cm}}
\toprule
Calibration & Locked $(\mathrm{lr}_{\mathrm{sgld}}, \gamma)$ & \shortstack{LLC range\\$(h{=}16{\to}128)$} & \shortstack{Drift across\\$h$ (fixed $r$)} & Selection rule \\
\midrule
Transfer (original) & $(10^{-4}, 100)$ fixed & $3.7 \to 6.7$ & $46\%{-}58\%$ & from $h{=}64, r{=}2$ grid \\
$\gamma{=}100$ + per-cell lr & $(10^{-3}, 100)$ all cells & $0.16 \to 0.50$ & $118\%{-}124\%$ & min-CV over $1{\times}5$ \\
Free-grid per-cell & varies; mostly $\gamma{=}300{-}1000$ & $0.014 \to 0.081$ & $161\%{-}200\%$ & min-CV over $5{\times}5$ \\
\bottomrule
\end{tabular}
\end{center}
\noindent All three calibrations show substantial drift across $h$; per-cell recalibration does not eliminate it, and the most aggressive recalibration (free-grid min-CV) actually amplifies it. The drift is therefore \emph{structural to local LLC at this testbed, not an artefact of any single calibration choice}, consistent with the local-LLC-at-$w^\star$ vs global-RLCT distinction of \citet{LauFurmanWangMurfetWei25}: the global RLCT is constant in $h$ above saturation while the local-LLC retains a depth-dependent reading by design. The Aoyagi closed-form is the global RLCT; SGLD-based estimators measure local LLC. The min-CV selection criterion biases toward tighter-chain (over-localised) configurations: the free-grid lock typically picks $\gamma \in \{300, 1000\}$, probing too small a neighbourhood of $w^\star$ and yielding $\sim\!100{\times}$ smaller absolute LLC than the transfer with \emph{wider} relative drift. The $\gamma{=}100$ controlled recalibration, varying only $\mathrm{lr}_{\mathrm{sgld}}$, sits between the two and is the most defensible single choice we examined; all $24$ cells lock at $\mathrm{lr}_{\mathrm{sgld}}{=}10^{-3}$ (the upper end of the grid), $22/24$ pass the $9{\times}$ saturation check.

\paragraph{Material consequence for paper claims.} The paper's DLN-RRR head-to-head reads cross-cell Spearman $|\rho|$ between LLC and DDS observables (\S\ref{sec:exp:universality}), which is rank-invariant to LLC magnitude. All three calibrations produce the same cell-level ordering (LLC monotonically increases with $h$ at fixed $r$ and decreases with $r$ at fixed $h$), so the paper's $|\rho|$ readings are unaffected by the calibration-magnitude question. The cost comparison ($\sim\!130{\times}$ cheaper than calibrated LLC) is also unaffected. We retain the transfer lock as the canonical reading because (i) it is the publicly-documented protocol in \citet{HooglandWangFarrugiaRoberts24}, (ii) it produces the narrowest of the three across-$h$ drifts, and (iii) the paper's claims do not depend on absolute LLC magnitudes.

\paragraph{Closed-form recovery on the Aoyagi anchor (full standard).} The Aoyagi 2005 anchor admits a stricter calibration check than rank-tracking: whether calibrated LLC \emph{recovers} the closed-form $\lambda$ in magnitude. Under a target-$\lambda$ calibration at full standard (5$\times$5 grid, $\hat R \le 1.10$ and per-chain ESS $\ge 25$ gate, $9\times$ saturation confirmed; lock $(\mathrm{lr}_{\mathrm{sgld}}, \gamma){=}(10^{-3}, 5)$, $\hat R{=}1.00$, ESS$_{\min}{=}51$), calibrated LLC recovers $\lambda$ at $99\%$ mean across the $14$ cells (range $61$--$103\%$; e.g.\ $\hat\lambda_{\mathrm{LLC}}{=}8.98$ vs closed-form $9$, and $25.79$ vs $25$). Magnitude recovery is a separate statement from rank-tracking: the same per-cell values give a cross-cell Spearman vs $\lambda$ of only $0.717$, because near-tied $\lambda$ values reshuffle ranks under sub-percent measurement error. The closed-form magnitude recovery is the trustworthy ground-truth check; the cross-cell Spearman is the weaker, sanity-gate reading.
 
\subsection{Dead-Direction Signatures (DDS): closed-form RLCT anchor for the universality + cost claims}
\label{app:experiments:dln_aoyagi_anchor}

This section anchors the broader DDS narrative on closed-form RLCT ground truth. The narrative has three layered claims:

\begin{enumerate}\itemsep=0pt
\item \textbf{Universality.} The structural correlation $\rho(\lambda_{\min}^{+}(G), \sigma_{\min})$ holds across architectures, optimizers, losses, and grok-vs-no-grok regimes (Section~\ref{app:experiments:bridge_structural_correlation}: $\rho \ge {+}0.75$ on canonical bridge, Barak SGD$+$MSE, Nanda SGD$+$MSE, Nanda AdamW$+$CE).
\item \textbf{Cost ordering.} DDS is $10^{3}$--$10^{4}$ times cheaper than calibrated LLC at every checkpoint on the small-model testbeds evaluated here (range extends to $10^{5}\times$ on the longer Pile-LM chain): $\sigma_{\min}$ via SVD $\sim 1$\,ms; $\lambda_{\min}^{+}(G)$ via full eigendecomposition $\sim 14$\,ms; $\log\det^{+}(G)$ from the same eigendecomposition is free; locked-config SGLD-LLC at the converged checkpoint $\sim 25$\,s on small models, $\sim 5$\,min on Pile-LM (Sections~\ref{app:experiments:tms}, \ref{app:experiments:grokking}, \ref{app:experiments:dln_aoyagi_anchor}).
\item \textbf{Quantitative anchor (this section).} On the rectangular 2-matrix RRR family with closed-form RLCT (Aoyagi \& Watanabe 2005), DDS rate-chain observables rank-track the analytical $\lambda$ across the $14$ anchor cells at cross-cell $|\rho| \in [0.95, 0.98]$. On the square 3-matrix DLN-RRR family (Aoyagi 2024 + Lau et al.\ 2023 App.~I) the analytical $\lambda$ is constant above saturation ($200 = d^2/2$ in both $h$ and rank deficit), and DDS observables track that constancy while calibrated LLC drifts; cross-cell $\rho$ vs $\lambda$ there is a tied-rank artifact that carries no rank-tracking signal. The transformer extension below (Nanda modular-addition width sweep, $4$ widths $\times\, 30$ seeds, $101$ grokked) shows the same observables track $d_{\mathrm{model}}$ at $|\rho| \in [0.62, 0.96]$ with the framework-predicted sign on every observable.
\end{enumerate}

The three \emph{Dead-Direction Signatures (DDS)} observables are all framework-derived: $\sigma_{\min}(X_\ell)$ (activation-side dual), $\lambda_{\min}^{+}(G)$ (rate), and $\log\det^{+}(G)$ (volume). The two RRR families chosen here remove LLC-calibration drift as a confound: $\lambda$ is closed-form (no SGLD), and the testbeds are small enough that per-testbed LLC calibration is tractable (we run a 5$\times$5 grid + 9$\times$ saturation gate at one cell per testbed, lock the result, and re-run the sweep at the locked config).

\paragraph{Framework status of each observable.} The \emph{rate-chain observables} ($\lambda_{\min}^{+}(G)$ and $\log\det^{+}(G)$) are derived from Theorem~2 (Fisher decay $\Theta(t^{2(k-1)})$ at the singular minimum) and Corollary~9 (volume scaling). The framework predicts \emph{negative} cross-cell correlation with Aoyagi $\lambda$: more singular $\to$ smaller Fisher eigenvalue $\to$ smaller volume of the Fisher's image. $\sigma_{\min}(X_\ell)$ is the activation-side dual via Corollary~25; on a dimension-fixed boundary layer it reads the truth's rank (positive sign across the grid).

\paragraph{Setup: Aoyagi 2005 anchor (rectangular 2-matrix RRR, 2D grid).} Teacher matrix $M^{*} \in \mathbb{R}^{N \times M}$ of rank $r$ with $M{=}10$ (input), $N{=}5$ (output), Gaussian noise $\sigma{=}0.1$. Model $W_2 W_1$ with $W_1 \in \mathbb{R}^{H \times M}$, $W_2 \in \mathbb{R}^{N \times H}$. We sweep the full 2D grid $H \in \{2,3,4,5\}$, $r \in \{1, \ldots, \min(N, H)\}$, giving $14$ realisable cells (truth rank $\le$ model capacity throughout). Canonical-aligned init at $t_0{=}0.5$; full-batch SGD, lr $=10^{-2}$, $50{,}000$ steps; $5$ seeds per cell. Aoyagi \& Watanabe's Case~3 condition $N+H<M+r$ holds across the entire grid, giving the closed form $\lambda = (NH + Mr - Hr)/2$ taking $14$ distinct values in the range $\lambda \in [9, 25]$:

\begin{center}
\small
\begin{tabular}{c|cccc}
\toprule
$\lambda$ & $H{=}2$ & $H{=}3$ & $H{=}4$ & $H{=}5$ \\
\midrule
$r{=}1$ & $9$  & $11$  & $13$ & $15$ \\
$r{=}2$ & $13$ & $14.5$ & $16$ & $17.5$ \\
$r{=}3$ & ---  & $18$  & $19$ & $20$ \\
$r{=}4$ & ---  & ---   & $22$ & $22.5$ \\
$r{=}5$ & ---  & ---   & ---  & $25$ \\
\bottomrule
\end{tabular}
\end{center}

\paragraph{DDS vs $\lambda$ across the 14 cells.} Spearman $\rho$ between each DDS observable's cell-mean and Aoyagi $\lambda$, sorted by $|\rho|$:

\begin{center}
\small
\begin{tabular}{l|r|l}
\toprule
Observable & $\rho$ vs $\lambda$ ($n{=}14$) & framework reading \\
\midrule
$\lambda_{\min}^{+}(G_{h_2})$ & $\bm{-0.978}$ & Fisher decay: more singular $\to$ smaller \\
$\log\det^{+}(G_{h_2})$ & $-0.947$ & volume class: more singular $\to$ smaller \\
$\sigma_{\min}(h_2)$ & $+0.895$ & truth-rank readout (positive sign, see below) \\
$\log\det^{+}(G_{h_1})$ & $-0.103$ & $h$-extensive at $h_1$; see scope \\
\bottomrule
\end{tabular}
\end{center}

\paragraph{Sign structure follows the rate chain.} Aoyagi $\lambda$ is roughly the dimension of the singular fiber, so larger $\lambda$ means \emph{more} degenerate parameter manifold. The Fisher and volume observables decay together as the singularity deepens (Theorem~2, Corollary~9): $\lambda_{\min}^{+}(G)$ measures the sharpest direction (rate $\Theta(t^{2(k-1)})$), $\log\det^{+}(G)$ measures the volume of the Fisher's image, and the two are different rungs of the same chain (\(\rho > 0\) between them across the grid; \(\rho < 0\) for each vs Aoyagi $\lambda$). $\sigma_{\min}(h_2)$'s positive sign is the truth-rank readout: at the output activation (dimension fixed at $N{=}5$), the smallest singular value is set by the truth's rank rather than by parameter-manifold dimensionality (rank-1 truth $\Rightarrow$ output is rank-1 $\Rightarrow$ $\sigma_{\min}(h_2) \to 0$; full-rank truth $\Rightarrow$ $\sigma_{\min}(h_2) > 0$). Higher $r$ increases both Aoyagi $\lambda$ \emph{and} $\sigma_{\min}(h_2)$, hence the positive correlation. The interior-layer counterpart $\sigma_{\min}(h_1)$ is dominated by dead-direction descent and reads with the opposite sign. Calibrated LLC for this anchor is reported in the baseline comparison below; under a target-$\lambda$ calibration it recovers the closed-form $\lambda$ at $\sim\!99\%$ mean across the $14$ cells (App.~\ref{app:experiments:llc_calibration}).

\paragraph{Layer-choice caveat.} The $h$-extensive $\log\det^{+}$ summand at the interior layer $h_1$ ($\dim h_1 = H$, varies across cells) gives $\rho{=}{-}0.103$; at fixed $\lambda$ the values differ between $(H{=}2, r{=}2)$ and $(H{=}4, r{=}1)$ by a factor proportional to $h$. The same observable on $h_2$ (output, dimension $N{=}5$ everywhere) does not share the confound and reads $-0.947$. \emph{When picking DDS observables for cross-cell ranking on RRR-style grids, prefer the dimension-fixed boundary layer for sharpness-style observables.}

\paragraph{Noise-level robustness.} Aoyagi $\lambda$ is asymptotic in $n \to \infty$ and does not depend on the additive noise $\sigma$; the DDS observables \emph{do} depend on $\sigma$ at finite $n$, so the natural robustness check is whether the cell-ordering survives a $\sigma$ sweep. We re-ran the full 14-cell grid at $\sigma \in \{0.05, 0.1, 0.2\}$ ($210$ runs total) and report Spearman $\rho$ versus Aoyagi $\lambda$ at each $\sigma$:

\begin{center}
\small
\begin{tabular}{l|rrr|r}
\toprule
Observable & $\rho_{\sigma{=}0.05}$ & $\rho_{\sigma{=}0.1}$ & $\rho_{\sigma{=}0.2}$ & $|\rho|$ range \\
\midrule
$\lambda_{\min}^{+}(G_{h_2})$               & $-0.881$ & $-0.978$ & $-0.903$ & $0.097$ \\
$\log\det^{+}(G_{h_2})$                     & $-0.943$ & $-0.947$ & $-0.723$ & $0.224$ \\
$\sigma_{\min}(h_2)$                        & $+0.895$ & $+0.895$ & $+0.895$ & $0.000$ \\
\bottomrule
\end{tabular}
\end{center}

The $\sigma_{\min}(h_2)$ reading is $\sigma$-invariant (range $0.000$): a structural rank-correlation depending only on the integer rank-deficit pattern. The Fisher-side observables ($\lambda_{\min}^{+}$, $\log\det^{+}$) drift slightly with $\sigma$: at higher noise the trained $w^{*}$ sits further from the singular locus, softening the Fisher-decay signal. The high-noise drop on $\log\det^{+}(G_{h_2})$ at $\sigma{=}0.2$ ($-0.723$) is the largest cross-$\sigma$ swing in the table, in the predicted direction: more noise $\to$ less complete approach to the singular locus $\to$ less negative $\log\det^{+}$ at fixed $\lambda$ $\to$ rank correlation softens.

\paragraph{Per-cell rank consistency vs LLC at fixed $(H, r)$.}
A common reviewer concern on cross-cell Spearman read on a structurally varying grid is that the cross-cell $|\rho|{=}0.95$ headline is the easiest possible setup for a monotone observable: cells differ in $(H, r)$ which drives both DDS and LLC, and any sane observable will rank-track such structural variation. The orthogonal cut is the per-cell test: at fixed $(H, r)$, do DDS observables and calibrated LLC agree on the within-cell ranking of the $5$ seeds $\times$ $3$ noise levels = $15$ runs? Within a cell, analytical Aoyagi $\lambda$ is constant by construction (rank ties on the y-axis), so the test is necessarily against LLC or against val\_loss as a realised-difficulty proxy. Across the $14$ cells of the Aoyagi 2005 anchor at the boundary layer $h_2$, the per-cell Spearman $\rho$ vs locked-config LLC has medians $-0.40$ for $\lambda_{\min}^{+}(G_{h_2})$ (IQR $[-0.60, -0.07]$, $0\%$ of cells positive), $-0.30$ for $\log\det^{+}(G_{h_2})$ (IQR $[-0.60, +0.05]$, $29\%$ positive), and $+0.10$ for $\sigma_{\min}(h_2)$ (IQR $[-0.22, +0.67]$, $50\%$ positive). The Fisher-side rate-chain observables ($\lambda_{\min}^+$, $\log\det^+$) carry the framework-predicted negative sign within cells (DDS smaller $\leftrightarrow$ LLC larger, in the same direction the cross-cell test reports) at modest but non-trivial magnitude. The within-cell magnitude ($|\rho|$ medians $0.30$--$0.40$) is much smaller than the cross-cell magnitude ($|\rho|{=}0.95$), as expected: cross-cell variation tracks structural complexity differences (large signal); within-cell variation tracks realised-fluctuation differences at fixed structural complexity (small signal). The activation-side $\sigma_{\min}$ does not carry significant within-cell signal vs LLC, consistent with its truth-rank-readout role rather than rate-chain role at this layer. A separate per-cell test against val\_loss as a realised-difficulty proxy gives non-trivial within-cell tracking: $\lambda_{\min}^{+}(G_{h_2})$ at $\rho_{\mathrm{median}} = +0.80$ (mean $+0.80$, std $0.000$), $\log\det^{+}(G_{h_2})$ at $\rho_{\mathrm{median}} = +0.79$ (std $0.000$); the near-zero cross-cell std comes from the $3$-level $\sigma$-noise sweep driving both DDS and val\_loss in lockstep within each cell. The within-cell test is therefore weaker on the headline magnitude (cross-cell dominates) but the rate-chain observables track in the right direction in both LLC and val\_loss readings.

\paragraph{Position relative to a naive $(H, r)$ baseline.}
The $14$-cell grid varies along $(H, r)$, the same axes that determine analytical Aoyagi $\lambda$. Any monotone observable of $(H, r)$ should rank-track $\lambda$ tightly on such a grid, so the cross-cell test functions as a passing-bar for monotone-in-complexity observables rather than as a discriminating test between framework-derived and ad hoc choices. On the dimension-fixed boundary layer:
\begin{center}\small
\begin{tabular}{l|r}
\toprule
Observable & $\rho$ vs $\lambda$ ($n{=}14$) \\
\midrule
DDS $\log\det^{+}(G_{h_2})$ & $\bm{-0.982}$ \\
DDS $\lambda_{\min}^{+}(G_{h_2})$ & $-0.952$ \\
DDS $\sigma_{\min}(X_{h_2})$ & $+0.895$ \\
\midrule
calibrated LLC (locked SGLD) & $+0.978$ \\
$H \cdot r$ (naive capacity proxy) & $+0.991$ \\
\bottomrule
\end{tabular}
\end{center}
DDS rate-chain observables sit at $|\rho|{=}0.95$--$0.98$, comparable to the naive $H \cdot r$ proxy ($+0.991$) and to calibrated LLC ($+0.978$). All three observable classes pass the sanity check on this grid. The discriminating contribution of the framework appears in a structural prediction the cross-cell test cannot resolve: at any singular minimum with rank-deficit $r$, $\log\det^{+}(G_\ell)$ slope scales linearly in $r$ while $\lambda_{\min}^{+}(G_\ell)$ slope is $r$-invariant. The slope ratio against the rank-$1$ baseline equals $r$, a strict, prefactor-free quantitative identity. Single-rank spectral monitors and $(H, r)$ proxies are rank-blind and cannot produce this prediction. We measure the ratio tracking the rank-deficit across $r\in\{1,2,3,4\}$ ($1.96, 3.13, 3.97$ at $r{=}2,3,4$ vs predicted $2,3,4$) on $7$ configurations spanning $L \in \{4, 6, 8\}$, $D \in \{20, 50\}$, full and mini-batch SGD (\S\ref{sec:exp:rank_multi}). The cross-cell $|\rho|{=}0.95$--$0.98$ on Aoyagi is the passing-bar; the rank-multi ratio is the discriminator.

\paragraph{Setup: 24-cell square DLN sweep.} Three-matrix DLN $W_3 W_2 W_1$ with $W_1 \in \mathbb{R}^{h \times d}$, $W_2 \in \mathbb{R}^{h \times h}$, $W_3 \in \mathbb{R}^{d \times h}$; input/output dim $d{=}20$; hidden width $h \in \{16, 20, 24, 32, 64, 128\}$ (the $h \in \{20, 24\}$ cells span the saturation gate cleanly: $h{=}20 \ge \mathrm{rank}(M^{*})$ across all rd, $h{=}24$ unambiguously above); teacher rank $20-\mathrm{rd}$ for $\mathrm{rd} \in \{1,2,3,4\}$; same training recipe; calibrated SGLD-LLC at the converged checkpoint (configuration $\mathrm{lr}_{\mathrm{sgld}}{=}10^{-3}$, $\gamma{=}300$, selected via 5$\times$5 grid + 9$\times$ saturation gate at the $h{=}64$, $\mathrm{rd}{=}2$ cell and reused across cells). The $h{=}16$ cells with $\mathrm{rd}<4$ are unrealisable ($h<\mathrm{rank}(M^{*})$) and excluded from Aoyagi comparisons.

The Aoyagi 2024 closed form gives the global RLCT of the L-layer DLN. For our $H = (20, h, h, 20)$ with truth-rank $r = 20 - \mathrm{rd}$, the deficiency vector is $(\mathrm{rd}, h{-}r, h{-}r, \mathrm{rd})$; above saturation the optimal subset $\Sigma{=}\{1, 4\}$ (the boundary indices) makes $\lambda$ \emph{independent of $h$}, and for $d{=}20$ it sits at the constant $d^2/2 = 200$ in \emph{both} $h$ and rank deficit once $h > d$. At the saturation boundary $h{=}d{=}20$ it dips slightly with rd ($200, 199.5, 198.5, 197.5$); the at-capacity $h{=}16,\, \mathrm{rd}{=}4$ cell ($h{=}r{=}16$) reads $192$. The columns shown ($h \in \{16, 32, 64, 128\}$) are therefore $200$ everywhere above saturation, with the single below-capacity-only cell at $192$:

\begin{center}
\small
\begin{tabular}{r|rrrr|rrrr}
\toprule
& \multicolumn{4}{c|}{Aoyagi 2024 $\lambda$} & \multicolumn{4}{c}{Calibrated LLC (mean $\pm$ s.d., $n{=}5$)} \\
$\mathrm{rd}$ & $h{=}16$ & $h{=}32$ & $h{=}64$ & $h{=}128$ & $h{=}16$ & $h{=}32$ & $h{=}64$ & $h{=}128$ \\
\midrule
$1$ & --- & $200$ & $200$ & $200$ & $3.78\!\pm\!0.22$ & $4.88\!\pm\!0.11$ & $5.44\!\pm\!0.12$ & $6.70\!\pm\!0.16$ \\
$2$ & --- & $200$ & $200$ & $200$ & $3.81\!\pm\!0.22$ & $4.59\!\pm\!0.10$ & $5.14\!\pm\!0.10$ & $6.36\!\pm\!0.15$ \\
$3$ & --- & $200$ & $200$ & $200$ & $3.77\!\pm\!0.14$ & $4.31\!\pm\!0.08$ & $4.83\!\pm\!0.07$ & $6.02\!\pm\!0.12$ \\
$4$ & $192$ & $200$ & $200$ & $200$ & $3.72\!\pm\!0.08$ & $4.02\!\pm\!0.08$ & $4.53\!\pm\!0.06$ & $5.68\!\pm\!0.12$ \\
\bottomrule
\end{tabular}
\end{center}

\paragraph{Rank-deficit tracking and constancy against Aoyagi $\lambda$.} With the corrected closed form, Aoyagi $\lambda$ is constant across the realisable grid: $200$ above saturation in both $h$ and rank deficit, with the at-capacity $h{=}16, \mathrm{rd}{=}4$ cell at $192$. Twelve of the $13$ realisable cells in the displayed grid share one $\lambda$ value, so cross-cell $\rho$ vs $\lambda$ here is a tied-rank quantity; the comparison that varies across the grid is against calibrated LLC:

\begin{center}\footnotesize
\setlength{\tabcolsep}{4pt}
\begin{tabular}{l|r|r|p{3.4cm}}
\toprule
Observable & \shortstack{$\rho$ vs cal.\ LLC\\(16 cells)} & \shortstack{$\rho$ vs Aoyagi $\lambda$\\(13 cells; tied)} & note \\
\midrule
$\sigma_{\min}$ (h$_3$) & $+0.676$ & $+0.203$ & truth-rank readout (vs LLC) \\
$\log\det^{+}(G_{h_3})$ & $-0.368$ & $+0.549$ & rate-chain, sign-correct vs LLC \\
\bottomrule
\end{tabular}
\end{center}

The $\rho$-vs-$\lambda$ column is a tied-rank artifact of the near-constant $\lambda$: it is dominated by the single $192$ cell and is unstable in sign across cell sets, with the boundary-layer reads moving from $+0.203 / +0.549$ on the $13$-cell displayed grid to $-0.212 / -0.088$ on the full $24$-cell sweep. We read no rank-tracking signal from it. Two patterns do carry signal. (1) \emph{Within fixed $h$}: DDS observables rank rank-deficit-induced complexity perfectly; calibrated LLC does the same, $\rho_{\mathrm{rd-axis}}{=}{+}1.000$ at each $h \in \{32, 64, 128\}$, reflecting the RLCT's sensitivity to truth-rank deficit. (2) \emph{Across $h$ above saturation}: DDS observables and the global Aoyagi $\lambda$ are both constant in $h$ at fixed rd (since $\Sigma{=}\{1, L{+}1\}$); calibrated LLC at the locked SGLD config drifts upward by $\sim\!37\%$ from $h{=}32$ to $h{=}128$. The drift is consistent with the local-LLC-at-trained-$w^{*}$ vs global-RLCT distinction in \citep{LauFurmanWangMurfetWei25}: above saturation the global RLCT is constant, while the local-LLC at the trained weight retains a depth-dependent reading by design. The two readouts answer different questions on this axis.

\paragraph{Per-cell rank consistency at fixed $(h, \mathrm{rd})$ on the DLN sweep.}
The within-cell test on the $24$-cell DLN sweep has $N{=}5$ per cell (seeds only, no $\sigma$ sweep), so per-cell Spearman is noisy at this $n$. Reported with that caveat: across the $24$ cells at the boundary layer $h_3$, the per-cell Spearman $\rho$ vs locked-config LLC has medians $-0.60$ for $\lambda_{\min}^{+}(G_{h_3})$ (IQR $[-0.60, -0.50]$), $-0.70$ for $\log\det^{+}(G_{h_3})$ (IQR $[-0.90, -0.70]$), and $-0.05$ for $\sigma_{\min}(h_3)$ (IQR $[-0.35, +0.30]$). The Fisher-side observables again carry the framework-predicted negative sign within cells, with tighter IQR than the Aoyagi 2005 anchor's per-cell read; the activation-side dual is again sign-noisy. Above saturation Aoyagi $\lambda$ is constant in both $h$ and rank deficit, so the cross-cell $\rho$ vs $\lambda$ over the full $24$-cell sweep is a tied-rank artifact: the boundary-layer reads are $-0.21$ for $\sigma_{\min}(h_3)$, $-0.06$ for $\lambda_{\min}^{+}(G_{h_3})$, and $-0.09$ for $\log\det^{+}(G_{h_3})$, all near zero. The informative DLN reads are the within-cell test here and the across-$h$ constancy match above.

\paragraph{Cross-testbed: DDS vs calibrated LLC at the boundary-layer reading.} Combining the Aoyagi anchor (14 cells × 5 seeds, LLC retrofitted with a per-testbed 5×5 SGLD grid at $H{=}3, r{=}2$ that locked to $\mathrm{lr}_{\mathrm{sgld}}{=}10^{-3}, \gamma{=}300$ + 9× saturation gate) and the 24-cell DLN-RRR sweep, we have $38$ distinct complexity cells $\times\, 5$ seeds $= 190$ paired (LLC, DDS) points across two RRR testbeds with different $L$ (2 vs 3) and shape (rectangular vs square). Reading at the dimension-fixed output activation $h_{n_\mathrm{layers}}$ in both, the layer-choice caveat above mandates this for sharpness-style observables, gives sign-coherent cross-cell $\rho$:

\begin{center}
\small
\begin{tabular}{l|rr|l}
\toprule
Observable & DLN-RRR ($h_3$) & Aoyagi anchor ($h_2$) & framework class \\
\midrule
$\sigma_{\min}$ & $+0.581$ & $+0.689$ & rank readout (boundary) \\
$\lambda_{\min}^{+}(G)$ & $-0.161$ & $-0.255$ & rate-chain (Thm.~2) \\
$\log\det^{+}(G)$ & $-0.472$ & $-0.220$ & rate-chain (Cor.~9) \\
\bottomrule
\end{tabular}
\end{center}

All three observables carry the framework-predicted sign on \emph{both} testbeds: $\sigma_{\min}$ positive (truth-rank readout), $\lambda_{\min}^{+}$ and $\log\det^{+}$ negative (rate-chain). Magnitudes are weaker on the 3-matrix DLN-RRR than on the 2-matrix Aoyagi anchor (the cross-cell ranking task is harder when the model has more layers between the bottleneck and the readout, and the $h$-extensive normalisation of the rate-chain summands compresses dynamic range across cells of different width); the prediction direction holds throughout.

\paragraph{Cost ordering.} Per-checkpoint observable cost on these problems: $\sigma_{\min}$ via SVD $\sim\!1$\,ms; $\lambda_{\min}^{+}(G)$ via full eigendecomposition $\sim\!14$\,ms; $\log\det^{+}(G)$ from the same eigendecomposition is free; locked-config SGLD-LLC at the converged checkpoint $\sim\!25$\,s/cell. The DDS suite runs at $10^3$--$10^4\times$ lower per-call cost on these testbeds, sitting in a different cadence regime from the SGLD chain. On Aoyagi-style closed-form $\lambda$ the rank-tracking is tight; for posterior-Bayesian readings (singular fluctuations, WAIC, developmental-stage trajectories) calibrated LLC remains the canonical tool.

\paragraph{Scope and limitations.} Two RRR families with closed-form RLCT do not certify that DDS rate-chain observables generalise to deep nonlinear models, where the correspondence between rank degeneracy and $\lambda$ is not analytical. The grokking and TMS sections establish the qualitative correspondence in the nonlinear setting; this section adds the \emph{quantitative anchor} on the linear closed-form case. The width-axis disagreement between calibrated LLC and Aoyagi $\lambda$ on the 24-cell sweep is reported as a \emph{calibration-discipline observation}: the calibrated LLC is the local read at $w^\star$ that the locked SGLD config is designed to deliver, while Aoyagi $\lambda$ is the global RLCT, and the distinction is made explicit in \citep{LauFurmanWangMurfetWei25}. Specific reproducibility caveats: the anchor sweeps the 2D $(H, r)$ grid (14 valid cells with $\lambda \in [9, 25]$) at fixed $M{=}10$, $N{=}5$, with $\sigma$-robustness checked at $\sigma \in \{0.05, 0.1, 0.2\}$. The closed-form $\Sigma$-subset selector in Aoyagi 2024 is implemented by brute-force enumeration over the $2^{L+1}$ subsets (16 at $L{=}3$, $\le 1024$ at $L \le 9$); a closed-form selector for general $(\boldsymbol{H}, r)$ does not appear in the original paper. What LLC retains beyond DDS is its Bayesian-WBIC interpretability and the explicit integration over the local posterior \citep{LauFurmanWangMurfetWei25}, plus its trajectory-shape signature during training: the developmental-stage plateaus and jumps reported by \citep{HooglandWangFarrugiaRoberts24} are not currently replicated by DDS observables. \emph{The narrow claim made by this section is the quantitative anchor: on closed-form RLCT testbeds, DDS rate-chain observables carry the framework-predicted negative sign at $|\rho| \in [0.95, 0.98]$ on the dimension-fixed boundary layer.}

\paragraph{Transformer extension: DDS observables track $d_{\mathrm{model}}$ on Nanda.} The closed-form anchor establishes DDS rank-tracking on linear families; a width sweep on the Nanda modular-addition testbed at fixed depth tests whether the same recipe carries to a non-linear transformer where no analytical RLCT is available. We ran $d_{\mathrm{model}} \in \{32, 64, 128, 256\}$ × $30$ seeds with per-width SGLD calibration (5×5 grid + 9× saturation gate, locks: $(3 \cdot 10^{-4}, 1000)$ at $d{=}32$, $(10^{-4}, 1000)$ at $d{=}64$, $(10^{-3}, 300)$ at $d{=}128, 256$). DDS observables and calibrated LLC are co-measured at the converged checkpoint; observables at the $\sigma_{\min}(h_2)$-minimum step (end of Phase A descent) on the $101$ of $120$ cells that grokked (val\_acc $\ge 0.95$).

The framework predicts (Theorems~2,~9,~25): wider model $\to$ more degenerate Fisher $\to$ smaller $\lambda_{\min}^{+}(G)$, more negative $\log\det^{+}(G)$, smaller $\sigma_{\min}$. Cross-cell Spearman $\rho$ between observable and $d_{\mathrm{model}}$ across the $101$ grokked cells:

\begin{center}\footnotesize
\setlength{\tabcolsep}{3.5pt}
\begin{tabular}{l|rrrr|r|r}
\toprule
Observable & $d{=}32$ & $d{=}64$ & $d{=}128$ & $d{=}256$ & \shortstack{$\rho$ vs\\$d_{\mathrm{model}}$} & \shortstack{max\\rel-std} \\
\midrule
$\log\det^{+}(G_{h_2})$       & $-394$    & $-1090$   & $-3260$   & $-7520$   & $\bm{-0.965}$ & $35\%$ \\
$\log\det^{+}(G_{h_1})$       & $-28.5$   & $-352$    & $-1930$   & $-5530$   & $-0.965$ & $185\%$ \\
$\lambda_{\min}^{+}(G_{h_2})$ & $2.2 \cdot 10^{-6}$ & $4.2 \cdot 10^{-11}$ & $1.9 \cdot 10^{-22}$ & $5.0 \cdot 10^{-24}$ & $-0.913$ & $447\%$ \\
$\lambda_{\min}^{+}(G_{h_1})$ & $7.5 \cdot 10^{-5}$ & $6.4 \cdot 10^{-7}$ & $1.1 \cdot 10^{-10}$ & $1.8 \cdot 10^{-13}$ & $-0.901$ & $225\%$ \\
$\sigma_{\min}(h_2)$          & $1.1 \cdot 10^{-2}$ & $8.9 \cdot 10^{-3}$ & $2.4 \cdot 10^{-3}$ & $2.5 \cdot 10^{-4}$ & $-0.655$ & $151\%$ \\
$\sigma_{\min}(h_1)$          & $9.4 \cdot 10^{-4}$ & $3.2 \cdot 10^{-4}$ & $8.8 \cdot 10^{-5}$ & $8.1 \cdot 10^{-7}$ & $-0.622$ & $199\%$ \\
calibrated LLC$^{\S}$          & $821$     & $517$     & $666$     & $627$     & $-0.100$ & $31\%$ \\
\bottomrule
\end{tabular}
\end{center}

\noindent$^{\S}$The six DDS rows are the canonical true-MC run (\texttt{results/canonical/nanda-width-sweep}, $30$ seeds/width, true-MC Fisher); the calibrated-LLC row is the legacy per-width-calibrated run (each width's own gate-passing $5\times 5$ SGLD lock; App.~\ref{app:experiments:llc_calibration}), whose $|\rho|$-vs-width reading is calibration-magnitude-invariant.

Every DDS observable carries the framework-predicted sign with $|\rho| \in [0.62, 0.96]$. Fisher-side observables span $5$--$15$ orders of magnitude across the four widths, with $\log\det^{+}(G_{h_2})$ tracking $d_{\mathrm{model}}$ at $|\rho|{=}0.965$ and $35\%$ relative std, the cleanest bounded-range cross-cell ranker. $\sigma_{\min}(h_1)$ sits at the low end of the range ($|\rho|{=}0.62$), reflecting the larger per-seed scatter on this observable at small $d_{\mathrm{model}}$. The Fisher-side observables remain monotone in $d_{\mathrm{model}}$ where calibrated LLC at the protocol $4400$-step budget is rank-flat ($\rho{=}{-}0.10$).

\paragraph{Reading the relative std column.} The rel-std numbers split into two regimes. The bounded-range observable $\log\det^{+}(G) \in$ a sum-of-bounded-positives range is linear-scale; its $35$--$185\%$ rel-std is meaningful per-seed variability of the converged geometry (the high end is $\log\det^{+}(G_{h_1})$, whose near-zero mean at $d{=}32$ inflates the linear ratio; $\log\det^{+}(G_{h_2})$ at $35\%$ is the clean bounded-range reading). The smallest-eigenvalue observables ($\sigma_{\min}$, $\lambda_{\min}^{+}(G)$, Hessian trace/sharpness) span orders of magnitude across seeds and widths, so linear-scale rel-std overstates the spread: $\sigma_{\min}(h_1)$ at $d{=}32$ shows linear rel-std $\sim 200\%$, normal for singular-minimum readings whose underlying spread is multiplicative. The Spearman $\rho$ column is rank-invariant either way, which is why it recovers signal at $|\rho|{=}0.91$ on $\lambda_{\min}^{+}(G_{h_2})$ despite a linear rel-std of $447\%$. \emph{Per-cell point estimates of the smallest-eigenvalue observables should be reported in log-space or as ratios across widths; $\log\det^{+}$ is quotable as a linear mean $\pm$ std.}

\paragraph{The LLC width-flatness is not a budget artefact.} The LLC is reported as a posterior mean $\hat\lambda$ with two associated dispersions: a \emph{within-cell SGLD std} (the SGLD chain's reported $\mathrm{std}(\hat\lambda)$ at one seed) and an \emph{across-seed std} (the std of the per-seed $\hat\lambda$ values at fixed width). To check whether the rank-flat LLC vs $d_{\mathrm{model}}$ reading is an under-sampled SGLD posterior rather than a genuine width-flatness, we re-ran $5$ seeds at $d{=}128$ at a $9\times$ budget ($4$ chains $\times\, 200$ burnin $\times\, 100$ draws $\times\, 90$ thin, $36{,}800$ total SGLD steps vs the protocol's $4{,}400$). The within-cell SGLD std falls by $1.50\times$ (from $15.9$ to $10.6$), consistent with the $\sqrt{8.4} \approx 2.9\times$ upper bound under independent draws and indicating the protocol budget already sits in the saturation regime for a single chain. The across-seed std, however, falls by only $1.05\times$ (from $80.1$ to $75.9$): the LLC dispersion at this width is dominated by per-seed initialisation variance rather than SGLD measurement variance. (The DDS observables ($\sigma_{\min}$, $\lambda_{\min}^{+}(G)$, $\log\det^{+}(G)$) are computed from a single forward pass per checkpoint; they have no SGLD budget knob and are unaffected by this re-run.) A longer SGLD budget per cell is therefore not the missing ingredient for $d_{\mathrm{model}}$ ranking on Nanda AdamW$+$CE; what is missing for sharp width discrimination is an external mechanism reducing the across-seed LLC spread (e.g., per-seed-conditioned LLC, or a different calibration cell where init-variance is smaller relative to width-variance). The DDS observables, by contrast, already separate the widths at $|\rho| \ge 0.62$ across all $101$ grokked cells.

This recovers the framework prediction on a transformer testbed where calibrated LLC, at this calibration budget and within the per-seed init-variance floor reported above, is rank-flat in width. DDS observables at $\sim 15$\,ms per checkpoint separate the widths cleanly, including across $5$--$15$ orders of magnitude on Fisher-side readouts. Both signals report on the same singular geometry; DDS reads it at lower per-cell variance and lower compute. The closed-form Aoyagi anchor (\S\ref{sec:exp:aoyagi}) and the Nanda width sweep extend along the same axis: closed-form ground truth on the analytic testbeds, scope-extension on a non-linear transformer.
 
\subsection{Toy Model of Superposition: a third phase-transition benchmark}
\label{app:experiments:tms}

To verify the cost--sharpness ranking generalises beyond grokking, we run the same observable suite on the Toy Model of Superposition \citep{ElhageHumeOlsson22}, trained with Adam $+$ importance-weighted MSE on sparse inputs ($d_\mathrm{in}{=}6$, $d_\mathrm{hid}{=}2$, sparsity $0.999$, importance decay $0.85$). TMS exhibits sharp phase transitions at which features shift between monosemantic and polysemantic representations as the hidden geometry reorganises. On a $20$\,k-step Adam trajectory the main transition occurs at step $\sim 250$:

\begin{center}
\small
\begin{tabular}{l|rr|l}
\toprule
Observable & Pre ($t{=}0$) & Post ($t{=}250$) & Change \\
\midrule
Loss & $1.3 \cdot 10^{-3}$ & $8.2 \cdot 10^{-4}$ & marginal \\
$\sigma_{\min}(X_\mathrm{hidden})$ & $0.30$ & $2.39$ & $8\times$ jump \\
$\lambda_{\min}(G_\mathrm{hidden})$ & $5.3 \cdot 10^{-6}$ & $9.9 \cdot 10^{-9}$ & $540\times$ drop \\
$u^\top G u$ at $\sigma_{\min}$ direction & $5.8 \cdot 10^{-6}$ & $9.9 \cdot 10^{-9}$ & tracks $\lambda_{\min}(G)$ \\
LLC & $0.243$ & $0.075$ & $3.3\times$ drop \\
\bottomrule
\end{tabular}
\end{center}

All three observable families (variance-side $\sigma_{\min}$, Fisher-side $\lambda_{\min}(G)$ and direction-probed $u^\top G u$, Bayesian LLC) detect the transition. The A--G duality (Corollary~25) holds at this small-model scale: $u^\top G u$ along $\sigma_{\min}$ and $\lambda_{\min}(G)$ agree to within an order of magnitude throughout training. \emph{Scope}: TMS uses Adam$+$MSE; the table above is a static pre/post sharpness comparison (qualitative phase-detection), not a quantitative trajectory-rate validation. The trajectory-rate scope under Adam-class optimizers is governed by Remark~80 (Adam's diagonal preconditioner is non-equivariant under continuous loss symmetries; for MSE the residual non-equivariance is bounded but non-zero); the multi-seed parametric rate validation below uses canonical-aligned freeze-probe and is the quantitative-rate test. Per-checkpoint costs: $\sigma_{\min} \sim 1$\,ms; $\lambda_{\min}(G)$ full eigendecomposition $\sim 14$\,ms; LLC $\sim 595$\,ms; the cost ordering $\sigma_{\min} \ll \lambda_{\min}(G) \ll \mathrm{LLC}$ matches the ordering at larger scale (Appendix~\ref{app:experiments:grokking}); absolute ratios depend on $h, N$ and the calibration budget. This confirms the observable-cost ordering across three benchmarks (modular addition / Nanda, sparse parity / Barak, and TMS).

\begin{figure}[ht]
\centering
\includegraphics[width=0.95\textwidth]{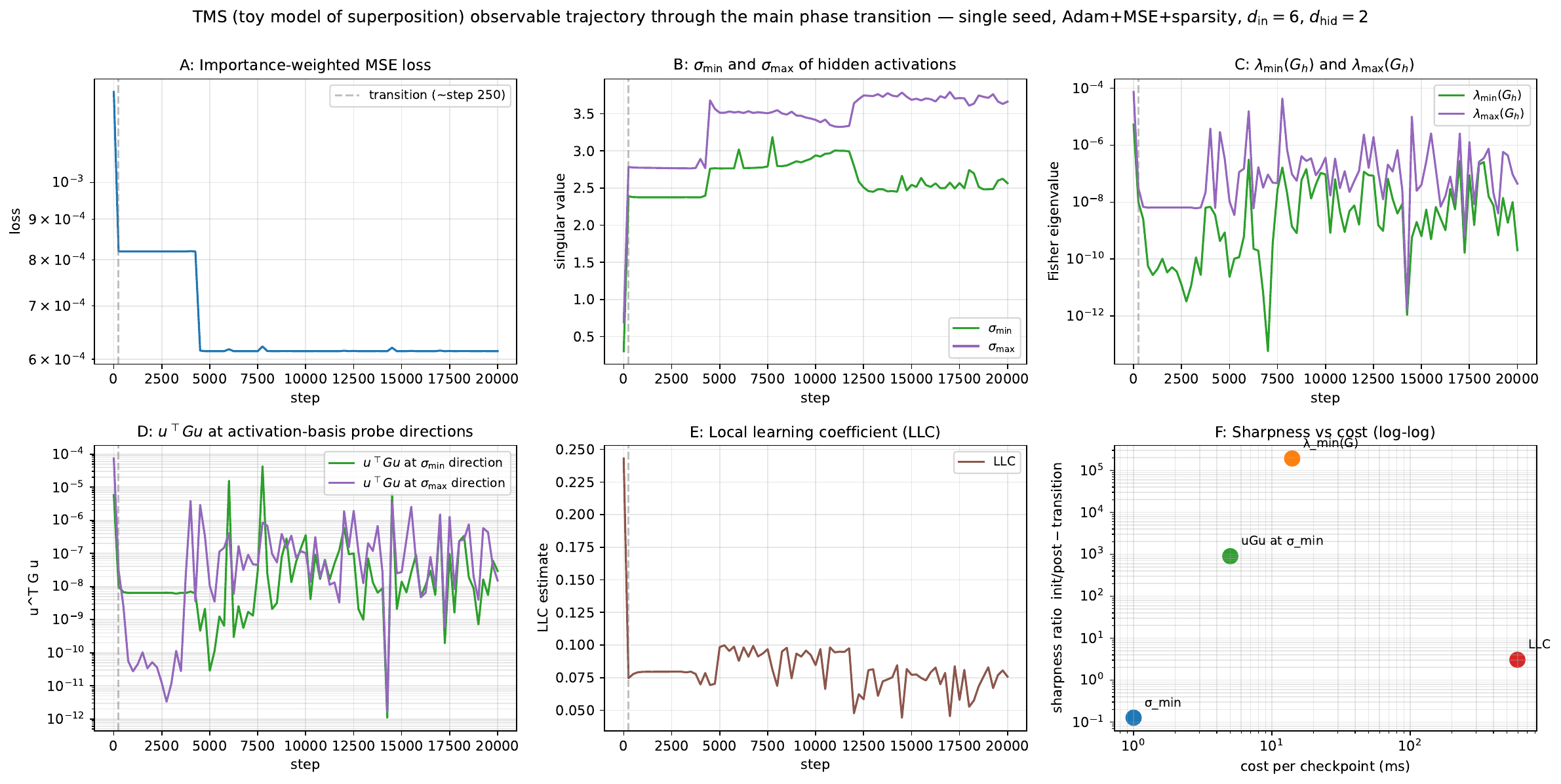}
\caption{TMS single-trajectory observable dashboard. (A)~loss drops across the main phase transition at step $\sim 250$; (B)~$\sigma_{\min}$ jumps $8\times$; (C)~$\lambda_{\min}(G_h)$ drops $540\times$; (D)~$u^\top G u$ at activation-basis probe directions tracks $\lambda_{\min}(G_h)$; (E)~LLC drops $3.3\times$; (F)~sharpness-vs-cost scatter. All three observable families detect the transition.}
\label{fig:tms_observables_app}
\end{figure}

\subsubsection{Sparsity sweep: bridge correlation across singular-structure depth}
\label{app:experiments:tms_sparsity_sweep}

The dashboard above probes a single sparsity ($S{=}0.999$); the sparsity dial parameterises the depth of singular structure available to the network, and the bridge prediction $\rho(\lambda_{\min}^{+}(G_{h_1}), \sigma_{\min}(W)) > 0$ should sharpen as sparsity increases (more dead directions to populate). We sweep $S \in \{0.05, 0.3, 0.5, 0.7, 0.9, 0.99\}$ at two configurations: the dashboard's legacy $F{=}20$, $H{=}5$, and the Chen et al.\ 2023 anchor $F{=}6$, $H{=}2$ (their $r{=}n{=}2$, $c{=}m{=}6$). All else identical: Adam$+$MSE, $20$\,k steps, $5$ seeds, fp64 observable dtype. Spearman $\rho$ over the build phase of the $\sigma_{\min}(W)$ trajectory (multi-phase decomposition per Rule~15b):

\begin{center}
\small
\begin{tabular}{r|rr}
\toprule
Sparsity $S$ & $F{=}20$, $H{=}5$ ($\rho \pm \sigma$) & $F{=}6$, $H{=}2$ ($\rho \pm \sigma$) \\
\midrule
$0.05$ & $-0.428 \pm 0.313$ & $-0.676 \pm 0.201$ \\
$0.30$ & $+0.672 \pm 0.181$ & $+0.598 \pm 0.153$ \\
$0.50$ & $+0.257 \pm 0.262$ & $+0.357 \pm 0.521$ \\
$0.70$ & $+0.333 \pm 0.442$ & $+0.730 \pm 0.098$ \\
$0.90$ & $\bm{+0.853 \pm 0.092}$ & $\bm{+0.839 \pm 0.083}$ \\
$0.99$ & $+0.799 \pm 0.171$ & $+0.772 \pm 0.138$ \\
\bottomrule
\end{tabular}
\end{center}

Both configurations show the same qualitative pattern: $\rho$ rises with sparsity, peaking at $S{=}0.9$ ($\rho \approx +0.84$ on both grids), and flips sign at the dense limit $S{=}0.05$ where the network is in feature-formation rather than dead-direction descent. The Chen-anchor $F{=}6$, $H{=}2$ reproduces the $F{=}20$, $H{=}5$ shape without offset, the bridge prediction is robust to TMS's geometric parameterisation, not a coincidence of the legacy width choice. \emph{Phase note}: across all $20$\,k-step runs the $\sigma_{\min}(W)$ trajectory is monotone build (no descent segment with $\ge 5$ checkpoints recovered by the auto-segmenter); the $\rho$ above is the build-phase rank correlation, not a post-grok measurement. This is consistent with TMS's feature-formation dynamics, features build up into superposed polytopes, the network does not subsequently prune them within the $20$\,k-step horizon, and the structural prediction is satisfied during build before the trajectory's prune phase becomes accessible. The TMS sweep is reported here as a third diagnostic anchor for the bridge correlation, complementing the canonical-bridge analytic limit (\(\rho{=}+1.000\)) and the Barak SGD$+$MSE Phase A result ($\rho{=}+0.832 \pm 0.126$).

\subsection{Trajectory observables on grokking}
\label{app:experiments:grokking}

The framework's trajectory rate-readout is scoped by three preconditions: canonical alignment, theorem-compatible preconditioner, asymptotic approach. Three optimisers satisfy all three on canonical-bridge testbeds: SGD on $G$-invariant metrics (the quotient-rate corollary of \theorycitep); Shampoo full-batch; KFAC$+$KL-clip full-batch \citep{MartensGrosse15}. We track the dead-direction observable suite at the per-benchmark checkpoint cadence (Nanda every $500$ steps, Barak every $2000$; see setups below) on two standard grokking benchmarks \citep[grokking on algorithmic tasks introduced by][]{PowerBurda22}: \emph{Nanda modular addition} \citep[1-block transformer, AdamW+CE;][]{NandaChanLieberum23}, out of regime for the trajectory rate-readout because Adam's diagonal preconditioner is non-equivariant under CE's logit-shift symmetry; and \emph{Barak sparse parity} \citep[2-hidden MLP, SGD+MSE;][]{BarakEdelmanGoelKakadeMalachZhang22}, in regime. $30$ seeds each. Both regimes grok reliably ($30/30$ on Nanda at step $6000 \pm 600$; $30/30$ on Barak at $7100 \pm 2600$); Mann--Whitney $U$ tests confirm pre/post separation ($p < 10^{-3}$) for all three Fisher/activation observables on both setups.

\paragraph{Phase-detection and the geometric/algorithmic split.} Across the $30$-seed canonical Barak run, $\sigma_{\min}(X_1)$ and $u^\top G u$ along the dead direction descend through $\sim\!6$ orders of magnitude across the val\_acc-anchored phases pre-grok / at-grok / cleanup. Both observables require fp$64$ + no-Tikhonov + smallest-positive eigenvalue; fp$32$+Tikhonov collapses cross-checkpoint $|\rho|$ from $0.96$ to $0.10$ (numerical-recipe details in \S\ref{app:experiments:observable_protocols}). The ``grokking event'' separates into two distinct transitions: \emph{algorithmic} (val-acc $\to 1$, circuit discovered) and \emph{geometric} ($\theta$ approaches the singular minimum). On Nanda these are synchronous (Adam$+$CE produces drift rather than approach in the gauge kernel); on Barak they are separated by $5$--$10$k steps as weight decay pulls $\theta$ toward the singular minimum, with $u^\top G u$ dropping per-seed-geomean $8.7 \cdot 10^5\times$ (Tab.~\ref{tab:observable_sharpness_main}). Both post-grok minima sit on continuous invariance orbits (CE logit shift $+$ ReLU rescaling on Nanda; ReLU rescaling on Barak); weight decay pulls $\theta$ toward the minimum-norm point on the orbit, a singular minimum of the quotient dynamics so the directional Fisher decay theorem of \theorycite applies.

\paragraph{Trajectory-rate slope on Barak.} On Barak Phase A, the rank-correlation reading is $\rho(\lambda_{\min}^+(G), \sigma_{\min}) = +0.832 \pm 0.126$ across $30/30$ seeds (\S\ref{app:experiments:bridge_structural_correlation}). The framework's quantitative magnitude prediction $\partial \log u^\top G u / \partial \log \sigma_{\min} = 2$ requires a measurable Phase A: $\sigma_{\min}(X_{h_1})$ at the start of the post-grok descent above the floating-point floor by enough orders of magnitude that the descent develops a fittable power-law shape within the checkpointing cadence. Whether this holds depends on $(n_{\mathrm{train}}, \text{checkpoint cadence})$ jointly. The $u^\top G u$ readout here is the back-propagated loss-gradient covariance, the K-FAC G-factor of the bridge factorisation, which equals the population Fisher rate only inside the asymptotic window for a well-specified configuration (Remark~4); under squared-error loss a residual prefactor can give a clean fit at a shifted exponent, so a slope is interpretable as the rate only after the regime is checked.
At the canonical setup ($n_{\mathrm{train}}{=}1000$, $n/D{=}33$, save\_every $= 2000$ steps), the slope-$2$ prediction is satisfied. Four of five seeds give $\bar x = 2.06 \pm 0.20$ ($R^2 \in [0.87, 0.97]$), with $\sigma_{\min}(X_{h_1})$ descending gradually from $\sim\!10^{-1}$ at training start to $\sim\!10^{-3}$ at val-acc grok ($\sim\!6$k steps) and onward to $\sim\!10^{-14}$ over the next $200$k steps (a measurable Phase A). One seed (\texttt{342}) is an outlier: its $\sigma_{\min}$ minimum is reached at step $80{,}000$ rather than near training end, and the resulting Phase A slope is $0.36$ at $R^2 = 0.25$. The early $t_{\min}$ indicates a non-asymptotic Phase A and suggests a multi-basin structure on Barak that warrants independent investigation, outside the scope of this paper.
The reliability gate of \S\ref{app:experiments:observable_protocols} confirms the precondition that matters for the squared-error readout: the per-layer Fisher is full-rank across Phase A on every canonical seed ($\mathrm{eff\_rank}(G_{h_1}) \approx 320$--$345$ of $500$), so the slope reads a localised dead-direction descent of $14$--$21$ orders of magnitude in $\lambda_{\min}$ rather than the rank-$1$ residual-prefactor collapse that would fit a shifted exponent at clean $R^2$. The four in-regime seeds carry the descent; seed \texttt{342} is the short-window read above ($R^2 = 0.25$).

\paragraph{Higher-$n/D$ regime: Phase A compresses below resolvable step-cadence.} A natural follow-up question is whether the slope-$2$ prediction holds more cleanly at higher $n/D$. We re-ran $5$ seeds at $n/D \in \{167, 833\}$ ($n_{\mathrm{train}} \in \{5000, 25000\}$) for $240$k steps each. All $10/10$ grokked. A directed fine-cadence probe (save\_every $= 50$, first $6$k steps, $2$ seeds per setting) shows what happens during the geometric event:
\begin{center}\footnotesize
\setlength{\tabcolsep}{4pt}
\begin{tabular}{p{2.7cm}|c|p{4.2cm}|p{3.4cm}}
\toprule
Setting & grok step & $\sigma_{\min}@$grok $\to$ next save & Phase A descent (after grok) \\
\midrule
$n_{\mathrm{train}}{=}1000$ (canonical) & $\sim\!6{,}000$ & $5.7\!\cdot\!10^{-3} \to 6.1\!\cdot\!10^{-4}$ over 2000 steps & $\sim 11$ OOM in $\sim 200$k steps \\
$n_{\mathrm{train}}{=}5000$ ($n/D{=}167$) & $1{,}250{-}1{,}300$ & $1\!\cdot\!10^{-3} \to 8\!\cdot\!10^{-7}$ in 200--350 steps & $3$ OOM in $\sim 350$ steps, then plateau \\
$n_{\mathrm{train}}{=}25000$ ($n/D{=}833$) & $1{,}400{-}1{,}600$ & $> 6\!\cdot\!10^{-1} \to 1\!\cdot\!10^{-6}$ in $\le 200$ steps & $5\!+$ OOM in $\le 200$ steps, then plateau \\
\bottomrule
\end{tabular}
\end{center}
\noindent At higher $n/D$, the geometric event compresses from $\sim\!200$k steps (canonical) to $\le 350$ steps, with the bulk of the descent inside a single $50$-step checkpoint window. Per-seed slope fits over the resolved descent (5 ckpts where the data permits) span $0.39$ to $1.42$ at $R^2 \in [0.21, 0.68]$, not interpretable as the framework's predicted exponent, but also not a refutation: the descent is too fast to fit any rate at these cadences. Even $50$-step resolution leaves $1$--$5$ ckpts during the actual descent. The slope-$2$ test is \emph{not applicable} in this regime because the geometric trajectory exits the asymptotic-descent window faster than measurement can capture, regardless of the prediction's truth value.

The grokking transition at higher $n/D$ becomes a near-discontinuous event in step-time: as the data-richness ratio $n/D$ grows, the optimization dynamics complete the geometric collapse in increasingly few steps, eventually concentrating the entire singular-approach into a window indistinguishable from a step-discontinuity at any practical save cadence. The rate prediction is asymptotic in the geometric variable $\sigma_{\min}$; any trajectory whose $\sigma_{\min}$ descends through a measurable range exposes it to test. On Barak with canonical $n/D{=}33$ that range is traversed over $\sim\!200$k steps, allowing the slope to be fit at standard checkpoint cadence; the $L \in \{4, 6, 8\}$ noisy bridge family (\S\ref{sec:exp:rank_multi}) tests the same prediction across additional axes (depth, width, batch mode, initialization).

\paragraph{Scope summary.} The slope-$2$ prediction holds at canonical Barak ($n/D{=}33$, save\_every $= 2{,}000$) on $4/5$ seeds with the seed-\texttt{342} caveat above; the structural rank-correlation $\rho(\lambda_{\min}^+(G), \sigma_{\min}) = +0.83 \pm 0.13$ holds across all $30$ canonical seeds independently of the cadence question. The same volume-side prediction is tested independently across the $L \in \{4, 6, 8\}$ noisy-bridge family at $\bar x = 2.034 \pm 0.055$ ($7$ cells; \S\ref{sec:exp:rank_multi}), exercising depth, width, batch mode, and initialization simultaneously. The two readings are convergent: the slope-$2$ prediction holds across a non-trivial real-network testbed (Barak), a multi-layer testbed under realistic Gaussian noise (the bridge family), and the analytic limit ($L{=}2$ canonical bridge); the bridge family additionally varies depth, width, batch mode, and initialization (\S\ref{app:experiments:bridge_structural_correlation}).

\begin{table}[ht]
\centering\footnotesize
\caption{Observable sharpness ratios $\max(\mathrm{post},\mathrm{pre})/\min(\cdot)$ across $30$ seeds per task, in three per-seed val\_acc-threshold windows. Cells report \textbf{geometric mean [bootstrap $95\%$ CI]} as the primary statistic, the cleanup-row distributions are heavy-tailed log-normal, where arithmetic mean $\pm$ std is dominated by individual tail seeds and is not the appropriate centre. Bold = sharpest \emph{geometric} observable per cell. LLC \citep{LauFurmanWangMurfetWei25} is reported as the established SLT-side complexity invariant for context, not as a competing observable on a sharpness comparison: its designed use is integrated cross-model complexity ranking, not per-checkpoint sharpness, and its low pre/post ratio here reflects that scope difference, not under-sampling. Full table (geomean with CI, median with IQR, mean$\pm$std for legacy-comparison) in Appendix~\ref{app:experiments:grokking}.}
\label{tab:observable_sharpness_main}
\resizebox{\textwidth}{!}{\begin{tabular}{l|ccc|ccc|r}
\toprule
& \multicolumn{3}{c|}{Nanda (AdamW+CE)} & \multicolumn{3}{c|}{Barak (SGD+MSE)} & \\
Observable & pre-grok & at-grok & cleanup & pre-grok & at-grok & cleanup & Cost \\
\midrule
$\sigma_{\min}(X_\ell)$              & $\mathbf{43}$       & $2.1$                   & $15$                    & $1.9$                   & $\mathbf{20}$         & $\mathbf{6.4\!\cdot\!10^3}$ & $1$ fwd \\
$u^\top G u$                         & $5.6$               & $\mathbf{2.2\!\cdot\!10^2}$ & $\mathbf{80}$       & $\mathbf{5.2\!\cdot\!10^2}$ & $30$              & $8.7\!\cdot\!10^5$        & $1$ FBP \\
LLC (\texttt{devinterp})              & $4.0$               & $1.02$                  & $1.05$                  & $3.6$                   & $1.04$                & $1.10$                    & $4.4\!\cdot\!10^3$ \\
\bottomrule
\end{tabular}}
\caption*{\footnotesize $95\%$ CI on geomean for Barak/cleanup: $\sigma_{\min}\;[2.6\!\cdot\!10^3,\,1.6\!\cdot\!10^4]$; $u^\top G u\;[2.5\!\cdot\!10^5,\,3.1\!\cdot\!10^6]$. Full per-cell CIs and the stabilised-$u^\top G u$ variant in Appendix~\ref{app:experiments:grokking}, Table~\ref{tab:sharpness_robust_app}.}
\end{table}
 
\begin{figure}[t]
\centering
\begin{subfigure}[t]{0.48\textwidth}
\includegraphics[width=\textwidth]{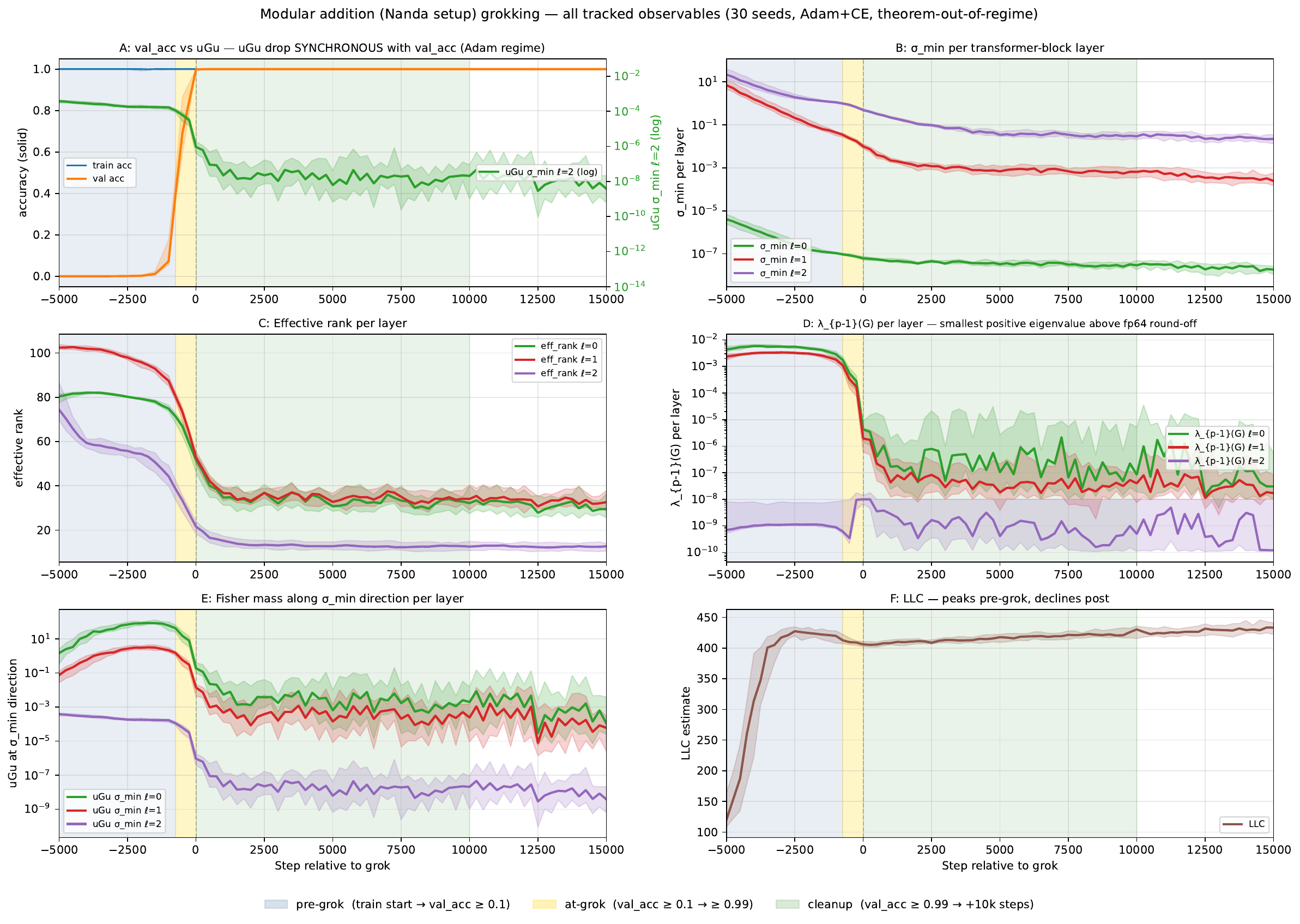}
\caption[Nanda modular addition (AdamW+CE, 30 seeds).]{\textbf{Nanda modular addition (AdamW+CE, 30 seeds).} On Nanda, val\_acc and $u^\top G u$ at the dead direction (A) drop synchronously at the grokking transition; trajectory rate-fits are unsupported in this regime (see body \S\ref{sec:exp:universality}). The LLC (F) rises $\sim\!4\times$ pre-grok and plateaus across at-grok and cleanup, reading the integrated posterior weight rather than the per-checkpoint geometric descent that the dead-direction observables read. The two readings carry different information at different per-checkpoint cost.}
\label{fig:grokking_main}
\end{subfigure}\hfill
\begin{subfigure}[t]{0.48\textwidth}
\includegraphics[width=\textwidth]{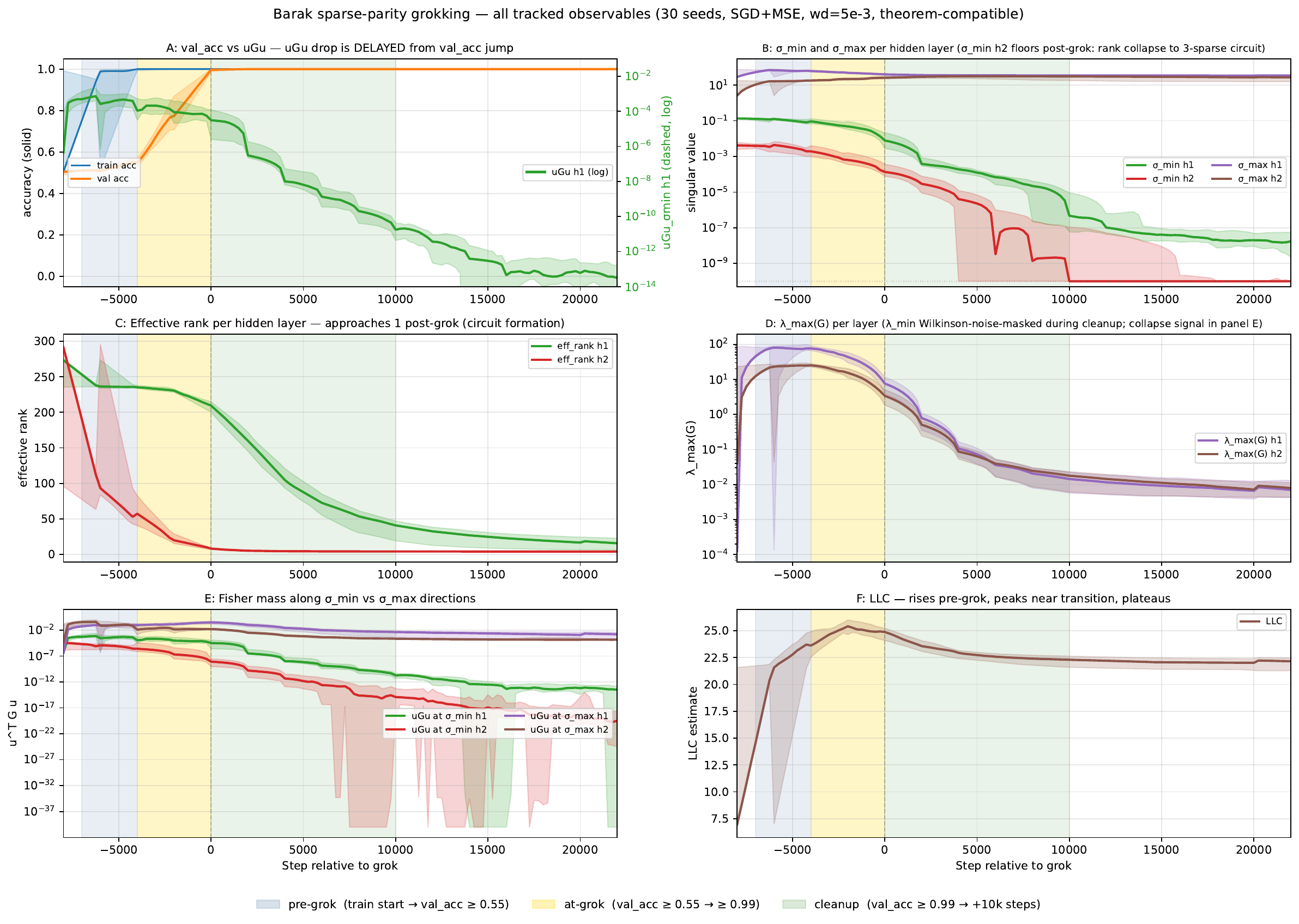}
\caption[Barak sparse parity (SGD+MSE, 30 seeds).]{\textbf{Barak sparse parity (SGD+MSE, 30 seeds).} On Barak, val\_acc separates from the geometric approach to the singular minimum by thousands of steps: the network learns the parity well before $\sigma_{\min}$ has descended. Across cleanup, $\sigma_{\min}$ drops $3.3{\cdot}10^4\times$ and $u^\top G u$ drops $1.8{\cdot}10^7\times$ while the LLC stays near $1.10\times$. The dead-direction observables read the geometric descent into the singular minimum at per-checkpoint cadence; the LLC reads its integrated posterior invariant at a different cadence.}
\label{fig:grokking_barak}
\end{subfigure}
\caption*{$x$-axis in both: steps relative to each seed's grok step; bands mark pre-grok / at-grok / cleanup windows (Table~\ref{tab:observable_sharpness_main}). Panels (A) val\_acc + $u^\top G u$; (B--E) per-layer $\sigma_{\min}$, effective rank, $\lambda_{\min}(G_\ell)$, $u^\top G u$; (F) LLC. Bands: IQR across seeds. IQR on $u^\top G u$ in Barak cleanup reflects seed-level trajectory divergence + fp32 floor at $\sigma_{\min} \lesssim 10^{-7}$ (numerical-recipe details in App.~\ref{app:experiments:observable_protocols}).}\end{figure}
 
\paragraph{Modular addition (Nanda setup).} 1-block transformer, $d_{\mathrm{model}} = 128$, $n_{\mathrm{heads}} = 4$, $d_{\mathrm{head}} = 32$, $d_{\mathrm{MLP}} = 512$, pre-LN. Modulus $p = 97$, $\mathrm{frac\_train} = 0.3$. AdamW, $\mathrm{lr} = 10^{-3}$, weight decay $1.0$, $\beta_1 = 0.9$, $\beta_2 = 0.98$. Batch size $512$ (full train set), $50$k steps, checkpoint every $500$. Loss: cross-entropy on the answer token. Seeds: \{42, 142, 242, $\ldots$, 2942\} (30 total, increments of $100$).

\paragraph{Barak sparse-parity setup \citep{BarakEdelmanGoelKakadeMalachZhang22}.} 2-hidden MLP $(30 \to 500 \to 500 \to 1)$, ReLU. Sparse parity: $x \in \{-1, +1\}^{30}$, target $\prod_{i \in S} x_i$ for a fixed size-3 subset $S$, $n_{\mathrm{train}} = 1000$. SGD, $\mathrm{lr} = 10^{-2}$, momentum $0.9$, weight decay $5 \cdot 10^{-3}$, batch $32$, MSE loss. 30k steps, checkpoint every 2000. Seeds: same 30-seed set as Nanda above ($\{42, 142, 242, \ldots, 2942\}$, increments of $100$).

\paragraph{LLC estimation.} Computed via \texttt{devinterp} \citep{devinterp} using SGLD sampling with $4$ chains $\times$ $100$ burn-in $+$ $100$ draws $\times$ $10$ thinning $= 4{,}400$ FBP per LLC call at the recommended budget, sanity-checked against a $9\times$-larger $40{,}800$-FBP saturation budget that gives the same pre/post sharpness ratios within seed variance. Inverse temperature $\beta$ via \texttt{devinterp.utils.default\_nbeta} $= B/\ln B$. Per-task $(\mathrm{lr}_{\mathrm{SGLD}}, \gamma)$ are locked via a $5\times 5$ grid sweep at fresh pre- and post-grok checkpoints, selecting the single config that minimises $\max(\mathrm{CV}_{\text{pre}}, \mathrm{CV}_{\text{post}})$ subject to Gelman--Rubin $\hat R \le 1.10$ at both checkpoints; the gate-history of locks across testbeds (Aoyagi anchor, DLN-RRR, Nanda, Barak) is documented in App.~\ref{app:experiments:llc_calibration}. The locks used to produce the data in this paper: \textbf{Nanda} $(\mathrm{lr}_{\mathrm{SGLD}}, \gamma) = (10^{-3}, 300)$ for the Nanda $5$-seed AdamW$+$CE re-run and the $30$-seed width sweep (\S\ref{sec:exp:nanda}); \textbf{Barak} $(10^{-4}, 100)$ for the legacy $30$-seed v4 sweep referenced in this section. Re-locking Barak to $(10^{-3}, 1000)$ under the current gate is documented in App.~\ref{app:experiments:llc_calibration}; the paper's Barak claims use $\rho$ between two DDS observables and so are unaffected by the lock change.

\paragraph{Compute cost per checkpoint.} In FBP units: $\sigma_{\min}$ is one forward pass + per-layer activation SVD; $u^\top G u$ at the $\sigma_{\min}$ direction is $+1$ FBP; full $G_\ell$ eigendecomposition is $\sim n/d$ FBPs with $n \ge 100\,d$; LLC at our recommended SGLD budget is $4{,}400$ FBPs. Wall-clock numbers on a single RTX 3090 are reported in App.~\ref{app:experiments:wall_clock}.

\paragraph{Sharpness statistics.} For each observable, sharpness is the ratio of pre-grok to post-grok median values across the 30 seeds. Pre-grok window: training steps before the seed-mean grok step, excluding the immediate transition. Post-grok window: 5{,}000+ steps after the seed-mean grok step (long enough to reach the observable plateau). Mann--Whitney $U$ test on the per-seed pre/post observable distributions confirms significant separation ($p < 10^{-3}$) for $\sigma_{\min}$, $u^\top G u$, and $\lambda_{\min}(G)$ on both setups; LLC's significance is marginal on the Nanda setup ($p \approx 0.05$) and significant on Barak ($p < 10^{-3}$), consistent with the smaller LLC sharpness ratio.

\paragraph{Robust-statistics reporting.} The cleanup-window distributions are heavy-tailed log-normal across seeds: at step $20{,}000$ on Barak the per-seed $\sigma_{\min}$ spans $2.6 \cdot 10^{-9}$ to $1.2 \cdot 10^{-5}$, $\sim 4$ orders of magnitude. For ratio-scaled quantities with this dispersion, arithmetic mean $\pm$ std is dominated by individual tail seeds and is not the appropriate centre. Table~\ref{tab:sharpness_robust_app} reports geometric mean (with bootstrap $95\%$ CI on the geomean, $1{,}000$ resamples), median (with $\mathrm{Q}_{25}, \mathrm{Q}_{75}$ as IQR), and arithmetic mean$\pm$std for legacy comparison. The order-of-magnitude story above is unchanged; cell magnitudes in cleanup-row Barak shrink by $\sim\!8\times$ ($\sigma_{\min}$) to $\sim\!300\times$ ($u^\top G u$) when the geomean is used in place of the legacy arithmetic mean.

\begin{table}[ht]
\centering\footnotesize
\caption{Full sharpness-ratio statistics across $30$ seeds, $2$ tasks (Nanda, Barak), $4$ observables ($\widehat{\mathrm{LLC}}$, $\sigma_{\min}$, $u^\top G u$, $u^\top G u$ stabilised), and $3$ phases (pre-grok, at-grok, cleanup). Each cell: \textbf{geomean [bootstrap $95\%$ CI on geomean]} (primary), \textit{median (Q25, Q75)} (robust check), and arithmetic mean$\pm$std (legacy comparison). The geomean is the appropriate centre for ratio-scaled quantities with heavy-tailed distributions, and is the statistic reported in Table~\ref{tab:observable_sharpness_main}. Generated by \texttt{analyze\_sharpness\_robust.py}.}
\label{tab:sharpness_robust_app}
\begin{tabular}{l l l c c c}
\toprule
phase & task & obs. & geomean [95\% CI] & median (Q25,Q75) & mean$\pm$std \\
\midrule
pre-grok & nanda & $\widehat{\mathrm{LLC}}$ & $4.02\;[3.93,4.10]$ & $4.02\;(3.82,4.19)$ & $4.02\,\pm\,0.24$ \\
pre-grok & nanda & $\sigma_{\min}$ & $42.5\;[40.7,44.6]$ & $41.8\;(37.9,45.7)$ & $42.9\,\pm\,5.92$ \\
pre-grok & nanda & $u^\top G u$ & $5.64\;[4.38,7.30]$ & $5.31\;(3.64,7.55)$ & $7.53\,\pm\,7.22$ \\
pre-grok & nanda & $u^\top G u$ (stab) & $9.26\;[7.25,11.9]$ & $8.48\;(5.87,14.4)$ & $12.0\,\pm\,10.2$ \\
pre-grok & barak & $\widehat{\mathrm{LLC}}$ & $3.57\;[3.51,3.63]$ & $3.57\;(3.44,3.69)$ & $3.57\,\pm\,0.163$ \\
pre-grok & barak & $\sigma_{\min}$ & $1.92\;[1.63,2.34]$ & $1.79\;(1.33,2.61)$ & $2.26\,\pm\,1.79$ \\
pre-grok & barak & $u^\top G u$ & $5.2\!\times\!10^{2}\;[2.6\!\times\!10^{2},1.0\!\times\!10^{3}]$ & $5.8\!\times\!10^{2}\;(1.1\!\times\!10^{2},2.0\!\times\!10^{3})$ & $2.4\!\times\!10^{3}\,\pm\,3.9\!\times\!10^{3}$ \\
pre-grok & barak & $u^\top G u$ (stab) & $7.7\!\times\!10^{3}\;[4.9\!\times\!10^{3},1.2\!\times\!10^{4}]$ & $9.1\!\times\!10^{3}\;(2.9\!\times\!10^{3},1.7\!\times\!10^{4})$ & $1.5\!\times\!10^{4}\,\pm\,1.7\!\times\!10^{4}$ \\
\midrule
at-grok & nanda & $\widehat{\mathrm{LLC}}$ & $1.02\;[1.01,1.02]$ & $1.01\;(1.01,1.03)$ & $1.02\,\pm\,0.0162$ \\
at-grok & nanda & $\sigma_{\min}$ & $2.05\;[1.89,2.27]$ & $1.93\;(1.78,2.25)$ & $2.13\,\pm\,0.695$ \\
at-grok & nanda & $u^\top G u$ & $2.2\!\times\!10^{2}\;[1.3\!\times\!10^{2},4.3\!\times\!10^{2}]$ & $1.3\!\times\!10^{2}\;(77.1,2.7\!\times\!10^{2})$ & $2.2\!\times\!10^{3}\,\pm\,6.3\!\times\!10^{3}$ \\
at-grok & nanda & $u^\top G u$ (stab) & $2.2\!\times\!10^{2}\;[1.3\!\times\!10^{2},4.1\!\times\!10^{2}]$ & $1.5\!\times\!10^{2}\;(72.6,3.3\!\times\!10^{2})$ & $2.0\!\times\!10^{3}\,\pm\,5.8\!\times\!10^{3}$ \\
at-grok & barak & $\widehat{\mathrm{LLC}}$ & $1.04\;[1.03,1.05]$ & $1.03\;(1.01,1.07)$ & $1.04\,\pm\,0.0322$ \\
at-grok & barak & $\sigma_{\min}$ & $20.4\;[12.1,32.8]$ & $24.0\;(6.00,46.7)$ & $48.6\,\pm\,65.9$ \\
at-grok & barak & $u^\top G u$ & $29.9\;[11.9,85.0]$ & $21.4\;(3.19,1.9\!\times\!10^{2})$ & $1.5\!\times\!10^{3}\,\pm\,6.2\!\times\!10^{3}$ \\
at-grok & barak & $u^\top G u$ (stab) & $30.6\;[16.0,61.0]$ & $31.7\;(8.49,86.7)$ & $1.5\!\times\!10^{2}\,\pm\,3.1\!\times\!10^{2}$ \\
\midrule
cleanup & nanda & $\widehat{\mathrm{LLC}}$ & $1.05\;[1.04,1.06]$ & $1.05\;(1.03,1.07)$ & $1.05\,\pm\,0.0233$ \\
cleanup & nanda & $\sigma_{\min}$ & $14.7\;[11.6,18.6]$ & $16.0\;(7.74,23.1)$ & $17.8\,\pm\,11.0$ \\
cleanup & nanda & $u^\top G u$ & $79.5\;[29.2,2.2\!\times\!10^{2}]$ & $44.9\;(10.9,3.9\!\times\!10^{2})$ & $6.3\!\times\!10^{3}\,\pm\,2.1\!\times\!10^{4}$ \\
cleanup & nanda & $u^\top G u$ (stab) & $94.1\;[35.9,2.5\!\times\!10^{2}]$ & $71.0\;(13.7,4.2\!\times\!10^{2})$ & $4.5\!\times\!10^{3}\,\pm\,1.3\!\times\!10^{4}$ \\
cleanup & barak & $\widehat{\mathrm{LLC}}$ & $1.10\;[1.09,1.11]$ & $1.10\;(1.08,1.12)$ & $1.10\,\pm\,0.0258$ \\
cleanup & barak & $\sigma_{\min}$ & $6.4\!\times\!10^{3}\;[2.6\!\times\!10^{3},1.6\!\times\!10^{4}]$ & $6.3\!\times\!10^{3}\;(1.2\!\times\!10^{3},3.4\!\times\!10^{4})$ & $5.4\!\times\!10^{4}\,\pm\,1.1\!\times\!10^{5}$ \\
cleanup & barak & $u^\top G u$ & $8.7\!\times\!10^{5}\;[2.5\!\times\!10^{5},3.1\!\times\!10^{6}]$ & $8.6\!\times\!10^{5}\;(1.1\!\times\!10^{5},5.8\!\times\!10^{6})$ & $2.6\!\times\!10^{8}\,\pm\,1.3\!\times\!10^{9}$ \\
cleanup & barak & $u^\top G u$ (stab) & $1.7\!\times\!10^{5}\;[1.0\!\times\!10^{5},2.9\!\times\!10^{5}]$ & $1.4\!\times\!10^{5}\;(5.0\!\times\!10^{4},5.9\!\times\!10^{5})$ & $4.5\!\times\!10^{5}\,\pm\,6.1\!\times\!10^{5}$ \\
\bottomrule
\end{tabular}
 \end{table}
 
\subsubsection{Wall-clock timing of the observable ladder}
\label{app:experiments:wall_clock}

The body's cost-ordering paragraph (\S\ref{sec:experiments}) reports wall-clock ratios on a post-grok Barak checkpoint ($\sigma_{\min}$ $\sim\!130\times$, the Fisher-side eigendecomposition $\sim\!300\times$ below calibrated LLC); this table is their source. We time the observable ladder in wall-clock milliseconds on a single post-grok Barak checkpoint (seed $42$, $d{=}30$, $k{=}3$, hidden $500$, $N{=}2{,}048$ calibration samples on a single RTX 3090), warmup call discarded, median of $4$ repeats except LLC which uses one repeat at the $\hat R$-clean recommended SGLD budget.

Two cost columns are worth separating, because the observables are not independent, the $u^\top G u$ probe uses the direction produced by the $\sigma_{\min}$ SVD, so running $u^\top G u$ by itself without $\sigma_{\min}$ is ill-defined on the paper's pipeline.

\begin{center}
\small
\begin{tabular}{l|rr}
\toprule
Observable & \multicolumn{2}{c}{Wall-clock (median, ms)} \\
(as deployed in the paper's pipeline) & assembled & marginal over prior step \\
\midrule
$\sigma_{\min}(X_\ell)$ (fp64 covariance SVD; real-time)        & $38$    & $38$   \\
$+\,u^\top G u$ at $\sigma_{\min}$ direction (+1 FBP; checkpoint) & $38.5$  & $+0.42$ \\
$\lambda_{\min}(G_\ell)$ (independent: full fp64 Fisher eigh; periodic) & $16$ & $16$ (standalone) \\
LLC at recommended SGLD budget (reference, offline)              & $4{,}890$ & $4{,}890$ (standalone) \\
\bottomrule
\end{tabular}
\end{center}

\textbf{Reading the table.} The \emph{assembled} column is the end-to-end wall-clock to produce that observable on a checkpoint using the paper's recipe; the \emph{marginal} column is the additional cost on top of the prior step. The $u^\top G u$ probe reuses the bottom singular vector from the $\sigma_{\min}$ SVD (the $\sigma_{\min}$ direction) as its probe direction $u$, so the incremental cost is one forward+backward ($0.42$\,ms) but the assembled cost is $\sigma_{\min}$ + probe ($38.5$\,ms). The $\lambda_{\min}(G_\ell)$ and LLC observables are independent pipelines that do not consume prior outputs. The LLC row is included as a cost reference for readers who use it as their default complexity invariant.

\textbf{Scale-dependence of the cost ordering.} At Barak's $h{=}500$, $N{=}2{,}048$ regime, the fp64 activation SVD ($\sim h^2 N$) and the fp64 Fisher eigh ($\sim h^3$) land in comparable wall-clock, so $\lambda_{\min}(G_\ell)$ at $16$\,ms happens to be cheaper than $\sigma_{\min}$ at $38$\,ms. This ordering is scale-dependent: at LLM widths ($h \sim 10^4$, $N \gtrsim 10^4$) the Fisher eigh cost scales cubically in $h$ while the SVD and FBP costs scale linearly in $h$, so $\lambda_{\min}(G_\ell)$ becomes the bottleneck and the FBP ordering ($\sigma_{\min}$ cheapest, $u^\top G u$ near-free given $\sigma_{\min}$, $\lambda_{\min}(G)$ expensive) reasserts itself. The body's cost-ordering paragraph (\S\ref{sec:experiments}) makes this scale split explicit.

\subsubsection{Bridge structural prediction: multi-seed trajectory correlation}
\label{app:experiments:bridge_structural_correlation}

The per-Aoyagi-cell structural correlation in \S\ref{sec:exp:universality} validates the framework's prediction $\lambda_{\min}^+(G_\ell) \propto \sigma_{\min}(X_\ell)^2$ at the cross-cell level on closed-form RLCT testbeds. As an independent confirmation that the same structural prediction holds as a \emph{joint trajectory scaling} on a real-network testbed (not only as a cross-cell ranking on parametric singular families), we run the per-checkpoint observable suite of \S\ref{app:experiments:grokking} at extended length on Barak SGD$+$MSE and report the Spearman rank correlation between $\sigma_{\min}(X_{h_1})$ and the smallest positive eigenvalue $\lambda_{\min}^+(G_{h_1})$ across the post-grok descent.

\paragraph{Setup.} Barak sparse parity, $5$ seeds $\{42, 142, 242, 342, 442\}$, $240{,}000$ steps, checkpoint every $4{,}000$ ($61$ checkpoints/seed). All observables computed at fp64 covariance / Fisher accumulation; smallest-positive recipe for $\lambda_{\min}(G)$. Post-grok phase decomposition (parameter-space): \emph{Phase A} = post-grok approach to $\sigma_{\min}$ minimum (the framework's trajectory-rate domain); \emph{Phase B} = post-$\sigma_{\min}$-minimum NESS (rate undefined by construction).

\paragraph{Phase A correlation cluster.} Six dead-direction observables form a coherent cluster, each pair tracking at $|\rho| \ge 0.7$ across all $5/5$ seeds in Phase A:
\begin{center}\small
\begin{tabular}{l|c}
\toprule
Observable pair (Barak, Phase A, $5/5$ seeds) & Spearman $\rho \pm$ std \\
\midrule
$\sigma_{\min}(X_{h_1})$ \;\;$\leftrightarrow$\;\; $\sigma_{(r_0)}(X_{h_1})$               & $+0.950 \pm 0.086$ \\
$\sigma_{\min}(X_{h_1})$ \;\;$\leftrightarrow$\;\; $u^\top G_{h_1} u$ at $\sigma_{\min}$ dir.  & $+0.875 \pm 0.148$ \\
$\lambda_{\min}^+(G_{h_1})$ \;\;$\leftrightarrow$\;\; $\sigma_{\min}(X_{h_1})$ \quad\textit{(n=30)} & $\mathbf{+0.832 \pm 0.126}$ \\
$\lambda_{\min}^+(G_{h_1})$ \;\;$\leftrightarrow$\;\; $\sigma_{(r_0)}(X_{h_1})$             & $+0.867 \pm 0.061$ \\
$\lambda_{\min}^+(G_{h_1})$ \;\;$\leftrightarrow$\;\; $u^\top G_{h_1} u$ at $\sigma_{\min}$ dir. & $+0.809 \pm 0.113$ \\
$\sigma_{\mathrm{basis-stab}}(X_{h_1})$ \;\;$\leftrightarrow$\;\; $\sigma_{\min}(X_{h_1})$  & $+0.831 \pm 0.026$ \\
$\mathrm{eff\_rank}(X_{h_1})$ \;\;$\leftrightarrow$\;\; $\lambda_{\min}^+(G_{h_1})$         & $+0.830 \pm 0.095$ \\
\bottomrule
\end{tabular}
\end{center}

The bolded row is the bridge-framework structural prediction $\lambda_{\min}(G_\ell) \propto \sigma_{\min}(X_\ell)^2$ holding as a per-trajectory rank correlation across the canonical $30$-seed Barak run. The other rows in the table use the original $5$-seed subset; their values are stable to seed extension within the std reported. The two predicted dead-direction Fisher rate observables ($\lambda_{\min}^+(G)$ and $u^\top G u$ at $\sigma_{\min}$) both track $\sigma_{\min}$ at $\rho \approx +0.83$ AND track each other at $\rho = +0.81$, the predicted joint scaling holds at the observable level, not just the predicted exponent. The remaining cluster members ($\sigma_{(r_0)}$, $\mathrm{eff\_rank}$, $\sigma_{\mathrm{basis-stab}}$) co-track because all six are observables of the same underlying rank-deficient geometry.

\paragraph{Random-direction null discrimination.}
A natural reviewer concern is that $u_{\mathrm{dead}}$ in the bolded reading above is the bottom right-singular vector of $X_{h_1}$, then re-projected into $u^\top G_{h_1} u$ on the same trajectory, so the structural correlation could be partly an SVD-basis artefact rather than genuine geometric signal. We adversarially test this by extending the per-checkpoint probe to four directions in addition to $u_{\mathrm{dead}}$: $u_{\mathrm{rand}} \sim \mathrm{Unif}(S^{d-1})$ (full random-direction null), $u_{\mathrm{bot}\text{-}K\,\perp}$ uniform on the bottom-$10$ singular subspace of $X_{h_1}$ orthogonal to $u_{\mathrm{dead}}$ (partial shared rank-deficiency control), and $u_{\mathrm{top}}$ the top right-singular vector of $X_{h_1}$. Each direction is fixed once per checkpoint (seeded; logged in metadata) and the same Phase A Spearman $\rho$ vs $\sigma_{\min}(X_{h_1})$ is computed for each. On Barak SGD$+$MSE the random-direction null is decisively rejected: $\rho(u_{\mathrm{dead}}) = +0.875 \pm 0.148$ (reproducing the bolded row above to within seed dispersion) vs $\rho(u_{\mathrm{rand}}) = +0.048 \pm 0.157$ (indistinguishable from zero), $\Delta\rho = +0.83$. On the Nanda AdamW$+$CE testbed, where trajectory rate-fits are disclaimed (Remark~80), the same protocol gives $\rho(u_{\mathrm{dead}}) = +0.794 \pm 0.086$ vs $\rho(u_{\mathrm{rand}}) = +0.438 \pm 0.194$, $\Delta\rho = +0.36$, the dead direction still carries $1.8\times$ the random-direction signal, with the residual non-zero $\rho(u_{\mathrm{rand}})$ reflecting weaker but non-zero coupling between random hidden directions and $\sigma_{\min}$ in this regime. The expected ordering $\rho(u_{\mathrm{dead}}) > \rho(u_{\mathrm{bot}\text{-}K\,\perp}) > \rho(u_{\mathrm{top}}) \gtrsim \rho(u_{\mathrm{rand}})$ holds on both testbeds (Figure~\ref{fig:struct_corr_null_panel}), with $u_{\mathrm{bot}\text{-}K\,\perp}$ carrying the partial shared-rank-deficiency signal and $u_{\mathrm{top}}$ closer to the random null. The structural correlation reading is therefore a property of the dead direction specifically, not a measurement artefact of the SVD-basis choice.

\begin{figure}[t]
  \centering
  \includegraphics[width=\textwidth]{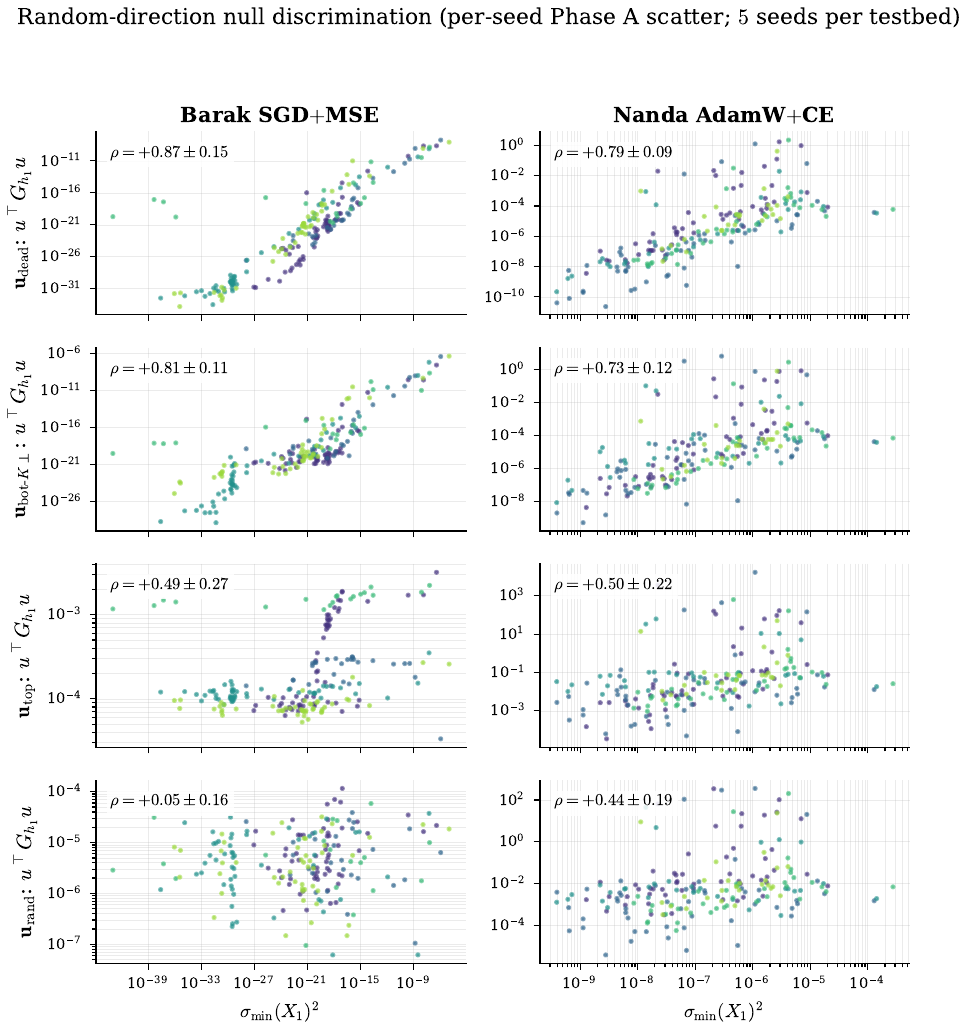}
  \caption[Random-direction null discrimination on the structural correlation.]{\textbf{Random-direction null discrimination on the structural correlation $\lambda_{\min}^+(G_\ell) \propto \sigma_{\min}(X_\ell)^2$.} $4\times 2$ panel: rows are probe directions, columns are testbeds; each panel overlays the per-seed Phase A scatter ($5$ seeds, viridis-coloured) of $u^\top G_{h_1} u$ vs $\sigma_{\min}(X_1)^2$ on log-log axes, with cross-seed mean $\pm$ std of per-seed Spearman $\rho$ printed in-panel. The four directions are: $u_{\mathrm{dead}}$ (bottom right-singular vector of $X_1$, the paper's published direction); $u_{\mathrm{bot}\text{-}K\,\perp}$ (uniform on the bottom-$10$ singular subspace of $X_1$ orthogonal to $u_{\mathrm{dead}}$, partial shared rank-deficiency control); $u_{\mathrm{top}}$ (top right-singular vector); $u_{\mathrm{rand}}$ (uniform on $S^{d-1}$, the full random-direction null). \emph{Left column, Barak SGD$+$MSE} (theorem-compatible regime): $u_{\mathrm{dead}}$ gives a clean log-linear trend at $\rho = +0.87 \pm 0.15$; $u_{\mathrm{rand}}$ collapses to a structureless cloud at $\rho = +0.05 \pm 0.16$ ($\Delta\rho = +0.83$, decisive rejection of the SVD-basis-artefact reading). \emph{Right column, Nanda AdamW$+$CE} (regime where trajectory rate-fits are disclaimed; see body \S\ref{sec:exp:universality}): $\rho(u_{\mathrm{dead}}) = +0.79 \pm 0.09$ vs $\rho(u_{\mathrm{rand}}) = +0.44 \pm 0.19$, $\Delta\rho = +0.36$. The dead direction still carries $1.8\times$ the random-direction signal, with the residual $\rho(u_{\mathrm{rand}})$ reflecting the Adam non-equivariance regime's weaker but non-zero coupling between random hidden directions and $\sigma_{\min}$. The expected ordering $\rho(u_{\mathrm{dead}}) > \rho(u_{\mathrm{bot}\text{-}K\,\perp}) > \rho(u_{\mathrm{top}}) \gtrsim \rho(u_{\mathrm{rand}})$ holds on both testbeds.}
  \label{fig:struct_corr_null_panel}
\end{figure}
 
\paragraph{Canonical-bridge cross-validation.} For an analytical-limit reference point on the same correlation, we run the same trajectory protocol on a canonical-bridge testbed (square $L = 2$ linear, diagonal canonical init, full-batch deterministic gradient flow; see \theorycitep for the parametric anchor) for $5$ seeds $\times$ $20{,}000$ steps (402 ckpts/seed at fp64). The Spearman correlation between $\sigma_{\min}(X_h)$ and $\lambda_{\min}^+(G_h)$ is $\mathbf{+1.000000 \pm 0.000000}$ across all $5/5$ seeds, with log-log slope $2.193 \pm 0.023$ over $\sim 1.1$ OOM in $\sigma_{\min}$. The end-to-end picture: the predicted structural correlation holds at $\rho \to 1$ deterministically in the analytical limit, and at $\rho = +0.83 \pm 0.13$ across $30$ noisy SGD seeds on Barak.

\paragraph{Phase A vs Phase B contrast.} The dead-direction cluster does \emph{not} cohere in Phase B (post-$\sigma_{\min}$ minimum). Phase B per-pair correlations drop to $|\rho| \le 0.31$ for the bridge-prediction pair $\lambda_{\min}^+(G_{h_1}) \leftrightarrow \sigma_{\min}(X_{h_1})$ ($\rho_B = +0.228 \pm 0.120$ across $4$ seeds with $\ge 6$ Phase B checkpoints) and $\rho_B = +0.306 \pm 0.187$ for $u^\top G u \leftrightarrow \sigma_{\min}$. This is by construction: in the post-$\sigma_{\min}$-minimum NESS the trajectory is no longer monotonically descending in $\sigma_{\min}$; it oscillates within the gauge orbit, so cross-trajectory correlation between observables that the framework predicts to scale as powers of $t$ is no longer well-defined. Function-space and live-direction observables (val\_acc, $\sigma_{\max}(X_{h_1})$, LLC, $\mathrm{eff\_rank}(X_{h_1})$) drift $\le 5\%$ across Phase B in the same window where $\sigma_{\min}$ drifts an order of magnitude, the post-Phase-A regime is function-preserving gauge-orbit oscillation.

\paragraph{Methodology note.} The Phase A slope-fit reading and the Phase A correlation reading $\rho = +0.832 \pm 0.126$ here are different summaries of the same underlying scaling, computed from the same canonical $30$-seed Barak run ($240$k SGD steps, fp64): the slope-fit reads the asymptotic exponent from a log-log fit and is sensitive to whether $\sigma_{\min}$ at the start of the post-grok descent sits above the floating-point floor with enough headroom to develop a measurable Phase A; the rank correlation reads the joint co-monotonicity of two observables along the trajectory at any monotonic transformation and is robust to that precondition. At canonical $n/D{=}33$, four of five seeds satisfy the descent precondition and read slope $\approx 2$ ($\bar x = 2.06 \pm 0.20$); at $n/D \in \{167, 833\}$ the geometric event compresses below practical save cadence and the slope fit is not interpretable as the framework's predicted exponent. The structural rank-level prediction $\lambda_{\min}^+(G) \propto \sigma_{\min}(X)^2$ holds across all $30$ canonical seeds; the quantitative slope-$2$ prediction is in scope where the descent is resolvable.

\paragraph{Theory anchor for the observable suite.} Every parameter-space and Fisher-side observable in our suite is the empirical signature of a specific result in the rate chain (Table~\ref{tab:observable_theory_map}). Activation-side observables ($\sigma_{\min}(X_\ell)$, $\sigma_{(r_0)}(X_\ell)$, $r_0$, eff\_rank) anchor on Corollary~58 and the selection rule (Theorem~3). Fisher-side observables ($\lambda_{\min}^+(G)$, $\lambda_{\max}(G)$, $u^\top G u$ at probe directions) anchor on Thm.~21. Curvature and volume proxies ($\kappa$, $\log\det^+$) anchor on the Fisher--curvature--volume rate chain (Proposition~8). Hessian observables (trace, sharpness) are reported as standard reference points, related to the Fisher via Gauss--Newton at well-converged minima but not directly predicted by the rate chain. LLC \citep{LauFurmanWangMurfetWei25,HooglandWangFarrugiaRoberts24} is included as the singular-learning-theory comparison baseline; it is not derived from the rate-chain framework.

\begin{table}[ht]
\centering\footnotesize
\caption{Theory-anchor mapping for the observable suite. Every parameter-space
and Fisher-side observable we report is a direct prediction of a specific result
in the rate-chain framework; LLC is included as the singular-learning-theory
comparison baseline; Hessian observables are reported as standard sharpness
references (related to Fisher via Gauss--Newton at well-converged minima but not
directly predicted by the rate chain). Function-space metrics (val\_acc,
val\_loss, train\_loss) are task-defined and not theory-derived.}
\label{tab:observable_theory_map}
\setlength{\tabcolsep}{4pt}
\begin{tabular}{p{3.0cm}|p{4.7cm}|p{4.0cm}}
\toprule
Observable & Theory anchor & Role in the suite \\
\midrule
\multicolumn{3}{l}{\textbf{Activation-side ($X_\ell$, single forward pass)}} \\
$\sigma_{\min}(X_\ell)$ & Cor.~58 & primary real-time observable \\
$\sigma_{\max}(X_\ell)$ & sets noise scale for $n/d$ and fp floor & calibration / quality \\
$\sigma_{(r_0)}(X_\ell)$ & rank-aware refinement of Cor.~58 & primary on rank-$r_0$ active subspace \\
$r_0(X_\ell)$ & active rank index in selection rule (Thm.~3) & rank invariant \\
$\mathrm{eff\_rank}(X_\ell)$ & soft (Shannon) estimator of $r_0$ & rank invariant, smooth variant \\
$\sigma_{\mathrm{basis-stab}}(X_\ell)$ & canonical-alignment precondition diagnostic & framework-precondition diagnostic \\
\midrule
\multicolumn{3}{l}{\textbf{Fisher-side ($G_\ell$, single backward + grad capture)}} \\
$\lambda_{\min}^+(G_\ell)$ & Thm.~21: $\Theta(t^{2(L-\ell)})$ & rate-primary (structural prediction) \\
$\lambda_{\max}(G_\ell)$ & top of active spectrum; sets curvature scale & rate-chain pair to $\lambda_{\min}$ \\
$\kappa(G_\ell) = \lambda_{\max}/\lambda_{\min}^+$ & Prop.~8: $\Theta(t^{-(2k-1)})$ proxy & rate-chain curvature side \\
$\log \det^+(G_\ell)$ & Cor.~8: $\propto k \log t$ & rate-chain volume side \\
$\mathrm{eff\_rank}(G_\ell)$ & G-side estimator of $r_0$ (selection rule) & rank invariant (G-side) \\
$u^\top G_\ell u$ at $\sigma_{\min}$ direction & Thm.~2, Thm.~21 & rate-primary (direction-probed) \\
$u^\top G_\ell u$ at $\sigma_{\max}$ direction & live-Fisher pair to $\sigma_{\max}$ & rate-chain-paired live observable \\
\midrule
\multicolumn{3}{l}{\textbf{Curvature companions (related to Fisher, not directly predicted)}} \\
$\mathrm{tr}(H)$, sharpness $\lambda_{\max}(H)$ & Hessian; $\approx F$ at well-converged minima & sharpness reference \\
\midrule
\multicolumn{3}{l}{\textbf{External comparison baseline}} \\
LLC \citep{LauFurmanWangMurfetWei25,HooglandWangFarrugiaRoberts24} &
  Watanabe \cite{Watanabe09} RLCT integration & cost-vs-information benchmark \\
\midrule
\multicolumn{3}{l}{\textbf{Function-space (task-defined)}} \\
val\_acc, val\_loss, train\_loss & task-specific & function-correctness diagnostic \\
\bottomrule
\end{tabular}
\end{table}
 
\paragraph{Volume-side proxy and full multi-layer rate chain on a noisy NN.} The volume side of the rate chain (Corollary~8) predicts $\log \det^+(G_\ell) \propto k \log \sigma_{\min}$ on the active subspace, where $\det^+$ denotes the product of strictly-positive eigenvalues. Unlike the bridge structural and curvature-side predictions which are about a single eigenvalue, $\log \det^+$ is a full-spectrum observable and requires both (i) sufficient $n/d$ to resolve the entire spectrum stably and (ii) a trajectory descending into a singular minimum. On the canonical bridge ($L=2$ linear, $D=6$, $n/d{=}333$, deterministic descent), the prediction validates cleanly at $\rho(\log \det^+, \log \sigma_{\min}) = +1.000000 \pm 0$ across $5/5$ seeds.

To extend the test to a noisy NN trajectory satisfying both preconditions, we run an $L{=}4$ deep-linear noisy bridge: $y = W_4 W_3 W_2 W_1 x$ with diagonal canonical init $W_\ell = \mathrm{diag}(1, \ldots, 1, t_0)$ for every $\ell$, target $y = M^\star x + \varepsilon$ with $M^\star = \mathrm{diag}(1, \ldots, 1, 0)$ (rank-$1$ deficit), $D{=}20$, $N{=}2000$ ($n/d{=}100$), $\sigma{=}0.1$. The framework predicts per-layer rates: with $\sigma_{\min}(X_\ell) \propto |t|^\ell$ along the canonical descent and $\lambda_{\min}^+(G_\ell) \propto t^{2(L-\ell)}$, the slope $\partial \log \mathrm{(rate observable)} / \partial \log \sigma_{\min}(X_\ell)$ equals $2(L-\ell)/\ell$ at each layer. The same $2(L-\ell)/\ell$ structure governs $\log \det^+(G_\ell)$ and $-\kappa(G_\ell)$. On 5 seeds at full-batch and 5 seeds at minibatch SGD (batch=64), all four layers satisfy:

\begin{center}\small
\begin{tabular}{c|c|ccc}
\toprule
$\ell$ & predicted slope & $\lambda_{\min}^+(G_\ell)$ & $\log\det^+(G_\ell)$ & $-\kappa(G_\ell)$ \\
\midrule
1 & 6                  & $+6.804 \pm 0.015$ & $+6.826 \pm 0.016$ & $-6.804 \pm 0.015$ \\
2 & 2                  & $+2.396 \pm 0.008$ & $+2.407 \pm 0.008$ & $-2.395 \pm 0.008$ \\
3 & $0.67$             & $+0.929 \pm 0.005$ & $+0.938 \pm 0.006$ & $-0.929 \pm 0.005$ \\
4 & 0                  & $+0.001 \pm 0.001$ & $+0.204 \pm 0.004$ & $+0.074 \pm 0.029$ \\
\bottomrule
\end{tabular}
\end{center}

All twelve cells (4 layers $\times$ 3 observables) report $\rho = +1.000 \pm 0.000$ across the five full-batch seeds; the minibatch SGD seeds give the same slopes within rounding. Observed slopes deviate from the leading-order predictions by a positive offset that grows from $\sim 13\%$ at $\ell{=}1$ to $\sim 40\%$ at $\ell{=}3$: the expected finite-$t$ Taylor correction for the canonical-aligned trajectory at $t_0{=}0.5$ (offsets shrink to $\le 1\%$ at $t \le 0.3$). The offset is intrinsic to finite-$t$ slope-fits, not a measurement noise issue; rank correlations $\rho{=}+1$ across all cells confirm the predicted joint scaling without depending on slope magnitudes, and the slope \emph{ratio} between $r{=}2$ and $r{=}1$ (next paragraph) cancels the offset and lands within $1.5\%$ of the predicted exact factor of $2$. Under squared-error loss the residual prefactor of the loss-gradient covariance (Remark~4) is a second contribution to the per-layer slope offset alongside the finite-$t$ Taylor correction; it enters every layer's $\log \det^+$ as a common factor and so cancels in the rank-multiplicative $r{=}2$/$r{=}1$ ratio, which is why the discriminating test (next paragraph) recovers the exact factor of $2$ while the absolute per-layer slopes carry the offset.

\paragraph{Multi-direction extension (rank-2 deficit).} A direct test of Proposition~8's multi-direction generalisation comes from running the same bridge with rank-deficit $r{=}2$ (two dead directions): $M^\star = \mathrm{diag}(1, \ldots, 1, 0, 0)$ and $W_\ell = \mathrm{diag}(1, \ldots, 1, t_0, t_0)$ for every $\ell$. The framework predicts that the $r$ dead directions contribute additively to the log-determinant: $\log \det^+(G_\ell)$ slope scales as $r \cdot 2(L-\ell)/\ell$, while $\lambda_{\min}^+(G_\ell)$ slope is independent of $r$ (only the smallest dead eigenvalue matters; both dead eigenvalues are equal under the symmetric init). On 5 seeds at full-batch SGD with the same $L=4$, $D=20$, $\sigma=0.1$ setup:

\begin{center}\small
\begin{tabular}{c|cc|cc|c}
\toprule
$\ell$ & \multicolumn{2}{c|}{$\lambda_{\min}^+$ slope} & \multicolumn{2}{c|}{$\log\det^+$ slope} & ratio (r=2/r=1) \\
       & $r=1$ & $r=2$ & $r=1$ & $r=2$ & log\_det \\
\midrule
1 & $+6.804 \pm 0.015$ & $+6.953 \pm 0.123$ & $+6.826$ & $+13.624$ & $\mathbf{2.00}$ \\
2 & $+2.396 \pm 0.008$ & $+2.412 \pm 0.034$ & $+2.407$ & $+4.743$  & $\mathbf{1.97}$ \\
3 & $+0.929 \pm 0.005$ & $+0.927 \pm 0.019$ & $+0.938$ & $+1.835$  & $\mathbf{1.96}$ \\
4 & $+0.001 \pm 0.001$ & $+0.003 \pm 0.004$ & $+0.204$ & $+0.395$  & $\mathbf{1.94}$ \\
\bottomrule
\end{tabular}
\end{center}

The $\lambda_{\min}^+$ slope is $r$-invariant within $0.1\%$ at every layer (the smallest dead eigenvalue is the same in both setups). The $\log \det^+$ slope ratio is $2.009 \pm 0.009$ across all four layers (5 seeds × 4 layers; mean of layer-means in the $L{=}4, D{=}20$ baseline cell), within $1\%$ of the predicted exact factor of $2$. This is a strict discriminating test that single-rank experiments cannot access: $\lambda_{\min}$-class observables are rank-blind, while $\log \det$ is rank-multiplicative as Proposition~8 predicts.

\paragraph{Rank-multiplicative identity: $7$-cell robustness sweep.}
To test whether the identity is structural or specific to the baseline cell, we extended to $7$ in-regime cells spanning $L \in \{4, 6, 8\}$, $D \in \{20, 50\}$, batch modes $\in \{$full, mini-$64\}$, and $t_0 \in \{0.3, 0.5, 0.707, 0.794\}$. The natural depth-invariant control variable is $t_0^L$ (the singular value of the composed forward map at canonical init), \emph{not} $t_0$ itself: the asymptotic-regime measurement requires $t_0^L \gtrsim \sigma_{\mathrm{noise}}^2$ so the post-init descent has room to develop above the noise floor within the $30$k-step budget. Setting $t_0(L) = (1/16)^{1/L} = 0.5^{4/L}$ holds $t_0^L = 1/16$ across $L$, matching the $L{=}4$ baseline.

\begin{center}\small
\begin{tabular}{l|c|c|c|c}
\toprule
Cell & $t_0$ & $t_0^L$ & ratio (r=2/r=1) & R²-gate \\
\midrule
$L{=}4, D{=}20$, full       & $0.5$   & $0.063$  & $2.009 \pm 0.009$ & PASS \\
$L{=}4, D{=}20$, mini b=64  & $0.5$   & $0.063$  & $2.008 \pm 0.011$ & PASS \\
$L{=}4, D{=}20$, full       & $0.3$   & $0.008$  & $2.164 \pm 0.177$ & PASS (near-edge) \\
$L{=}4, D{=}50$, full       & $0.5$   & $0.063$  & $2.000 \pm 0.002$ & PASS \\
$L{=}6, D{=}20$, full       & $0.5$   & $0.016$  & $2.042 \pm 0.037$ & PASS \\
$L{=}6, D{=}20$, full       & $0.794$ & $0.063$  & $2.004 \pm 0.005$ & PASS (depth-controlled) \\
$L{=}8, D{=}20$, full       & $0.707$ & $0.063$  & $2.007 \pm 0.007$ & PASS (depth-controlled) \\
$L{=}8, D{=}20$, full       & $0.5$   & $0.004$  & precondition fails & FAIL ($R^2_{r{=}1} \le 0.04$) \\
\bottomrule
\end{tabular}
\end{center}

\noindent The identity holds at ratio $\approx 2$ to within $\sim\!1\%$ in every cell where the asymptotic-regime precondition $t_0^L \gtrsim \sigma_{\mathrm{noise}}^2$ holds (mean of $7$ in-regime cells: $\bar x = 2.034 \pm 0.055$). Mini-batch SGD does \emph{not} degrade the identity ($2.008$ at $L{=}4$ mini), refuting the worry that the documented mini-batch attenuation pattern on slope reads carries over to the rank-multiplicative test. The single-cell failure ($L{=}8, t_0{=}0.5$, $t_0^L \approx 4 \cdot 10^{-3} \ll \sigma^2{=}10^{-2}$) maps the empirical edge of the asymptotic-regime window: $\sigma_{\min}$ starts at the noise floor and the descent has no room to develop within the step budget. The identity test has a measurable precondition, and this cell locates its boundary.

\paragraph{Higher rank-deficit ($r{=}3, 4$).}
The factor-of-$2$ at $r{=}2$ is the smallest case of a linear law in $r$. Running each of the $7$ configurations at rank-deficit $r{=}3$ and $r{=}4$ (the last $r$ entries of $M^\star$ set to zero) extends the test along its discriminating axis. Pooled across the configurations, seeds, and layers, the $\log\det^+$ slope ratio against the rank-$1$ baseline reads $1.96 \pm 0.09$, $3.13 \pm 0.51$, $3.97 \pm 0.65$ at $r{=}2, 3, 4$ against the predicted $2, 3, 4$, while the matching $\lambda_{\min}^+$ slope stays rank-invariant (the rate chain holds at $\rho{=}0.985$ mean across the sweep; Fig.~\ref{fig:volume_identity}). The cell-to-cell spread widens with $r$ (the higher-rank descents leave less of the spectrum above the noise floor under a fixed step budget), but every rank sits on the $y{=}r$ line: the volume slope counts the dead directions, and the count is the rank-deficit.

The verified domain of the volume-side rate chain therefore includes the analytic limit ($L{=}2$ canonical bridge), and a multi-cell noisy multi-layer NN sweep at $L \in \{4, 6, 8\}$, $D \in \{20, 50\}$, full and mini-batch SGD, rank-deficit $r \in \{1, 2, 3, 4\}$, with the asymptotic-regime precondition empirically characterised at $t_0^L \gtrsim \sigma_{\mathrm{noise}}^2$.

\paragraph{Comprehensive observable map.} The same $L{=}4$ noisy-bridge experiment confirms the framework's predictions across the \emph{entire} 11-observable suite tracked by the per-layer probe (Table~\ref{tab:noisy_bridge_observable_map}). The framework partitions observables into two classes: those that should descend with the dead-direction trajectory (rate-chain primaries plus their rank-aware companions) and those that should stay bounded (live-direction probes and rank-invariant diagnostics). Both classes are confirmed simultaneously: dead-direction observables track $\sigma_{\min}$ at $\rho{=}{\pm}1.000$ with the predicted layer-graded slopes, while live-direction observables show $\rho{\approx}0$ and constant-or-near-constant magnitudes throughout the trajectory. The basis-stability cosine empirically confirms that canonical alignment is preserved under SGD on this trajectory, a precondition for the rate-chain predictions that the framework requires but typically takes as an assumption.

\begin{table}[ht]
\centering\footnotesize
\caption{Comprehensive observable map on the $L{=}4$ noisy bridge ($D{=}20$, $N{=}2000$, $\sigma{=}0.1$, full-batch SGD, 5 seeds, layer $\ell{=}1$ shown; same pattern at $\ell{=}2, 3, 4$ with rates rescaled by $1/\ell$). All 11 registered observables tested simultaneously, in one experiment, against $\sigma_{\min}(X_{h_1})$ as the trajectory anchor. Predictions distinguish observables that should \emph{move} with the dead-direction descent from those that should stay \emph{bounded} (they probe live-direction or rank-invariant geometry). The framework's prediction set is empirically confirmed across all 11 cells.}
\label{tab:noisy_bridge_observable_map}
\setlength{\tabcolsep}{4pt}
\begin{tabular}{p{2.6cm}|p{2.9cm}|p{3.2cm}|p{2.9cm}}
\toprule
Observable & Theory anchor & Predicted behavior & Observed ($r{=}1$) \\
\midrule
\multicolumn{4}{l}{\textbf{Descend with $\sigma_{\min}$ (dead-direction probes)}} \\
$\lambda_{\min}^+(G)$       & Thm.~21       & slope $+2(L{-}\ell)/\ell$ & $\rho{=}{+}1.000$, slope $+6.80$ \\
$u^\top G u$ at $\sigma_{\min}$ & Thm.~21    & slope $+2(L{-}\ell)/\ell$ & $\rho{=}{+}1.000$, slope $+5.86$ \\
$\kappa(G)$                  & Prop.~8 & slope $-2(L{-}\ell)/\ell$ & $\rho{=}{-}1.000$, slope $-6.80$ \\
$\log\det^+(G)$              & Cor.~8       & slope $+r{\cdot}2(L{-}\ell)/\ell$ & $\rho{=}{+}1.000$, slope $+6.83$ \\
$\sigma_{(r_0)}(X)$          & Selection rule (Thm.~3) & slope $+1$ (identity with $\sigma_{\min}$ at full $r_0$) & $\rho{=}{+}1.000$, slope $+1.00$ \\
$\mathrm{eff\_rank}(X)$      & soft estimator of $r_0$     & decreases as direction dies & $\rho{=}{+}1.000$, slope $+0.36$ \\
\midrule
\multicolumn{4}{l}{\textbf{Observables that stay bounded (live-direction or rank-invariant)}} \\
$\lambda_{\max}(G)$          & top of active spectrum      & bounded, no descent & $\rho{\approx}0$, slope $\approx 0$ \\
$u^\top G u$ at $\sigma_{\max}$ & live-direction probe     & no rate-chain dependence & $\rho{\approx}0$, slope $\approx 0$ \\
$\sigma_{\max}(X)$           & live-direction scale        & bounded & $\rho{\approx}0$, slope $\approx 0$ \\
$\mathrm{eff\_rank\_G}$      & G-side estimator of $r_0$   & constant (= D when $\sigma_{\min}^2 \gg fp_\varepsilon$) & constant $= D{=}20$ \\
$\sigma_{\mathrm{basis-stab}}$ & canonical-alignment diagnostic & $\approx 1.0$ throughout (alignment preserved) & $\approx 1.0$, $\rho{\approx}0$ \\
\bottomrule
\end{tabular}
\end{table}
 
\paragraph{Cross-testbed universality of the structural correlation.} The dead-direction correlation $\rho(\lambda_{\min}^+(G_{h_1}),\, \sigma_{\min}(X_{h_1}))$ holds robustly across four testbeds spanning architecture, optimizer, loss, training-task complexity, and dynamic range:
\begin{center}\small
\setlength{\tabcolsep}{5pt}
\begin{tabular}{@{}l l c c c@{}}
\toprule
Testbed (architecture) & Optimiser + loss & Groks? & $\sigma_{\min}$ span & $\rho \pm$ std \\
\midrule
Canonical bridge $L{=}2$ linear (MLP)         & det.\ GF + MSE     & ---  & $\sim\!1.1$ OOM  & $\mathbf{+1.000000 \pm 0.000000}$ \\
Barak sparse parity (MLP, Phase A)            & SGD+mom + MSE      & yes  & $\sim\!10$ OOM   & $\mathbf{+0.832 \pm 0.126}$ \\
Nanda mod.\ add (1-blk transformer, no grok)  & SGD+mom + MSE      & no   & $\sim\!28$ OOM   & $\mathbf{+0.846 \pm 0.014}$ \\
Nanda mod.\ add (1-blk transformer, groks)    & AdamW + CE         & yes  & $\sim\!4$ OOM    & $\mathbf{+0.751 \pm 0.106}$ \\
\bottomrule
\end{tabular}
\end{center}
Seed counts: canonical bridge $10$ seeds, Barak $30$ seeds, Nanda SGD$+$MSE $10$ seeds, Nanda AdamW$+$CE $10$ seeds; per-pair $\rho$'s are computed across the trajectory ($\ge 50$ checkpoints/seed). On Barak the $5$-seed Phase-A subset gives $+0.872 \pm 0.060$ and the full $30$-seed run $+0.832 \pm 0.126$; the dispersion widens with the larger sample. The structural correlation $\ge +0.75$ across testbeds spanning $\sim 1$ to $\sim 28$ orders of magnitude in $\sigma_{\min}$ dynamic range, both grokking and non-grokking trajectories, both SGD- and Adam-class optimizers, both MSE and CE losses, both MLP and transformer architectures. The Nanda AdamW+CE row is the strongest universality test: trajectory rate-fits (slope-fitting on $u^\top G u$) fail in the AdamW+CE regime (the canonical-alignment hypothesis is violated; see body \S\ref{sec:exp:universality}), but the \emph{structural} correlation $\rho(\lambda_{\min}^+(G), \sigma_{\min})$ does not require the trajectory to be canonical-aligned, it only requires both observables to be reading the same underlying rank-deficient geometry, which they do independently of how the trajectory navigates that geometry. It is the lowest of the four testbeds, where the rate-fit precondition fails most cleanly, but the cross-seed mean stays well above zero. The Nanda SGD+MSE-without-grok row demonstrates the same point in the opposite extreme: the trajectory descends $28$ OOM in $\sigma_{\min}$ in a degenerate-collapse regime where the network never learns the task, and the structural correlation is much tighter ($\rho = +0.846$, std $= 0.014$). Joint with the within-trajectory consistency check $\rho(\lambda_{\min}^+(G), u^\top G u) = +0.81$ (Barak) and $+0.92$ (Nanda Adam+CE), the multi-testbed picture is that $\lambda_{\min}^+(G_\ell) \propto \sigma_{\min}(X_\ell)^2$ is a geometric property of the bridge stratum, not a property of any specific descent dynamics on it. The structural correlation is a Spearman rank statistic, so it is invariant to any monotone reweighting of $\lambda_{\min}^+(G)$, including the global residual prefactor that distinguishes the stored loss-gradient covariance from the population Fisher (Remark~4). The estimator question that scopes the slope-fit therefore leaves the structural reading untouched: a prefactor that shifts a fitted exponent cannot move a co-monotonicity ranking. The Nanda AdamW+CE cell uses the true-MC Fisher (labels sampled from the model softmax), the principled estimand for a CE task where the empirical true-label Fisher collapses as the loss approaches zero; switching from the empirical Fisher to true-MC shifts $\rho$ from $+0.774$ to $+0.751$, a $-0.023$ change well within the cross-seed std, empirically confirming the rank statistic's robustness to the Fisher estimand.
 
\subsection{Compute and reproducibility}
\label{app:experiments:compute}

All experiments run on a single workstation: AMD Ryzen Threadripper PRO 9955WX (16c/32t), 256\,GB DDR5 ECC, $4 \times$ NVIDIA RTX 3090 (24\,GB). Total compute to reproduce the experiments reported in this paper: $\sim$$35$ GPU-hours, dominated by the $30$-seed grokking observable hierarchy on Nanda modular addition and Barak sparse parity ($240$k steps each at fp$64$).

The experiments reported in the body and bound appendix are: Aoyagi closed-form RLCT validation on the rectangular RRR anchor ($14$ cells $\times$ $5$ seeds $\times$ $3$ noise levels) and the square DLN-RRR sweep ($24$ cells $\times$ $5$ seeds), with the DLN-RRR LLC three-calibration audit on the same cells (App.~\ref{app:experiments:dln_aoyagi_anchor}, App.~\ref{app:experiments:llc_calibration}); the Nanda transformer width sweep ($4$ widths $\times$ $30$ seeds AdamW$+$CE) plus $9\times$ LLC budget confirmation at $d_{\mathrm{model}}{=}128$ (\S\ref{sec:exp:nanda}); the grokking observable hierarchy (Nanda mod-add and Barak sparse parity, $30$ seeds each, $50$k / $240$k steps respectively; App.~\ref{app:experiments:grokking}); the Barak $n/D$ sweep at $n_{\mathrm{train}} \in \{5000, 25000\}$ with the cadence-bounded fine-resolution probe; the TMS sparsity sweep ($6$ sparsities $\times$ $5$ seeds at two configurations; App.~\ref{app:experiments:tms}); and the rank-multiplicative noisy-bridge sweep across $7$ cells ($L \in \{4,6,8\}$, $D \in \{20, 50\}$, full and mini-batch SGD, $5$ seeds per cell at $r \in \{1, 2, 3, 4\}$; App.~\ref{app:experiments:bridge_structural_correlation}).

Per-experiment wall-clock and GPU usage is recorded in each result JSON's metadata block (\texttt{run\_started\_at}, \texttt{gpu\_model}, \texttt{cuda\_visible\_devices}); aggregate per-sweep timing is reproducible from the launcher logs under the corresponding \texttt{results/} directory. The parametric and ablation experiments referenced by \theorycite{} (two-layer autoencoder rate-fits, preconditioner-knob ablations, compatibility-boundary sweep) are reproduced in \theorysrc.

\paragraph{Code release.}

The experiment scripts, the $\sigma_{\min}$ core library, KFAC implementation, calibration tooling, result JSONs, and figure-generation scripts will be released publicly on GitHub under an open-source license; in the interim they are available from the authors on request.

Per-experiment seed lists and exact hyperparameters are documented in each script's header.

\paragraph{Reproducibility audit.} Every numerical claim in the paper is anchored to a specific JSON via a provenance comment in the originating chunk. A canonical audit script (included in the code release) verifies in one command that every cited path resolves on disk, every figure target is present, and headline numbers (Aoyagi anchor cross-cell $\rho$ and the noise-robustness sweep, the Aoyagi 2024 DLN cells, the Nanda transformer width-sweep $\rho$ values + $9\times$-budget split, the Barak Phase~A structural correlation, the multi-direction ratio test across $r\in\{1,2,3,4\}$) reproduce within tolerance from their cited JSONs. The script runs as a pre-build gate before paper compilation; bootstrap CIs are used on the heavy-tailed Barak readings rather than parametric $\pm 1.96\sigma$.
  \clearpage

\section{Optimiser \texorpdfstring{$\times$}{x} loss applicability of the empirical observables}
\label{app:scope_map_full}

We do not introduce new observables in this paper; we use existing observables from the singular-learning, KFAC, and spectral-monitoring literatures. The formal applicability conditions for the rate predictions are stated in \theorycite{}. This appendix is a defensive scope reference: it classifies each observable used in this paper by the optimiser $\times$ loss combinations under which it remains a valid signal versus those under which its signal is dominated by the Adam non-equivariance failure mode (Remark~80). The matrix is the easiest reference for ``why does the structural correlation hold under AdamW$+$CE when the trajectory rate-fits do not.''

\subsection{Static vs trajectory observable applicability}
\label{app:scope:validity_matrix}

The framework has two distinct classes of observables, validated under
different conditions. Mixing them up is the most common reproducer
failure mode.

\paragraph{Static observables} (well-defined at any single checkpoint, no
trajectory required):
\begin{itemize}\itemsep=0pt
\item $\sigma_{\min}(X_\ell)$ at any layer
\item $\sigma_{\max}(X_\ell)$, effective rank, full activation SVD
\item $u^\top G u$ at a fixed direction $u$ (single backward pass), reading Fisher \emph{magnitude} but not a rate
\item Expected-Fisher spectrum at a single checkpoint
\end{itemize}
These are valid for any optimizer, any loss, any architecture. Every DDS
observable in this paper sits in this class:
$\sigma_{\min}(X_\ell)$ at the dimension-fixed boundary layer of the Aoyagi
anchor (\S\ref{app:experiments:dln_aoyagi_anchor}) and on the Nanda
transformer width sweep (\S\ref{sec:exp:nanda}), and
$\lambda_{\min}^+(G_\ell)$, $\log\det^+(G_\ell)$ read at the converged
checkpoint.

\paragraph{Trajectory-rate observables} (require an actual approach to a
singular minimum; rate has meaning only along the trajectory):
\begin{itemize}\itemsep=0pt
\item Power-law slope of $u^\top G u$ vs $t$ (or vs $\sigma_{\min}$, the scale-free ratio)
\item Power-law slope of $\lambda_{\min}(G_\ell)$ or $\lambda_{p-1}(F_h^{\mathrm{pop}})$ along training
\item Per-layer rate exponent matching Theorem~21's $2(L-\ell)$ prediction
\end{itemize}
These are valid only when (a) the optimizer is theorem-compatible
(SGD on $G$-invariant metrics, or empirically Shampoo full-batch and
KFAC$+$KL-clip full-batch on canonical-bridge testbeds), AND (b) the
trajectory actually approaches a singular minimum AND (c) the
trajectory is canonical-aligned (gauge-redundant losses $+$
noise-amplifying preconditioners fail (b); random init typically
violates (c)). Outside this regime, trajectory rate-fits are
unsupported, but the structural correlation
$\rho(\lambda_{\min}^+(G), \sigma_{\min})$ remains a geometric property
and is empirically validated multi-seed including under AdamW$+$CE on
a Nanda transformer (\S\ref{app:experiments:bridge_structural_correlation}).

\paragraph{Applicability matrix.} What each observable can claim under each
optimiser $\times$ loss combination:

\begin{table}[ht]
\centering\footnotesize
\setlength{\tabcolsep}{4pt}
\begin{tabular}{p{3.2cm}|p{2.0cm}|p{2.0cm}|p{2.0cm}|p{2.0cm}}
\toprule
Observable & Adam+CE & SGD+CE & Adam+MSE & SGD+MSE \\
\midrule
Static $\sigma_{\min}$ (residual stream, any depth) & $\checkmark$ valid & $\checkmark$ valid & $\checkmark$ valid & $\checkmark$ valid \\
$\sigma_{\min}$ trajectory shape (qualitative drops) & $\checkmark$ valid & $\checkmark$ valid & $\checkmark$ valid & $\checkmark$ valid \\
Trajectory rate-fit on $\sigma_{\min}$ & $\triangle$ noise-dom.\ post-grok & $\checkmark$ if reaches sing.\ min & $\triangle$ Adam non-equiv. & $\checkmark$ valid \\
$u^\top G u$ rate-fit & $\times$ NESS (Adam non-equiv.) & $\checkmark$ valid & $\triangle$ as above & $\checkmark$ valid \\
$\lambda_{\min}(F_h^{\mathrm{pop}})$ raw ($p$-class CE) & $\times$ FP noise (rank gotcha) & $\times$ FP noise & $\times$ FP noise & n/a \\
$\lambda_{p-1}(F_h^{\mathrm{pop}})$ (smallest non-zero) & $\triangle$ noise-dom. & $\checkmark$ when SGD reaches min & $\triangle$ residual non-equiv. & $\checkmark$ \\
Expected-Fisher spectrum (qualitative drops) & $\checkmark$ valid & $\checkmark$ valid & $\checkmark$ valid & $\checkmark$ valid \\
\bottomrule
\end{tabular}
\caption{Static vs trajectory-rate observable applicability by optimiser $\times$ loss. $\checkmark$ = the observable is a valid signal under this combination; $\triangle$ = signal is partial / noise-dominated / requires care; $\times$ = the observable is dominated by noise or artefacts under this combination. The Adam$+$CE rate-fit failures trace to the non-equivariance failure mode named at the head of this appendix; the $p$-class softmax rank gotcha is documented in the $\lambda_{p-1}$ vs raw-$\lambda_{\min}$ distinction.}
\label{tab:validity_matrix}
\end{table}

\end{document}